\let\originalleft\left
\let\originalright\right
\renewcommand{\left}{\mathopen{}\mathclose\bgroup\originalleft}
\renewcommand{\right}{\aftergroup\egroup\originalright}
\newcommand{\rmd}{\mathrm{d}}
\def\bsa{{\boldsymbol{a}}}
\def\bss{{\boldsymbol{s}}}
\def\bsx{{\boldsymbol{x}}}
\def\bsz{{\boldsymbol{z}}}
\theoremstyle{plain}
\newtheorem{theorem}{Theorem}[section]
\newtheorem{proposition}[theorem]{Proposition}
\newtheorem{lemma}[theorem]{Lemma}
\theoremstyle{definition}
\theoremstyle{remark}
\newtheorem{remark}[theorem]{Remark}
\icmltitlerunning{Offline Reinforcement Learning with Generative Trajectory Policies}
\begin{document}

\twocolumn[
  \icmltitle{Offline Reinforcement Learning with Generative Trajectory Policies}



  \icmlsetsymbol{equal}{*}

  \begin{icmlauthorlist}
    \icmlauthor{Xinsong Feng}{wm}
    \icmlauthor{Leshu Tang}{ucla}
    \icmlauthor{Chenan Wang}{wm}
    \icmlauthor{Haipeng Chen}{wm}
  \end{icmlauthorlist}

  \icmlaffiliation{wm}{School of Computing, Data Sciences and Physics, William \& Mary, Williamsburg, USA}
   \icmlaffiliation{ucla}{Electrical and Computer Engineering Department, UCLA, Los Angeles, USA}

  \icmlcorrespondingauthor{Xinsong Feng}{xfeng06@wm.edu}

  \icmlkeywords{offline RL, generative models}

  \vskip 0.3in
]



\printAffiliationsAndNotice{}  

\begin{abstract}

Generative models have emerged as a powerful class of policies for offline reinforcement learning (RL) due to their ability to capture complex, multi-modal behaviors. 
However, existing methods face a stark trade-off: slow, iterative models like diffusion policies are computationally expensive, while fast, single-step models like consistency policies often suffer from degraded performance. 
In this paper, we demonstrate that it is possible to bridge this gap.
The key to moving beyond the limitations of individual methods, we argue, lies in a unifying perspective that views modern generative models—including diffusion, flow matching, and consistency models—as specific instances of learning a continuous-time generative trajectory governed by an Ordinary Differential Equation (ODE).
This principled foundation provides a clearer design space for generative policies in RL and allows us to propose \textit{Generative Trajectory Policies} (GTPs), a new and more general policy paradigm that learns the entire solution map of the underlying ODE.
To make this paradigm practical for offline RL, we further introduce two key theoretically principled adaptations. 
Empirical results demonstrate that GTP achieves state-of-the-art performance on D4RL benchmarks -- it significantly outperforms prior generative policies, achieving perfect scores on several notoriously hard AntMaze tasks.

\end{abstract}

\section{Introduction}\label{sec:intro}

In offline reinforcement learning (RL), an agent needs to learn a policy from a pre-collected dataset without any further interaction with the environment.
This setting creates a fundamental challenge: the agent is asked to generalize from limited, often narrow, experience to an unpredictable world.
At the heart of this challenge lies the need for policy expressiveness - the capacity to capture rich, often multi-modal patterns of behavior present in real-world datasets.
Traditional offline RL methods rely on simple function approximators and are prone to distribution shift, as the learned policy may choose actions not present in the dataset, leading to inaccurate value estimates \citep{fujimoto2019off,wu2019behavior,kumar2020conservative}.
This has sparked growing interest in generative models - from generative adversarial networks (GANs), variational autoencoders (VAEs), to Energy-Based Models (EBMs) - as powerful tools to model the full complexity and diversity of RL policies \citep{ho2016generative, ha2018world, ho2020denoising, brahmanage2023flowpg, messaoud2024s}.

Most recently, diffusion-based policies have emerged as a powerful paradigm due to their exceptional ability to represent complex, multi-modal distributions \citep{wang2022diffusion, janner2022planning, pearce2023imitating}.
However, their expressive power comes at a steep price: the slow, iterative sampling process required for generation imposes a significant computational burden, hindering their practical utility. 
To resolve this, subsequent work has employed consistency-based models to accelerate inference, often enabling one or two-step generation \citep{ding2024consistency}. 
While remarkably fast, this simplification frequently leads to degraded policy quality, with performance saturating quickly.

This demonstrates a fundamental trade-off between expressiveness and efficiency for generative policies. 
The research question in this work is: \textit{Is it possible to design a policy class that can achieve both policy expressiveness and computational efficiency?}

A key insight of our work is that the path to resolving this trade-off lies in a general principle that unifies a family of powerful modern generative models. 
We observe that a spectrum of recent advancements, including diffusion models \citep{song2019generative,song2021score}, Consistency Models \citep{song2023consistency}, Consistency Trajectory Models (CTMs) \citep{kim2024consistency}, and various forms of Flow Matching \citep{frans2025one, geng2025mean}, can all be understood through the lens of a continuous-time generative trajectory governed by an Ordinary Differential Equation (ODE). 
This unified perspective provides the theoretical foundation for our work, enabling us to conceptualize a policy itself as a full trajectory and thereby design a new class of expressive and efficient policies.

Building on this foundation, we introduce \textit{Generative Trajectory Policies} (GTPs), a new policy paradigm that learns the entire solution map of the underlying ODE. 
By learning the full trajectory, GTPs are not confined to either slow, high-fidelity sampling or fast, low-fidelity shortcuts. 
Instead, they enable flexible, multi-step, deterministic generation that can achieve high performance even with a few sampling steps.

Our key contributions include: 
i) We propose GTP, a new and highly expressive policy paradigm for offline RL, derived from a unifying framework that connects a family of modern generative models to continuous-time ODE trajectories.
ii) We make a practical implementation of the GTP paradigm by developing two key \textit{theoretically-grounded adaptations} that address computational cost, training instability, and misaligned objectives, including a score approximation and a variational framework for value-driven policy improvement.
iii) We empirically validate GTP on the D4RL benchmarks, where it achieves state-of-the-art performance, outperforming prior generative and offline RL methods. Notably, our approach achieves perfect scores on several notoriously challenging AntMaze tasks, demonstrating its ability to strike a more favorable balance between expressiveness and efficiency. 
Our code is publicly available at \url{https://github.com/wmd3i/gtp}.

\section{Related Work}
\vspace{-0.5em}

We briefly introduce the related work. 
A more detailed discussion is provided in Appendix~\ref{appendix:related-work}.

\textbf{Expressive Policies in Offline RL.}
Offline RL depends on policies that are expressive enough to capture the diverse, often multi-modal behaviors present in datasets. 
Conventional choices like Gaussian policies are easy to train but struggle to represent such complexity.
Much of the literature has instead advanced from the critic side, regularizing value functions to guard against overestimation \citep{fujimoto2019off, wu2019behavior, kumar2020conservative, kostrikov2022offline}.
While effective, these methods leave the policy class itself underpowered, motivating a complementary line of research: actor-centric approaches that adopt generative models. 
Early explorations with GANs/VAEs \citep{ho2016generative} and energy-based policies \citep{messaoud2024s} showed promise, but were often hampered by training instabilities and did not achieve the sample quality of modern generative paradigms, leaving the need for a truly robust and expressive policy class as a key open problem.


\vspace{-0.25em}

\textbf{Continuous-Time Generative Models.}
A new generation of powerful tools for this task has emerged from the generative modeling community.
A spectrum of recent advancements can be understood through the unifying lens of learning a continuous-time trajectory governed by an Ordinary Differential Equation (ODE).
This includes score-based diffusion models \citep{song2021score}, Flow Matching (FM) \citep{lipman2023flow,frans2025one}, Consistency Models (CMs) \citep{song2023consistency} and Consistency Trajectory Models (CTMs) \citep{kim2024consistency}. 
While this evolution has produced a powerful toolbox of trajectory-based generative models, their potential has not yet been fully realized in the RL domain. 
The challenge of adapting these powerful but complex models to the specific constraints and objectives of offline RL remains a significant barrier.


\vspace{-0.25em}

\textbf{The Trade-off in Generative Policies.}
Researchers have begun to apply these powerful generative tools as policies in offline RL. 
Early work with diffusion-based policies demonstrated their immense potential to model complex action distributions \citep{wang2022diffusion, janner2022planning, pearce2023imitating}, but at the cost of slow, iterative inference. 
In response, consistency-based policies were introduced to accelerate sampling, often to one or two steps \citep{ding2024consistency}, but this frequently resulted in degraded policy performance. 
This work has established a new and critical trade-off between expressiveness and efficiency. 
How to properly adapt the underlying principles of these powerful generative models to create a policy class that is both high-performing and efficient in the demanding offline RL setting remains a key open problem that our work aims to address.

\section{A Unified ODE Framework for Generative Models} \label{sec:unified-ode}

A cornerstone of many modern generative models is the idea of reversing a process that gradually perturbs data into noise.
While prior work typically treats diffusion models, consistency models, and flow matching as separate families, we propose a single unified ODE framework that reveals all these models as instances of the same underlying formulation. 
The reverse process can be described by a general ODE:
\begin{equation} \label{eq:general_ode}
    \frac{\rmd \bsx_t}{\rmd t} = f(\bsx_t, t),
\end{equation}
where the vector field $f(\bsx,t)$ defines a deterministic trajectory from a point $\bsx_T$ sampled from a simple prior distribution to a data sample $\bsx_0$, and $t\in[0, T]$.

Innovation within this framework has advanced along two complementary axes:
(1) defining the vector field $f$, as in diffusion-based approaches \citep{song2021score} and flow matching \citep{lipman2023flow}; and
(2) solving the ODE efficiently, which is a central challenge since standard numerical solvers require hundreds of discretization steps, leading to slow inference and accumulated errors \citep{song2023consistency, kim2024consistency}.
While the first axis has largely matured, the second remains a bottleneck and has inspired a new class of methods that directly learn the ODE’s solution map.

\subsection{Defining the Vector Field}
The choice of the vector field $f(\bsx_t, t)$ is crucial, as it determines the exact generative path from noise to data. 
Prominent methods for defining these dynamics include:

\textbf{Diffusion Models.} Originating from diffusion-based modeling, this class of methods defines the vector field indirectly. 
The dynamics are determined by the score function, $\nabla_{\bsx_t} \log p_t(\bsx_t)$, which is the gradient of the log-density of the noisy data distribution. 
In the corresponding Probability Flow (PF) ODE \citep{song2021score}, a neural network is trained to approximate this score (or an equivalent denoising function), thereby implicitly specifying the vector field that governs the generative process.

\textbf{Flow Matching.}
In contrast, Flow Matching (FM) \citep{lipman2023flow} provides a more direct and general framework for learning the vector field. 
This method involves training a neural network $f_{\boldsymbol{\theta}}(\bsx_t, t)$ by directly regressing it against a known target vector field that connects the data and prior distributions. 
This direct regression offers a stable and often more efficient training objective.

\subsection{Efficiently Solving the ODE by Learning the Solution Map}

\begin{figure*}[tb]
  \centering
  \includegraphics[width=0.85\linewidth]{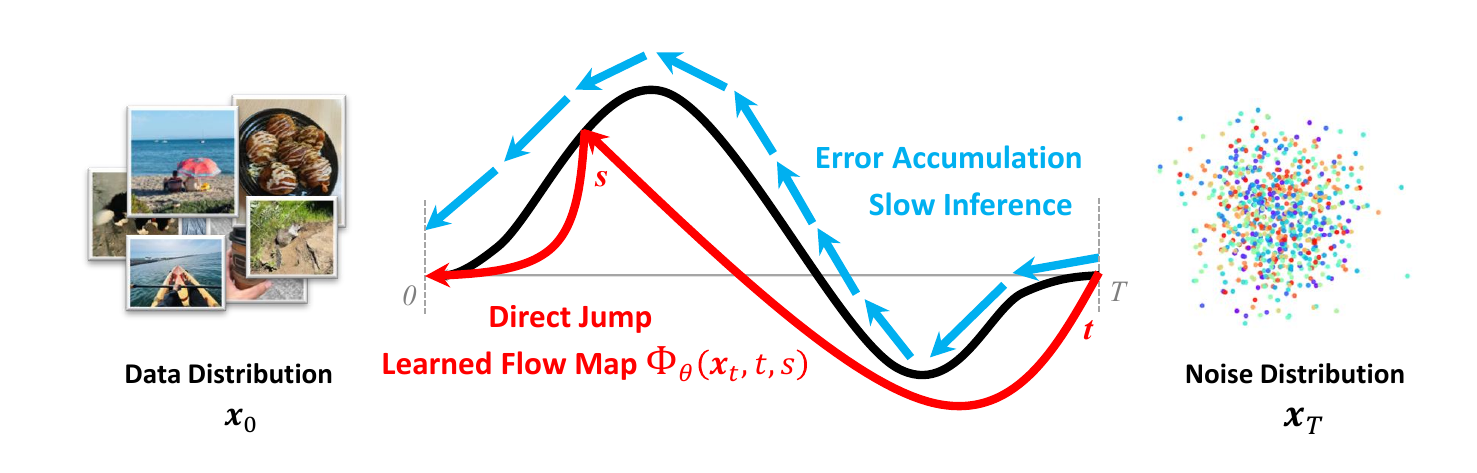}
  \caption{Illustration of the unified solution map $\Phi(\bsx_t, t, s)$. 
  Iterative solvers (blue) suffer from slow inference and error accumulation, 
  whereas the learned flow map (red) enables direct jumps from the noise distribution $x_T$ to the data distribution $x_0$, 
  providing an intuitive view of our unified framework.}
  \label{fig:unified_flow_map}
\end{figure*}

Instead of relying on numerical integration, a powerful alternative is to model the ODE's solution map directly.
We highlight that the true ODE flow map, $\Phi(\bsx_t, t, s)$, which maps a state at time $t$ to its corresponding state at time $s$, naturally provides a \textbf{\textit{unifying representation}} for a wide family of generative models:
\begin{equation} \label{eq:unified_flow_map_intro}
\bsx_s = \Phi(\bsx_t, t, s) = \bsx_t + \int_t^s f(\bsx_\tau, \tau) \rmd \tau.
\end{equation}
Figure~\ref{fig:unified_flow_map} illustrates this unified viewpoint.
Under this formulation, classic approaches such as Consistency Models \citep{song2023consistency}, Consistency Trajectory Models \citep{kim2024consistency}, Shortcut Models \citep{frans2025one}, and Mean Flows \citep{geng2025mean} can all be interpreted as approximating specific aspects or limits of the same flow map $\Phi$.
For instance, diffusion denoisers estimate its infinitesimal form, whereas consistency models enforce its compositional structure.


\subsection{Learning the Flow Map: A General Parameterization} \label{sec:unifed framework}

Guided by this unified perspective, we introduce a general parameterization for learning the ODE flow map.  
Although $\Phi$ provides the ideal target, its integral form is not directly suitable for training.  
We therefore adopt a surrogate function $\phi(\bsx_t,t,s)$ inspired by \citep{kim2024consistency}:
\begin{equation}
\label{eq:phi}
\phi(\bsx_t, t, s) = \bsx_t + \frac{t}{t-s}\int_t^s f(\bsx_\tau, \tau) \rmd \tau.
\end{equation}
The exact flow map can be recovered via linear interpolation:
\begin{equation}
\Phi(\bsx_t, t, s) = \left(1 - \frac{s}{t}\right) \phi(\bsx_t, t, s) + \frac{s}{t}\bsx_t.
\end{equation}

This parameterization has a natural interpretation:  
$\phi(\bsx_t,t,s)$ serves as an estimate of the endpoint $\bsx_0$, extrapolated from $\bsx_t$ using the \emph{average velocity} over~$[s,t]$.  
Importantly, it allows us to define two complementary training objectives that together form the core of our unified ODE framework.

\textbf{1. The Instantaneous Flow Loss (Local Anchor).}
This objective ensures the learned map is correct for \textit{infinitesimal steps} by enforcing a boundary condition at the limit $s \to t$: 
\begin{equation}
\label{eq:if_loss}
\lim_{s \to t} \phi(\bsx_t, t, s) = \bsx_t - t f(\bsx_t, t).
\end{equation} 
For convenience, we denote $\phi^{\text{inst}}(\bsx_t, t) := \phi(\bsx_t, t, t)$ and refer to it as the \emph{Inst Map}.  
This condition provides a powerful connection to prominent generative modeling paradigms: the right-hand side recovers the denoiser $D(\bsx_t, t)$ (i.e., $\mathbb{E}[\bsx_0 \mid \bsx_t]$) in diffusion models and the velocity field target in flow matching, $f(\bsx_t, t) = (\bsx_t - \phi^{\text{inst}}(\bsx_t, t))/t$.  
In practice, $\phi^{\text{inst}}_{\boldsymbol\theta}(\bsx_t, t)$ is the model prediction, trained with task-specific targets: 
for diffusion, the target is the clean sample $\bsx_0$; 
for flow matching, the target becomes $\bsx_t - t (\bsx_1 - \bsx_0)$.  
In this sense, the Inst Map acts as a \textit{local anchor}, unifying diffusion-style denoising and flow-matching velocity estimation under a single principle.

\textbf{2. The Trajectory Consistency Loss (Global Regulator).}
This objective enforces correctness across long, \textit{multi-step jumps} by requiring self-consistency:  
\begin{equation}\label{eq:tc_loss}
\Phi(\bsx_t, t, s) \approx \Phi(\Phi(\bsx_t, t, u), u, s), \quad \text{for } t > u > s.
\end{equation}
where $u$ denotes an intermediate time between $t$ and $s$.
Here, the displacement over $[t,s]$ must equal the sum of displacements over $[t,u]$ and $[u,s]$.  
In practice, the right-hand side is treated as the target: $\Phi(\bsx_t, t, u)$ is obtained using an ODE solver (or its learned approximation), and then composed forward to $s$.  
The loss is then defined by the discrepancy between the left- and right-hand sides of Eq.~\eqref{eq:tc_loss}.  
This serves as a \textit{global regulator}, enforcing coherence of long trajectories with the additive structure of ODEs.

Taken together, the two objectives are complementary: the instantaneous loss enforces fidelity in local dynamics, while the trajectory consistency loss guarantees global coherence across time.  
We next show how several prominent generative models emerge as a concrete instances of this unified ODE framework.

\section{Generative Trajectory Policies for Offline RL}
\label{sec:method}

In the previous section, we established a unified ODE trajectory framework that offers an elegant lens for understanding a family of modern generative models. 
This lays a theoretical foundation for designing expressive generative trajectory policies. 
We define a Generative Trajectory Policy (GTP) as a policy class that generates actions by learning the solution map of a continuous-time generative ODE. 
However, translating these insights into a functional offline RL algorithm is hindered by three practical challenges:

\textbf{Prohibitive Computational Burden.} 
Learning an ODE trajectory requires on-trajectory supervision. 
As discussed in Section~\ref{sec:unifed framework}, this is obtained by numerically solving the ODE backward from $t$ to an intermediate point $u$ using multiple discrete steps (e.g., Euler, Heun), an operation we denote as $\mathrm{Solver}(\bsx_t, t, u)$. 
When scaled to offline RL, where millions of updates are needed, repeatedly performing this inner-loop solving for every sample makes the overall computation quickly intractable.

\textbf{Inherent Training Instability.} 
Unlike distillation methods, our framework must learn the entire ODE trajectory \textit{from scratch}. 
Central to this process is the Inst Map $\phi^{\text{inst}}(\bsx_t, t)$, which specifies the ODE’s right-hand side through $f(\bsx_t, t) = (\bsx_t - \phi^{\text{inst}}(\bsx_t, t))/t$. 
Early in training, the Inst Map is highly inaccurate; yet its outputs are immediately fed back into the solver to generate supervision. 
This bootstrapping quickly forms a vicious cycle that resembles TD learning \citep{sutton1988learning}—bad targets yield bad updates—that destabilizes the Actor–Critic loop and often hinders convergence.

\textbf{Misaligned Generative Objective.} 
The default objective of generative models is to match the data distribution, which in offline RL reduces to behavior cloning (BC). 
While BC is a reasonable baseline, it cannot achieve policy improvement—the central goal of offline RL. 
Thus, a key challenge is to design a value-aware objective that leverages the generative process not only to imitate observed actions but also to emphasize those leading to higher returns.

To address these challenges, we introduce two key techniques tailored to the practical implementation of GTP, as illustrated in Figure \ref{fig:tricks}. 
The following subsections detail these techniques and show how they jointly enable stable, efficient, and value-driven training.

\begin{figure*}[t]
  \centering
  \subfloat[Stable Score Approximation]{
  \includegraphics[width=0.45\linewidth]{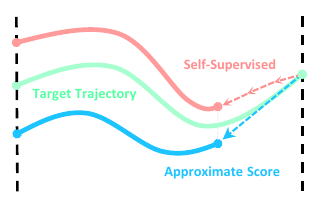}
  }
  \hfill
  \subfloat[Value-Driven Guidance]{
  \includegraphics[width=0.45\linewidth]{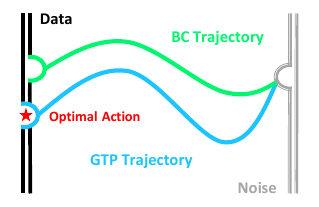}
  }
  \caption{The two core techniques of the GTP implementation:
  (a) Stable Score Approximation: the target trajectory (green) is contrasted with a reference (red) computed by a multi-step ODE solver (red dashed arrow). The blue dashed arrow denotes a single-step update obtained from our approximate score, which yields the blue trajectory without multi-step integration. 
(b) Value-Driven Guidance: the BC trajectory (green) is shifted toward high-value regions so that the learned GTP trajectory (blue) approaches the optimal action while remaining aligned with the data.}
  \label{fig:tricks}
\end{figure*}

\subsection{Efficient and Stable Training via Score Approximation} 
\label{sec: score_approx}


A central difficulty in our framework is the reliance on self-referential supervision: the model must repeatedly supply $\phi^{\text{inst}}(\bsx_t,t)$ (score estimates) at each solver time point\footnote{Throughout the paper we use the term \emph{score} for consistency with prior literature, although in our framework it is formally the Inst Map $\phi^{\text{inst}}$.}, which the ODE solver integrates over many iterations.
This approach is not only computationally demanding, but also fragile—early-stage errors in the learned vector field immediately corrupt the supervision signals. 
To address it, we replace $\phi^{\text{inst}}(\bsx_t,t)$ with a closed-form surrogate anchored to the offline sample,
$\tilde f(\bsx_t,t) = (\bsx_t-\bsx)/t$.
The theorem below shows that this yields a training loss asymptotically equivalent to the ideal one.

\begin{theorem}
\label{thm:closed_form_dynamics}
Fix a time horizon $T > 0$, let $\bsx \sim p_{\mathrm{data}}$, $\bsz \sim \mathcal N(0,I)$, and define $\bsx_t=\bsx + t \bsz$. 
Define the vector fields $f^\star,\tilde f:\mathbb{R}^d\times(0,T] \to \mathbb{R}^d$ by $f^\star (\bsx_t, t):=\frac{\bsx_t-\mathbb{E}[\bsx\mid\bsx_t]}{t}$ and $\tilde f(\bsx_t,t):=\frac{\bsx_t-\bsx}{t}$.
Assume $f^\star(\cdot,t)$ is Lipschitz in $x$. 
Let $t=\tau_0>\tau_1>\cdots>\tau_K=u$ be a sequence of time points with step sizes 
$\Delta_k=\tau_{k+1}-\tau_k$ and maximal step $h=\max_k|\Delta_k|$. 
For a $p$-th order, zero-stable one-step solver $S_{\Delta_k}[f]:\mathbb{R}^d\to\mathbb{R}^d$, 
define the multi-step propagation from $t$ to $u$ as
\begin{equation}
\Psi^{\mathrm{sol}}_{t\to u}[f] := S_{\Delta_{K-1}}[f]\circ \cdots \circ S_{\Delta_0}[f].
\end{equation}
Assume further that for each $u>s$, $\Phi_{\boldsymbol{\theta}}(\cdot,u,s):\mathbb{R}^d\to\mathbb{R}^d$ is Lipschitz in $x$, and that solver states admit bounded second moments independent of $h$. 
Define the \textbf{ideal} and \textbf{practical} training objectives
\begin{equation}
\mathcal L_{\mathrm{ideal}}({\boldsymbol{\theta}}) :=
\mathbb{E}\!\left[\big\| \Phi_{\boldsymbol{\theta}}(\bsx_t,t,s)-
\Phi_{{\boldsymbol{\theta}}^-}\!\big(\Psi^{\mathrm{sol}}_{t\to u}[f^\star](\bsx_t),u,s\big)\big\|^2\right],
\end{equation}
\begin{equation}
\mathcal L_{\mathrm{prac}}({\boldsymbol{\theta}}) :=
\mathbb{E}\!\left[\big\| \Phi_{\boldsymbol{\theta}}(\bsx_t,t,s)-
\Phi_{{\boldsymbol{\theta}}^-}\!\big(\Psi^{\mathrm{sol}}_{t\to u}[\tilde f](\bsx_t),u,s\big)\big\|^2\right],
\end{equation}
where $\Phi_{\boldsymbol{\theta}^-}$ denotes the exponentially moving averaged model.
Then
\begin{equation}
\big|\,\mathcal L_{\mathrm{prac}}({\boldsymbol{\theta}})-\mathcal L_{\mathrm{ideal}}({\boldsymbol{\theta}})\,\big|
= O(h^{p}).
\end{equation}
In particular, as $h\to 0$, the two objectives coincide in expectation.
\end{theorem}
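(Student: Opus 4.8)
The plan is to bound the difference between the two losses by bounding, in mean square, the difference between the two solver outputs $\Psi^{\mathrm{sol}}_{t\to u}[f^\star](\bsx_t)$ and $\Psi^{\mathrm{sol}}_{t\to u}[\tilde f](\bsx_t)$, and then to propagate this bound through the Lipschitz map $\Phi_{\boldsymbol\theta^-}(\cdot,u,s)$ and through the quadratic loss. The first observation is the key algebraic identity: $\tilde f$ is an \emph{unbiased} estimator of $f^\star$ in the sense that $\mathbb{E}[\tilde f(\bsx_t,t)\mid \bsx_t] = (\bsx_t - \mathbb{E}[\bsx\mid\bsx_t])/t = f^\star(\bsx_t,t)$, since $\bsx_t = \bsx + t\bsz$ is a fixed (measurable) function of the pair and conditioning on $\bsx_t$ leaves $\mathbb{E}[\bsx\mid\bsx_t]$. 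So $\tilde f = f^\star + \bse$ where $\bse(\bsx,\bsz,t)$ is a zero-conditional-mean noise term with second moment bounded uniformly in $t$ on $[\,\underline t, T]$ under the bounded-second-moment assumptions (one must be a little careful near $t=0$, but the solver runs from $t$ down to $u>0$ so all times stay bounded away from zero; alternatively absorb $1/t$ factors into the constant).

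Second, I would set up a discrete Grönwall / one-step-error recursion for the perturbed solver. Write $\bsy_k := \Psi^{\mathrm{sol}}_{\tau_0\to\tau_k}[f^\star](\bsx_t)$ and $\tilde\bsy_k := \Psi^{\mathrm{sol}}_{\tau_0\to\tau_k}[\tilde f](\bsx_t)$, and let $\bse_k := \tilde\bsy_k - \bsy_k$. A standard one-step-method stability estimate gives $\bse_{k+1} = \bse_k + \big(S_{\Delta_k}[\tilde f](\tilde\bsy_k) - S_{\Delta_k}[f^\star](\bsy_k)\big)$; splitting this into (i) the difference due to changing the argument $\tilde\bsy_k$ vs.\ $\bsy_k$ with the \emph{same} vector field, which is $O(|\Delta_k|)\cdot\|\bse_k\|$ times a Lipschitz constant (zero-stability of the solver), and (ii) the difference due to changing the vector field at the \emph{same} point, which is $O(|\Delta_k|)\cdot\|f^\star - \tilde f\|$. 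Here the crucial point is that the second term, being $\Delta_k$ times a mean-zero increment, contributes a \emph{martingale-like} sum: when we square and take expectations, the cross terms between increments at different steps either vanish or are controlled, so $\mathbb{E}\|\bse_K\|^2$ accumulates like $\big(\sum_k |\Delta_k|^2\big)\cdot C \le (T h)\cdot C \cdot$ (const), i.e.\ $\mathbb{E}\|\bse_K\|^2 = O(h)$ — wait, this only gives $O(h)$, not $O(h^p)$. To get $O(h^p)$ one needs to also use that $\tilde f$ and $f^\star$ induce the \emph{same exact flow in expectation}, i.e.\ compare each solver to the \emph{true ODE flow} $\Phi(\bsx_t,t,u)$ driven by $f^\star$: by the $p$-th order accuracy of the solver, $\mathbb{E}\|\bsy_K - \Phi(\bsx_t,t,u)\|^2 = O(h^{2p})$, and separately $\mathbb{E}\|\tilde\bsy_K - \Phi(\bsx_t,t,u)\|^2 = O(h^{2p})$ \emph{in expectation} because the conditional bias of $\tilde f$ is zero so the accumulated local errors of the $\tilde f$-solver have mean matching the true flow and the variance is controlled by the step sizes — then the triangle inequality gives $\mathbb{E}\|\bse_K\|^2 = O(h^{2p})$. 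I expect \textbf{this is the main obstacle}: showing that using the noisy-but-unbiased $\tilde f$ in a $p$-th order solver still yields a $p$-th order accurate scheme in the mean-square sense (this is essentially the statement that a one-step method retains its order when the vector field is replaced by an unbiased estimator with bounded variance, a Milstein/stochastic-consistency-type argument). I would handle it by a careful telescoping of local truncation errors, using zero-stability to convert local errors to global ones and using the conditional-mean-zero property to kill the leading-order contribution of the estimator noise.

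Third, having established $\mathbb{E}\big\|\Psi^{\mathrm{sol}}_{t\to u}[\tilde f](\bsx_t) - \Psi^{\mathrm{sol}}_{t\to u}[f^\star](\bsx_t)\big\|^2 = O(h^{2p})$, I would push this through the loss. Write $a_{\boldsymbol\theta} := \Phi_{\boldsymbol\theta}(\bsx_t,t,s)$, $b^\star := \Phi_{\boldsymbol\theta^-}(\Psi^{\mathrm{sol}}_{t\to u}[f^\star](\bsx_t),u,s)$, $\tilde b := \Phi_{\boldsymbol\theta^-}(\Psi^{\mathrm{sol}}_{t\to u}[\tilde f](\bsx_t),u,s)$. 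By the assumed Lipschitz continuity of $\Phi_{\boldsymbol\theta^-}(\cdot,u,s)$ with constant $L$, we get $\mathbb{E}\|\tilde b - b^\star\|^2 \le L^2\,\mathbb{E}\|\Psi^{\mathrm{sol}}[\tilde f]-\Psi^{\mathrm{sol}}[f^\star]\|^2 = O(h^{2p})$. Then use the elementary identity $\|a-\tilde b\|^2 - \|a - b^\star\|^2 = \|b^\star\|^2 - \|\tilde b\|^2 + 2\langle a,\ \tilde b - b^\star\rangle = \langle \tilde b - b^\star,\ 2a - b^\star - \tilde b\rangle$, take expectations, and apply Cauchy–Schwarz:
\begin{equation}
\big|\mathcal L_{\mathrm{prac}} - \mathcal L_{\mathrm{ideal}}\big| \le \big(\mathbb{E}\|\tilde b - b^\star\|^2\big)^{1/2}\cdot\big(\mathbb{E}\|2a - b^\star - \tilde b\|^2\big)^{1/2}.
\end{equation}
The first factor is $O(h^p)$ by the above; the second factor is $O(1)$ because $a_{\boldsymbol\theta}$, $b^\star$, $\tilde b$ all have bounded second moments (this is exactly the "bounded second moments independent of $h$" hypothesis together with the Lipschitz-in-$x$ assumptions on $\Phi_{\boldsymbol\theta}$ and $\Phi_{\boldsymbol\theta^-}$, which make them square-integrable given that the solver inputs are). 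Multiplying gives $\big|\mathcal L_{\mathrm{prac}} - \mathcal L_{\mathrm{ideal}}\big| = O(h^p)$, and letting $h\to 0$ yields coincidence in expectation. The remaining bookkeeping — verifying square-integrability of each term, tracking how the Lipschitz constants of $f^\star$, $\Phi_{\boldsymbol\theta}$, $\Phi_{\boldsymbol\theta^-}$ and the horizon $T$ enter the implied constant — is routine and I would relegate it to the appendix.
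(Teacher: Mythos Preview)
Your plan matches the paper's proof in structure: conditional unbiasedness of $\tilde f$ (the paper's Lemma~1), a discrete Gr\"onwall recursion for the solver-state error (Proposition~1), Lipschitz propagation through $\Phi_{\boldsymbol\theta^-}$ (Proposition~2), and then the identity $\|a-\tilde b\|^2-\|a-b^\star\|^2=\langle \tilde b-b^\star,\,2a-b^\star-\tilde b\rangle$ with Cauchy--Schwarz to bound the loss gap. Your final paragraph is essentially the paper's Proposition~3 verbatim, including the appeal to the bounded-second-moments hypothesis for the $O(1)$ factor.

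The one substantive difference is how the Gr\"onwall step is executed. The paper does \emph{not} route through the exact ODE flow. It argues directly that the one-step solver perturbation satisfies $S_{\Delta_k}[\tilde f](x)-S_{\Delta_k}[f^\star](x)=\Delta_k(\tilde f-f^\star)+O(|\Delta_k|^{p+1})$, observes that the leading $\Delta_k$-term has zero conditional mean given $\bsx_{\tau_k}$, and from this asserts that the local perturbation contributes $O(|\Delta_k|^{p+1})$ to the recursion in $L^1$; Gr\"onwall then yields $\mathbb{E}\|X_K^\star-\widetilde X_K\|=O(h^p)$, and the $L^2$ bound needed for Cauchy--Schwarz is obtained afterward via the bounded-moments hypothesis. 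Your proposed detour---comparing each solver separately to the exact flow and invoking a stochastic-consistency argument for the $\tilde f$-solver---is more elaborate than what the paper actually writes, and your martingale worry (that variance of the mean-zero increments might only sum to $O(h)$) is not something the paper engages with: it simply lets the zero conditional mean kill the leading term at the local level and moves on. So you have the same skeleton, but you are over-engineering the middle step relative to the reference; if you want to mirror the paper, drop the true-flow detour and run the recursion directly in $L^1$ with local error $O(|\Delta_k|^{p+1})$.
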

\begin{proof}[Proof Sketch]
The only difference between the two objectives is that the solver uses the surrogate
$\tilde f$ instead of the true field $f^\star$. Since both are Lipschitz and the solver is
$p$-th order and zero-stable, the propagated states differ by $O(h^p)$ in mean square.
By Lipschitz continuity of $\Phi_{\boldsymbol{\theta}}$, this discrepancy transfers directly
to the objectives, giving the stated bound. Details are deferred to
Appendix~\ref{appendix:proof_closed_form}.
\end{proof}

Theorem~\ref{thm:closed_form_dynamics} shows that using the closed-form surrogate $\tilde f$ changes the objective only by $O(h^p)$, providing theoretical support for our formulation.
This replacement makes GTP training both efficient and robust in offline RL, which is further validated empirically by our ablation study (Section~\ref{sec:ablation}).
Further intuition is given in Appendix~\ref{appendix: score_approx}, where we relate this formulation to consistency training and flow matching.

\begin{remark}[Computational Efficiency]
Using the surrogate score removes the need for multi-step ODE integration.  
Intermediate points $\bsx_u$ for the trajectory consistency loss are obtained directly as
\begin{equation}
\bsx_u = \bsx + u \cdot \bsz, \quad \bsz \sim \mathcal{N}(0,I),
\end{equation}
a one-step perturbation instead of a costly numerical solver. 
\end{remark}

\begin{remark}[Training Stability]
Anchoring supervision to offline data avoids the instability of self-generated targets. 
The model no longer relies on imperfect early-stage estimates of its own vector field, 
but instead receives a stable analytical signal tied directly to $\bsx$. 
This breaks the cycle of error propagation and ensures consistent learning 
from the very beginning of training. 
\end{remark}

\subsection{Value-Driven Guidance for Policy Improvement} 
\label{sec:value guidance}

To address the misaligned generative objective and unify generative imitation with value-based policy improvement, we formalize a value-weighted training objective for our GTP in Theorem~\ref{thm:adv_weighted}, with the detailed derivation provided in Appendix~\ref{appendix: derivation_advantage_weight}.

\begin{theorem}[Advantage-Weighted Objective]
\label{thm:adv_weighted}
Consider the KL-regularized policy optimization problem in offline RL.  
Its optimal solution can be written as
\begin{equation}
\pi^*(a|s) \propto \pi_{\mathrm{BC}}(a|s)\exp\!\big(\eta A(s,a)\big),
\end{equation}
where $A(s,a)=Q(s,a)-V(s)$ is the advantage.  
Training a generative policy $\pi_{\boldsymbol{\theta}}$ to match $\pi^*$ 
is therefore equivalent to solving the weighted generative training objective
\begin{equation}
\max_{\boldsymbol{\theta}} \ 
\mathbb{E}_{(s,a)\sim\mathcal{D}}
\left[\exp\!\big(\eta A(s,a)\big)\,\ell_{\text{gen}}(\pi_{\boldsymbol{\theta}};a|s)\right],
\end{equation}
where $\ell_{\text{gen}}$ denotes the standard generative loss 
(e.g., diffusion loss, or flow-matching loss).
\end{theorem}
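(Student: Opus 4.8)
The plan is to prove the theorem in two stages: first derive the closed-form optimizer of the KL-regularized offline RL problem, then show that fitting a generative policy to this optimizer collapses to the stated advantage-weighted objective.

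For the first stage I would write the KL-regularized problem pointwise in $s$ as
\[
\max_{\pi(\cdot\mid s)} \ \mathbb{E}_{a\sim\pi(\cdot\mid s)}\!\big[Q(s,a)\big] - \tfrac{1}{\eta}\,\mathrm{KL}\!\big(\pi(\cdot\mid s)\,\big\|\,\pi_{\mathrm{BC}}(\cdot\mid s)\big),
\]
subject to $\int \pi(a\mid s)\,\mathrm{d}a = 1$ and $\pi(\cdot\mid s)\ge 0$. This is a strictly concave functional of $\pi(\cdot\mid s)$, so its maximizer is unique and characterized by first-order stationarity. Introducing a Lagrange multiplier $\lambda(s)$ for the normalization constraint and using $\tfrac{\delta}{\delta\pi(a\mid s)}\!\int \pi\log(\pi/\pi_{\mathrm{BC}}) = \log\pi(a\mid s)-\log\pi_{\mathrm{BC}}(a\mid s)+1$, the stationarity condition $Q(s,a)-\tfrac{1}{\eta}\big(\log\pi(a\mid s)-\log\pi_{\mathrm{BC}}(a\mid s)+1\big)-\lambda(s)=0$ rearranges to $\pi^*(a\mid s)\propto \pi_{\mathrm{BC}}(a\mid s)\exp(\eta Q(s,a))$. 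Since $V(s)$ depends only on $s$, the factor $\exp(-\eta V(s))$ is absorbed into the state-dependent normalizer, giving the advertised form $\pi^*(a\mid s)=\tfrac{1}{Z(s)}\pi_{\mathrm{BC}}(a\mid s)\exp(\eta A(s,a))$ with $Z(s)=\int \pi_{\mathrm{BC}}(a\mid s)\exp(\eta A(s,a))\,\mathrm{d}a$.

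For the second stage I would fit $\pi_{\boldsymbol{\theta}}$ to $\pi^*$ by minimizing the forward KL $\mathbb{E}_s\,\mathrm{KL}\!\big(\pi^*(\cdot\mid s)\,\big\|\,\pi_{\boldsymbol{\theta}}(\cdot\mid s)\big)$, which up to a $\boldsymbol{\theta}$-independent entropy term equals $-\,\mathbb{E}_s\,\mathbb{E}_{a\sim\pi^*(\cdot\mid s)}[\log\pi_{\boldsymbol{\theta}}(a\mid s)]$. Substituting the closed form of $\pi^*$ and rewriting the inner expectation over $a\sim\pi^*(\cdot\mid s)$ as an importance-weighted expectation over $a\sim\pi_{\mathrm{BC}}(\cdot\mid s)$ — i.e. over the offline dataset $\mathcal{D}$ — turns it into $\tfrac{1}{Z(s)}\,\mathbb{E}_{a\sim\pi_{\mathrm{BC}}(\cdot\mid s)}\!\big[\exp(\eta A(s,a))\log\pi_{\boldsymbol{\theta}}(a\mid s)\big]$. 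Discarding the state-dependent normalizer $Z(s)$ and replacing the exact log-likelihood $\log\pi_{\boldsymbol{\theta}}(a\mid s)$ by the tractable generative surrogate $\ell_{\text{gen}}(\pi_{\boldsymbol{\theta}};a\mid s)$ then yields exactly $\max_{\boldsymbol{\theta}}\ \mathbb{E}_{(s,a)\sim\mathcal{D}}[\exp(\eta A(s,a))\,\ell_{\text{gen}}(\pi_{\boldsymbol{\theta}};a\mid s)]$.

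The main obstacle is not the variational calculus but making the two simplifications in the second stage precise: (i) dropping the partition function $Z(s)$, and (ii) swapping $\log\pi_{\boldsymbol{\theta}}$ for $\ell_{\text{gen}}$. For (i), under boundedness of $A$ (implied by bounded rewards and the discount/horizon structure) $Z(s)$ is bounded above and below uniformly in $s$, so the weighting it induces across states is benign; alternatively one states the equivalence at the level of the per-state objective, where $Z(s)$ is an irrelevant positive constant. For (ii), I would invoke the standard fact that the diffusion/flow-matching loss $\ell_{\text{gen}}$ is a variational lower bound on $\log\pi_{\boldsymbol{\theta}}(a\mid s)$ that becomes tight in the continuous-time limit, so maximizing the weighted surrogate maximizes a (tight) lower bound on the weighted log-likelihood of $\pi^*$ — the sense in which the generative objective is "equivalent" to matching $\pi^*$. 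The full derivation, including the explicit form of $\ell_{\text{gen}}$ under the GTP parameterization, is deferred to Appendix~\ref{appendix: derivation_advantage_weight}.
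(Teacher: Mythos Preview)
Your proposal is correct and follows essentially the same two-stage route as the paper's Appendix~\ref{appendix: derivation_advantage_weight}: derive $\pi^*$ from the KL-regularized problem, then fit $\pi_{\boldsymbol{\theta}}$ to $\pi^*$ via forward KL and importance reweighting onto the dataset, dropping $Z(s)$ and swapping $\log\pi_{\boldsymbol{\theta}}$ for the generative surrogate. If anything you are more careful than the paper---you actually carry out the Lagrangian stationarity argument rather than citing prior work, and you flag that dropping $Z(s)$ is a per-state statement (the paper glosses over the fact that $Z(s)$ varies with $s$ and thus reweights states when removed from the outer expectation).
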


Theorem~\ref{thm:adv_weighted} confirms that exponential advantage weighting is the theoretically correct way to incorporate value guidance into generative training.  
\begin{remark}[Practical Implementation]
For stability, we normalize the advantage weights and truncate negatives: 
\begin{equation}
\label{eq:weight}
w(s,a) = \exp\!\left(
\eta \cdot \frac{\max(0, A(s,a))}{\mathrm{std}(A)+\epsilon}
\right).
\end{equation}
This ensures stable optimization while allowing GTP to preferentially 
imitate high-advantage actions, thereby preserving the robustness of 
standard generative training.
\end{remark}

\subsection{The GTP Optimization Framework}
Having introduced the two key techniques for the practical implementation of our GTP paradigm, we now integrate them into a complete actor-critic algorithm. 
The actor is our Generative Trajectory Policy, $\pi_{\boldsymbol{\theta}}$, represented by the learned solution map $\Phi_{\boldsymbol{\theta}}$. 
The critic is a standard double Q-network, $Q_{\boldsymbol{\varphi}}$, trained to estimate state-action values.

\textbf{Policy Representation and Action Sampling.}
The actor $\Phi_{\boldsymbol{\theta}}(s, a_t, t, \tau)$ learns to map a noisy action $a_t$ at time $t$ to a cleaner action $a_\tau$ at time $\tau \le t$, conditioned on a state $s$. 
At inference time, an action is generated by starting with pure Gaussian noise $a_T \sim \mathcal{N}(0, T^2 \mathbf{I})$ and iteratively applying the learned map over a sequence of timesteps $T = t_0 > t_1 > ... > t_K = 0$:
\begin{equation}
a_{t_{i+1}} = \Phi_{\boldsymbol{\theta}}(s, a_{t_i}, t_i, t_{i+1}), \quad \text{for } i=0, \dots, K-1.
\end{equation}
The final denoised sample $a_0$ is the action executed by the policy, i.e., $\pi_{\boldsymbol{\theta}}(s) := a_0$.

\textbf{Critic Training.}
We use a standard double Q-network parameterized by $\boldsymbol{\phi}$ to mitigate the overestimation bias. 
The critic is trained to minimize the temporal-difference (TD) error using a batch of transitions $(s, a, r, s')$ from the offline dataset:
\begin{equation}\label{eq:critic_loss}
\begin{aligned}
\mathcal{L}_{\text{critic}}
&= \mathbb{E} \Big[
\big(
r + \gamma \cdot \min_{j=1,2}
Q_{\boldsymbol{\varphi}^-_j}
(s', \pi_{\boldsymbol{\theta}'}(s'))
\\
&\qquad\qquad
- Q_{\boldsymbol{\varphi}_j}(s, a)
\big)^2
\Big]
\end{aligned}
\end{equation}
where $\boldsymbol{\varphi}^-$ and $\boldsymbol{\theta}^-$ are the target networks for the critic and actor, respectively, updated via exponential moving average (EMA).

\textbf{Actor Training.}
The GTP actor is trained by combining the two fundamental objectives in Eqs.\eqref{eq:if_loss}-\eqref{eq:tc_loss} introduced in our unified framework. 
These objectives are directly modified to incorporate our key adaptations for offline RL. 
To enable policy improvement, both loss components are weighted by the advantage-based term $w(s, a)$, thereby prioritizing high-value actions. 
Simultaneously, to ensure computational feasibility and training stability, the supervision targets are generated using our efficient score approximation instead of a costly ODE solver.
First, the Trajectory Consistency Loss, $\mathcal{L}_{\text{Consistency}}$, enforces the global self-consistency of the learned flow map $\Phi_{\boldsymbol{\theta}}$:
\begin{equation}
\begin{aligned}
\mathcal{L}_{\text{Consistency}}
&= \mathbb{E}_{(s,a)\sim\mathcal{D}}
\; \mathbb{E}_{t,\tau,u }
\; \mathbb{E}_{\boldsymbol{\epsilon} \sim \mathcal{N}(0,I)}
\Big[
w(s,a)\,
\\
&\qquad
\big\|
\Phi_{\boldsymbol{\theta}}(s, a_t, t, \tau)
- \Phi_{\boldsymbol{\theta}^-}(s, \tilde{a}_u, u, \tau)
\big\|_2^2
\Big]
\end{aligned}
\end{equation}
where $a_t = a + t\cdot z$, and the teacher's intermediate action $\tilde{a}_u = a + u\cdot z$.
Second, the Instantaneous Flow Loss, $\mathcal{L}_{\text{Flow}}$, anchors the model's local dynamics. 
As established in Section~\ref{sec:unifed framework}, this objective enforces that the learned Inst Map behaves as a correct denoiser in the infinitesimal limit. 
We implement it by penalizing the prediction error of $\phi^{\text{inst}}_{\boldsymbol{\theta}}$:
\begin{equation}
\mathcal{L}_{\text{Flow}} 
= \mathbb{E}_{(s,a)\sim\mathcal{D}}
\; \mathbb{E}_{t }
\left[
w(s,a)\,\big\|\,a - \phi^{\text{inst}}_{\boldsymbol{\theta}}(s, a_t, t)\big\|_2^2
\right].
\end{equation}
The total actor loss is then a weighted sum of the two components:  
\begin{equation}
\label{eq:actor_loss}
\mathcal{L}_{\text{actor}} = 
\mathcal{L}_{\text{Consistency}} + 
\lambda_{\text{Flow}} \cdot \mathcal{L}_{\text{Flow}}.
\end{equation}
The full training pipeline is outlined in Algorithm \ref{alg:training}.
\begin{algorithm}[htbp]
\caption{Training Generative Trajectory Policy (GTP)}
\label{alg:training}
\begin{algorithmic}[1]
\STATE Initialize actor $\Phi_{\boldsymbol{\theta}}$, critic $Q_{\boldsymbol{\varphi}}$, target networks $\boldsymbol{\theta}^- \leftarrow \boldsymbol{\theta}$, $\boldsymbol{\varphi}^- \leftarrow \boldsymbol{\varphi}$
\FOR{iteration $i = 1$ to $N_\text{iter}$}
    \STATE Sample batch $(s, a, r, s') \sim \mathcal{D}$
    \STATE Update critic $Q_{\boldsymbol{\varphi}}$ using Eq.~\eqref{eq:critic_loss}
    \STATE Compute advantage weights $w(s, a)$ using the trained critic
    \STATE Sample time pairs $t > u > \tau$, and noise $\boldsymbol{z} \sim \mathcal{N}(0, \mathbf{I})$
    \STATE Generate noisy actions via score approx.: $a_t = a + t \cdot \boldsymbol{z}$, $\tilde{a}_u = a + u \cdot \boldsymbol{z}$
    \STATE Update actor $\Phi_{\boldsymbol{\theta}}$ using the weighted loss in Eq.~\eqref{eq:actor_loss}
    \STATE Update target networks: $\boldsymbol{\theta}^- \leftarrow \tau \boldsymbol{\theta} + (1 - \tau) \boldsymbol{\theta}^-, \quad
    \boldsymbol{\varphi}^- \leftarrow \tau \boldsymbol{\varphi} + (1 - \tau) \boldsymbol{\varphi}^-$
\ENDFOR
\end{algorithmic}
\end{algorithm}

\vspace{-1em}
\section{Experimental Results}\label{sec:results}

\begin{table*}[t]
\centering
\vspace{-0.5em}
\caption{Behavior cloning performances on D4RL. We report the mean and standard deviation of normalized scores over 5 random seeds. Bold indicates the best performance among all methods.}
\label{tab:bc-results}
\resizebox{\textwidth}{!}{%
\begin{tabular}{l||ccccccccc|c}
    \Xhline{1.2pt}
    \textbf{Gym} & BC & AWAC & Diffuser & MoRel & Onestep RL & TD3+BC & DT & \textbf{D-BC} & \textbf{C-BC} & \textbf{GTP-BC (Ours)} \\
	\hline
    halfcheetah-m & 42.6 & 43.5 & 44.2 & 42.1 & 48.4 & 48.3 & 42.6 & 45.4 & 31.0 & \textbf{48.6$\pm$0.3} \\
    hopper-m      & 52.9 & 57.0 & 58.5 & \textbf{95.4} & 59.6 & 59.3 & 67.6 & 65.3 & 71.7 & 83.7$\pm$4.0 \\
    walker2d-m    & 75.3 & 72.4 & 79.7 & 77.8 & 81.8 & \textbf{83.7} & 74.0 & 81.2 & 83.1 & 77.1$\pm$1.7 \\
    halfcheetah-mr & 36.6 & 40.5 & 42.2 & 40.2 & 38.1 & 44.6 & 36.6 & 41.7 & 34.4 & \textbf{46.3$\pm$0.6} \\
    hopper-mr     & 18.1 & 37.2 & 96.8 & 93.6 & 97.5 & 60.9 & 82.7 & 67.3 & 99.7 & \textbf{100.5$\pm$0.3} \\
    walker2d-mr   & 26.0 & 27.0 & 61.2 & 49.8 & 49.5 & 81.8 & 66.6 & 77.5 & 73.3 & \textbf{83.4$\pm$1.8} \\
    halfcheetah-me & 55.2 & 42.8 & 79.8 & 53.3 & \textbf{93.4} & 90.7 & 86.8 & 90.8 & 32.7 & 91.3$\pm$0.5 \\
    hopper-me     & 52.5 & 55.8 & 107.2 & 108.7 & 103.3 & 98.0 & 107.6 & 107.6 & 90.6 & \textbf{109.6$\pm$1.9} \\
    walker2d-me   & 107.5 & 74.5 & 108.4 & 95.6 & \textbf{113.0} & 110.1 & 108.1 & 108.9 & 110.4 & 100.2$\pm$2.1 \\
    \hline
    \textbf{Average} & 51.9 & 50.1 & 75.3 & 72.9 & 76.1 & 75.3 & 74.7 & 76.3 & 69.7 & \textbf{82.3} \\
    \hline\hline
    \textbf{AntMaze} & BC & AWAC & Diffuser & MoRel & Onestep RL & TD3+BC & DT & \textbf{D-BC} & \textbf{C-BC} & \textbf{GTP-BC (Ours)} \\
	\hline
    antmaze-u     & 54.6 & 56.7 & 78.9 & 73.0 & 64.3 & 78.6 & 59.2 & 71.8 & 75.8 & \textbf{84.2$\pm$6.6} \\
    antmaze-ud    & 45.6 & 49.3 & 55.0 & 61.0 & 60.7 & 71.4 & 53.0 & 61.2 & 77.6 & \textbf{79.2$\pm$3.2} \\
    antmaze-mp    & 0.0 & 0.0 & 0.0 & 0.0 & 0.3 & 10.6 & 0.0 & 43.4 & 56.8 & \textbf{74.4$\pm$6.5} \\
    antmaze-md   &0.0 &0.7 &0.0 &8.0 &0.0   &3.0 &0.0  &29.8  &31.6  &  \textbf{85.0$\pm$6.6}   \\
    antmaze-lp  &0.0 &0.0 &6.7 &0.0 &0.0 &0.2 &0.0    &14.6 &10.2  &\textbf{34.4$\pm$5.1}   \\
    antmaze-ld  &0.0 &1.0 &2.2 &0.0 &0.0 &0.0 &0.0 &26.6 &12.8 &\textbf{40.8 $\pm$6.3}  \\
    \hline
    \textbf{Average} & 16.7 & 18.0 & 23.8 & 23.7 & 20.9 & 27.3 & 18.7 & 41.2 & 44.1 & \textbf{66.3} \\
    \Xhline{1.2pt}
\end{tabular}%
}
\vspace{-0.5em}
\end{table*}

In this section, we empirically validate our central claims through experiments.
Our evaluation is designed to answer three core questions:
(i) whether GTP provides a more expressive generative model for imitating complex behaviors than prior approaches; 
(ii) whether our two key techniques (Section~\ref{sec:method}) effectively translate into stable policy improvement that surpasses state-of-the-art offline RL algorithms; 
and (iii) whether GTP resolves the tension between expressiveness and efficiency.

We evaluate our method on a suite of challenging offline reinforcement learning tasks from the D4RL benchmark \citep{fu2020d4rl}, including the Gym and AntMaze domains. 
Following the standard setting of \citet{ding2024consistency}, we evaluate each policy over 10 episodes for Gym tasks and 100 episodes for all other tasks. 
Unless otherwise noted, diffusion policies and our GTP use $K=5$ sampling steps, and consistency policies use $K=2$. Hyperparameters are provided in Appendix~\ref{sec:hyperpara}.

Due to space limitations, we present the main D4RL results below and defer additional evaluations to Appendix~\ref{appendix:additional}. 
These include comparisons with recent generative-policy baselines for offline RL, experiments on OGBench and visual-observation variants, and detailed efficiency and ablation analyses. 
These additional results further show that GTP remains effective beyond standard D4RL benchmarks and competitive against more recent baselines in more complex observation settings.

\subsection{Expressiveness as a Behavior Cloning Policy}
To assess the intrinsic modeling capacity of our policy architecture, we first conduct experiments in a pure behavior cloning (BC) setting.
By setting the value-guidance coefficient $\eta=0$, the objective reduces to a purely generative supervised loss, so the policy is trained only to match the data distribution without policy improvement.

\textbf{Baselines.} 
We compare our method, GTP-BC, against a diverse set of baselines, 
which includes classic behavior cloning (a Gaussian policy), several strong offline RL methods such as AWAC \citep{nair2020awac} and TD3+BC \citep{fujimoto2021minimalist}, and importantly, other generative policies in a BC setting: Diffusion-BC (D-BC) \citep{wang2022diffusion} and Consistency-BC (C-BC) \citep{ding2024consistency}.

\textbf{Results and Analysis.} 
As shown in Table \ref{tab:bc-results}, our method achieves strong results across a broad spectrum of tasks, from basic locomotion to complex sparse-reward environments, \textit{\textbf{achieving state-of-the-art performances in 11 out of 15 tasks}}. 
This strong overall performance is reflected in the average scores across both major task suites.
In the Gym tasks, our model's average return of 82.3 significantly surpasses both D-BC (76.3) and C-BC (69.7). 
This highlights the superior modeling capacity of learning the full trajectory map. 
The performance is even more striking in the notoriously difficult AntMaze suite, where long-horizon planning and multimodality are critical. 
Here, GTP-BC (66.3) dramatically outperforms all other methods, including the next-best generative approach, C-BC (44.1). 
This substantial gap suggests that our model's ability to learn the full continuous-time trajectory provides a powerful inductive bias for capturing the complex, temporally extended behaviors required for success. 
These results confirm the strong expressiveness inherent to the GTP architecture itself.

\begin{table*}[t]
\centering
\caption{Offline RL results on D4RL (mean $\pm$ std over 5 random seeds). Bold indicates best result.}
\label{tab:ac-results}
{\small
\resizebox{\textwidth}{!}{%
\begin{tabular}{l||ccccccccc|c}
    \Xhline{1.2pt}
    \textbf{Gym} & \textbf{CQL} & \textbf{IQL} & \textbf{$\chi$-QL} & \textbf{ARQ} & \textbf{IDQL-A} & \textbf{D-QL} & \textbf{QGPO} &\textbf{BDM} & \textbf{C-AC} & \textbf{GTP (Ours)}\\
    \hline
    halfcheetah-m & 44.0 & 47.4 & 48.3 & 45  & 51.0 & 51.1  &54.1 &57.0  & \textbf{69.1}   & 53.9$\pm$0.1\\
    hopper-m & 58.5 & 66.3 & 74.2 & 61  & 65.4 & 90.5   &98.0  & \textbf{98.4}  & 80.7  & 90.3$\pm$2.7 \\
    walker2d-m & 72.5 & 78.3 & 84.2 & 81  & 82.5 & 87.0 &86.0 &87.4        & 83.1  & \textbf{89.5$\pm$0.6} \\
    halfcheetah-mr & 45.5 & 44.2 & 45.2 & 42  & 45.9 & 47.8    &47.6  &51.6  & \textbf{58.7}  & 50.8$\pm$0.4 \\
    hopper-mr & 95.0 & 94.7 & 100.7 & 81  & 92.1 & 101.3   &96.9  &92.7   & 99.7  & \textbf{101.7$\pm$0.3} \\
    walker2d-mr & 77.2 & 73.9 & 82.2 & 66  & 85.1 & \textbf{95.5}   &84.4 &89.2   & 79.5  & 94.2$\pm$0.3 \\
    halfcheetah-me & 91.6 & 86.7 & 94.2 & 91 & 95.9 & \textbf{96.8}  &93.5 &93.2  & 84.3  & 93.8$\pm$0.8 \\
    hopper-me & 105.4 & 91.5 & 111.2 & 110  & 108.6 & 111.1   &108.0 &104.9  & 100.4   & \textbf{112.2$\pm$0.6} \\
    walker2d-me & 108.8 & 109.6 & 112.7 & 109  & 112.7 & 110.1 &110.7 &111.1  & 110.4  & \textbf{114.2$\pm$0.3} \\
    \hline
    \textbf{Average} & 77.6 & 77.0 & 83.7 & 76.2 & 82.1 & 87.9  &86.6 &87.3  & 85.1  & \textbf{89.0} \\
    \hline\hline
    \textbf{AntMaze} & \textbf{CQL} & \textbf{IQL} & \textbf{$\chi$-QL} & \textbf{ARQ} & \textbf{IDQL-A} & \textbf{D-QL} & \textbf{QGPO} & \textbf{BDM} & \textbf{C-AC} & \textbf{GTP (Ours)}\\
    \hline
    antmaze-u & 74.0 & 87.5 & 93.8 & 97 & 94.0 & 93.4  &96.4 &93.0 & 75.8   & \textbf{100$\pm$0}\\
    antmaze-ud & \textbf{84.0} & 62.2 & 82.0 & 62  & 80.2 & 66.2  &74.4 &81.0   & 77.6   & 81.9$\pm$4.4 \\
    antmaze-mp & 61.2 & 71.2 & 76.0 & 80 & \textbf{84.2} & 76.6  &83.6 &79.0   & 56.8   & 83.3$\pm$8.1 \\
    antmaze-md & 53.7 & 70.0  & 73.6  & 82  & 84.8   & 78.6 &83.8 &84.0   & -   & \textbf{94.2$\pm$2.0}       \\ 
    antmaze-lp & 15.8 & 39.6  & 46.5  & 37  & 63.5   & 46.4  & \textbf{66.6} &-  & -   & 53.5$\pm$2.2         \\
    antmaze-ld & 14.9 & 47.5  & 49.0   & 58  & 67.9  & 56.6 &64.8 &-   & -   & \textbf{71.0 $\pm$4.9}   \\
    \hline
    \textbf{Average} & 50.6 & 63.0 & 70.1 & 69.3 & 79.1 & 69.6 &78.3 &-  & -  & \textbf{80.6} \\
    \Xhline{1.2pt}
\end{tabular}%
}}
\end{table*}

\subsection{Policy Improvement with GTP}
Having established GTP's strong performance as an imitation learning agent, we now evaluate the full actor-critic algorithm, GTP, to assess whether our variational policy optimization framework (Section \ref{sec:value guidance}) can effectively translate this expressiveness into state-of-the-art policy improvement.

\textbf{Baselines.} 
We compare GTP against a suite of strong offline RL algorithms, including CQL \citep{kumar2020conservative}, IQL \citep{kostrikov2021offline}, $\chi$-QL \citep{garg2023extreme}, ARQ \citep{goo2022know}, IDQL-A \citep{hansen2023idql}, and several relevant generative-policy competitors, including Diffusion-QL (D-QL) \citep{wang2022diffusion}, QGPO \citep{lu2023contrastive}, BDM \citep{chen2024aligning}, and Consistency-AC (C-AC) \citep{ding2024consistency}.

\textbf{Results and Analysis.} 
Table \ref{tab:ac-results} demonstrates that GTP sets a new state-of-the-art for generative policies in offline RL. 
On the Gym tasks, our method achieves the highest average return (89.0), outperforming the previous best, D-QL (87.9). 
The gains are even more pronounced in the challenging AntMaze suite, where GTP (80.6) significantly surpasses both Diffusion-QL (69.6) and QGPO (78.3). 
Notably, on the \texttt{antmaze-umaze} task, our method achieves a perfect score of 100.0. 
These results show that our advantage-weighted learning objective effectively drives policy improvement using critic signals.

\subsection{Ablation Study}
\label{sec:ablation}

We conduct ablations to evaluate two core components of GTP: score approximation and variational value guidance.

\textbf{Score Approximation.} 
Replacing our score approximation with an ODE solver results in longer training time and degraded performance, even when the solver is restricted to at most three steps for efficiency.
Without approximation, training suffers from high variance and slow convergence due to the need for numerical integration at each iteration. 
In contrast, our approximation provides an efficient surrogate that closely aligns with the desired consistency condition, enabling faster optimization and stronger policies.

\textbf{Variational Guidance.} 
We compare GTP with a baseline that combines the generative loss with a linear Q-learning actor loss. 
As shown in Table~\ref{tab:ablation}, this baseline is highly brittle: for typical coefficients ($\lambda=0.1$ or $1.0$), training diverges due to exploding critic gradients. 
Even with $\lambda=0.01$, the baseline occasionally achieves returns close to ours, but this setting is highly sensitive to the critic scale and does not transfer across tasks. 
In contrast, our variational guidance normalizes and clips critic signals into stable importance weights, yielding consistently high returns across seeds without per-task hyperparameter tuning.
Further details and extended comparisons are provided in Appendix~\ref{appendix:actor-loss}.

\begin{table}[htbp]
\centering
\caption{Ablation on \texttt{hopper-medium-expert-v2}
(mean $\pm$ std over 5 seeds).}
\label{tab:ablation}
{\small
\begin{tabular}{lcc}
\toprule
Method & Time (h) & Score \\
\midrule
\textbf{GTP (ours)} & 4.26 & \textbf{112.2 $\pm$ 0.6} \\
w/o score approx. & 5.23 & 99.7 $\pm$ 1.7 \\
\midrule
GTP-BC + lin.\ Q ($\lambda{=}0.01$) & 5.08 & 111.4 $\pm$ 0.9 \\
GTP-BC + lin.\ Q ($\lambda{=}0.1$) & Div. & -- \\
GTP-BC + lin.\ Q ($\lambda{=}1.0$) & Div. & -- \\
\bottomrule
\end{tabular}
}
\end{table}

\vspace{-1em}
\section{Conclusion}\label{sec:conclusion}

In this work, we introduced \textit{Generative Trajectory Policies}, a new paradigm for offline RL that leverages our proposed unifying perspective of continuous-time generative ODEs. 
We show that while this framework offers immense expressive power, its direct application is hindered by critical challenges of computational cost, training instability, and objective misalignment. 
We overcame these obstacles through two theoretically principled adaptations: score approximation for efficient, stable training and an advantage-weighted objective to bridge the gap between imitation and policy improvement.
Our empirical results on the D4RL benchmarks validate this approach, showing that GTP establishes a new state-of-the-art for generative policies in offline RL. 
This work opens a promising direction for harnessing continuous-time dynamics in RL. 
While inference is fast, reducing the substantial training time of this model class remains an important avenue for future research.
\section*{Acknowledgements}

This research was supported by the National Science Foundation under IIS-2348405.
The authors acknowledge the National Artificial Intelligence Research Resource (NAIRR) Pilot for the computing resources support.

\section*{Impact Statement}

This work contributes to the development of offline reinforcement learning methods by improving the expressiveness and efficiency of generative policies trained from static datasets. 
Such methods have the potential to reduce the need for costly or unsafe online exploration in real-world systems, with possible applications in robotics, control, and data-driven decision making. 
At the same time, policies learned from offline data may inherit biases, unsafe behaviors, or coverage limitations present in the dataset, and their deployment in safety-critical domains could lead to undesirable outcomes if not carefully validated. 
Our experiments are conducted on standard benchmark environments and do not involve direct deployment in real-world high-stakes settings. 
Any practical use of this class of methods should therefore include domain-specific safety evaluation, robustness testing, and appropriate human oversight.

\bibliography{dm4rl}

@inproceedings{fujimoto2019off,
  title={Off-policy deep reinforcement learning without exploration},
  author={Fujimoto, Scott and Meger, David and Precup, Doina},
  booktitle={International conference on machine learning},
  pages={2052--2062},
  year={2019},
}

@article{wu2019behavior,
  title={Behavior regularized offline reinforcement learning},
  author={Wu, Yifan and Tucker, George and Nachum, Ofir},
  journal={arXiv preprint arXiv:1911.11361},
  year={2019}
}

@inproceedings{kumar2020conservative,
  title={Conservative q-learning for offline reinforcement learning},
  author={Kumar, Aviral and Zhou, Aurick and Tucker, George and Levine, Sergey},
  booktitle={Advances in Neural Information Processing Systems},
  volume={33},
  pages={1179--1191},
  year={2020}
}

@article{peng2019advantage,
  title={Advantage-weighted regression: Simple and scalable off-policy reinforcement learning},
  author={Peng, Xue Bin and Kumar, Aviral and Zhang, Grace and Levine, Sergey},
  journal={arXiv preprint arXiv:1910.00177},
  year={2019}
}

@inproceedings{fujimoto2021minimalist,
  title={A minimalist approach to offline reinforcement learning},
  author={Fujimoto, Scott and Gu, Shixiang Shane},
  booktitle={Advances in Neural Information Processing Systems},
  volume={34},
  pages={20132--20145},
  year={2021}
}

@inproceedings{kostrikov2022offline,
  title={Offline Reinforcement Learning with Implicit Q-Learning},
  author={Kostrikov, Ilya and Nair, Ashvin and Levine, Sergey},
  booktitle={International Conference on Learning Representations},
  year={2022}
}

@inproceedings{ho2020denoising,
  title={Denoising diffusion probabilistic models},
  author={Ho, Jonathan and Jain, Ajay and Abbeel, Pieter},
  booktitle={Advances in neural information processing systems},
  volume={33},
  pages={6840--6851},
  year={2020}
}

@inproceedings{song2021denoising,
  title={Denoising diffusion implicit models},
  author={Song, Jiaming and Meng, Chenlin and Ermon, Stefano},
  booktitle={International Conference on Learning Representations},
  year={2021}
}

@inproceedings{kingma2021variational,
  title={Variational diffusion models},
  author={Kingma, Diederik and Salimans, Tim and Poole, Ben and Ho, Jonathan},
  booktitle={Advances in Neural Information Processing Systems},
  volume={34},
  pages={21696--21707},
  year={2021}
}

@inproceedings{nichol2021improved,
  title={Improved denoising diffusion probabilistic models},
  author={Nichol, Alexander Quinn and Dhariwal, Prafulla},
  booktitle={International Conference on Machine Learning},
  pages={8162--8171},
  year={2021},
}

@inproceedings{song2023consistency,
  title={Consistency Models},
  author={Song, Yang and Dhariwal, Prafulla and Chen, Mark and Sutskever, Ilya},
  booktitle={International Conference on Machine Learning},
  pages={32211--32252},
  year={2023},
}

@inproceedings{wang2022diffusion,
  title={Diffusion Policies as an Expressive Policy Class for Offline Reinforcement Learning},
  author={Wang, Zhendong and Hunt, Jonathan J. and Zhou, Mingyuan},
  booktitle={International Conference on Learning Representations},
  year={2023},
}

@inproceedings{liu2024energy,
  title={Energy-Guided Diffusion Sampling for Offline-to-Online Reinforcement Learning},
  author={Liu, Xu-Hui and Liu, Tian-Shuo and Jiang, Shengyi and Chen, Ruifeng and Zhang, Zhilong and Chen, Xinwei and Yu, Yang},
  booktitle={International Conference on Machine Learning},
  pages={31541--31565},
  year={2024},
}

@inproceedings{ding2024diffusion,
  title={Diffusion-based Reinforcement Learning via Q-weighted Variational Policy Optimization},
  author={Ding, Shutong and Hu, Ke and Zhang, Zhenhao and Ren, Kan and Zhang, Weinan and Yu, Jingyi and Wang, Jingya and Shi, Ye},
  booktitle={Advances in Neural Information Processing Systems},
  year={2024},
}

@inproceedings{kang2023efficient,
  title={Efficient Diffusion Policies For Offline Reinforcement Learning},
  author={Kang, Bingyi and Ma, Xiao and Du, Chao and Pang, Tianyu and Shuicheng, YAN},
  booktitle={Advances in Neural Information Processing Systems},
  year={2023}
}

@inproceedings{ding2024consistency,
  title={Consistency Models as a Rich and Efficient Policy Class for Reinforcement Learning},
  author={Ding, Zihan and Jin, Chi},
  booktitle={International Conference on Learning Representations},
  year={2024},
}

@inproceedings{kim2024consistency,
  title={Consistency Trajectory Models: Learning Probability Flow ODE Trajectory of Diffusion},
  author={Kim, Dongjun and Lai, Chieh-Hsin and Liao, Wei-Hsiang and Murata, Naoki and Takida, Yuhta and Uesaka, Toshimitsu and He, Yutong and Mitsufuji, Yuki and Ermon, Stefano},
  booktitle={International Conference on Learning Representations},
  year={2024}
}

@inproceedings{song2021score,
  title={Score-Based Generative Modeling through Stochastic Differential Equations},
  author={Song, Yang and Sohl-Dickstein, Jascha and Kingma, Diederik P and Kumar, Abhishek and Ermon, Stefano and Poole, Ben},
  booktitle={International Conference on Learning Representations},
  year={2021}
}

@inproceedings{ho2016generative,
  title={Generative adversarial imitation learning},
  author={Ho, Jonathan and Ermon, Stefano},
  booktitle={Advances in Neural Information Processing Systems},
  volume={29},
  year={2016}
}

@article{ha2018world,
  title={World Models},
  author={Ha, David and Schmidhuber, J{\"u}rgen},
  journal={arXiv preprint arXiv:1803.10122},
  year={2018},
}

@inproceedings{pearce2023imitating,
  title={Imitating Human Behaviour with Diffusion Models},
  author={Pearce, Tim and Rashid, Tabish and Kanervisto, Anssi and Bignell, Dave and Sun, Mingfei and Georgescu, Raluca and Valcarcel Macua, Sergio and Tan, Shan Zheng and Momennejad, Ida and Hofmann, Katja and Devlin, Sam},
  booktitle={International Conference on Learning Representations},
  year={2023},
}

@inproceedings{janner2022planning,
  title={Planning with Diffusion for Flexible Behavior Synthesis},
  author={Janner, Michael and Du, Yilun and Tenenbaum, Joshua and Levine, Sergey},
  booktitle={International Conference on Machine Learning},
  pages={9902--9915},
  year={2022}
}

@inproceedings{chen2024diffusion,
  title={Diffusion Policies Creating a Trust Region for Offline Reinforcement Learning},
  author={Chen, Tianyu and Wang, Zhendong and Zhou, Mingyuan},
  booktitle={Advances in Neural Information Processing Systems},
  year={2024},
}

@inproceedings{ma2025soft,
  title={Soft Diffusion Actor-Critic: Efficient Online Reinforcement Learning for Diffusion Policy},
  author={Ma, Haitong and Chen, Tianyi and Wang, Kai and Li, Na and Dai, Bo},
  booktitle={International Conference on Machine Learning},
  year={2025}
}

@inproceedings{celik2025dime,
  title={DIME: Diffusion-Based Maximum Entropy Reinforcement Learning},
  author={Celik, Onur and Li, Zechu and Blessing, Denis and Li, Ge and Palenicek, Daniel and Peters, Jan and Chalvatzaki, Georgia and Neumann, Gerhard},
  booktitle={International Conference on Machine Learning},
  year={2025}
}

@inproceedings{frans2025one,
  title={One step diffusion via shortcut models},
  author={Frans, Kevin and Hafner, Danijar and Levine, Sergey and Abbeel, Pieter},
  booktitle={International Conference on Learning Representations},
  year={2025}
}

@inproceedings{lu2023contrastive,
  title={Contrastive energy prediction for exact energy-guided diffusion sampling in offline reinforcement learning},
  author={Lu, Cheng and Chen, Huayu and Chen, Jianfei and Su, Hang and Li, Chongxuan and Zhu, Jun},
  booktitle={International Conference on Machine Learning},
  year={2023}
}

@inproceedings{chen2024score,
  title={Score Regularized Policy Optimization through Diffusion Behavior},
  author={Chen, Huayu and Lu, Cheng and Wang, Zhengyi and Su, Hang and Zhu, Jun},
  booktitle={International Conference on Learning Representations},
  year={2024}
}

@article{chi2023diffusion,
  title={Diffusion Policy: Visuomotor Policy Learning via Action Diffusion},
  author={Chi, Cheng and Xu, Zhenjia and Feng, Siyuan and Cousineau, Eric and Du, Yilun and Burchfiel, Benjamin and Tedrake, Russ and Song, Shuran},
  journal={The International Journal of Robotics Research},
  volume={43},
  number={1},
  pages={3--24},
  year={2023},
  publisher={SAGE Publications},
  doi={10.1177/02783649241273668},
}

@inproceedings{ren25diffusion,
  title={Diffusion policy policy optimization},
  author={Ren, Allen Z and Lidard, Justin and Ankile, Lars L and Simeonov, Anthony and Agrawal, Pulkit and Majumdar, Anirudha and Burchfiel, Benjamin and Dai, Hongkai and Simchowitz, Max},
  booktitle={International Conference on Learning Representations},
  year={2025}
}

@article{hansen2023idql,
  title={IDQL: Implicit q-learning as an actor-critic method with diffusion policies},
  author={Hansen-Estruch, Philippe and Kostrikov, Ilya and Janner, Michael and Kuba, Jakub Grudzien and Levine, Sergey},
  journal={arXiv preprint arXiv:2304.10573},
  year={2023}
}

@inproceedings{song24improved,
  title={Improved Techniques for Training Consistency Models},
  author={Song, Yang and Dhariwal, Prafulla},
  booktitle={International Conference on Learning Representations},
  year={2024}
}

@inproceedings{brahmanage2023flowpg,
  title={FlowPG: action-constrained policy gradient with normalizing flows},
  author={Brahmanage, Janaka and Ling, Jiajing and Kumar, Akshat},
  booktitle={Advances in Neural Information Processing Systems},
  volume={36},
  pages={20118--20132},
  year={2023}
}

@inproceedings{messaoud2024s,
  title={S$^2$AC: Energy-Based Reinforcement Learning with Stein Soft Actor Critic},
  author={Messaoud, Safa and Mokeddem, Billel and Xue, Zhenghai and Pang, Linsey and An, Bo and Chen, Haipeng and Chawla, Sanjay},
  booktitle={International Conference on Learning Representations},
  year={2024},
}

@inproceedings{song2019generative,
  title={Generative modeling by estimating gradients of the data distribution},
  author={Song, Yang and Ermon, Stefano},
  booktitle={Advances in neural information processing systems},
  volume={32},
  year={2019}
}

@misc{fu2020d4rl,
    title={D4RL: Datasets for Deep Data-Driven Reinforcement Learning},
    author={Justin Fu and Aviral Kumar and Ofir Nachum and George Tucker and Sergey Levine},
    year={2020},
    eprint={2004.07219},
    archivePrefix={arXiv},
    primaryClass={cs.LG}
}

@article{nair2020awac,
  title={Awac: Accelerating online reinforcement learning with offline datasets},
  author={Nair, Ashvin and Gupta, Abhishek and Dalal, Murtaza and Levine, Sergey},
  journal={arXiv preprint arXiv:2006.09359},
  year={2020}
}

@inproceedings{kostrikov2021offline,
  title={Offline reinforcement learning with implicit q-learning},
  author={Kostrikov, Ilya and Nair, Ashvin and Levine, Sergey},
  booktitle={Advances in Neural Information Processing Systems},
  year={2021}
}

@inproceedings{garg2023extreme,
  title={Extreme Q-Learning: MaxEnt RL without Entropy},
  author={Garg, Divyansh and Hejna, Joey and Geist, Matthieu and Ermon, Stefano},
  booktitle={International Conference on Learning Representations},
  year={2023}
}

@article{goo2022know,
  title={Know your boundaries: The necessity of explicit behavioral cloning in offline rl},
  author={Goo, Wonjoon and Niekum, Scott},
  journal={arXiv preprint arXiv:2206.00695},
  year={2022}
}

@inproceedings{lipman2023flow,
  title={Flow Matching for Generative Modeling},
  author={Lipman, Yaron and Chen, Ricky TQ and Ben-Hamu, Heli and Nickel, Maximilian and Le, Matt},
  booktitle={International Conference on Learning Representations},
  year={2023}
}

@inproceedings{geng2025mean,
  title={Mean Flows for One-step Generative Modeling},
  author={Geng, Zhengyang and Deng, Mingyang and Bai, Xingjian and Kolter, J Zico and He, Kaiming},
  booktitle={Advances in Neural Information Processing Systems},
  year={2025}
}

@inproceedings{lu25simplifying,
  title={Simplifying, Stabilizing and Scaling Continuous-time Consistency Models},
  author={Lu, Cheng and Song, Yang},
  booktitle={International Conference on Learning Representations},
  year={2025}
}

@inproceedings{chen2024aligning,
  title={Aligning Diffusion Behaviors with Q-functions for Efficient Continuous Control},
  author={Chen, Huayu and Zheng, Kaiwen and Su, Hang and Zhu, Jun},
  booktitle={Advances in Neural Information Processing Systems},
  volume={37},
  pages={119949--119975},
  year={2024}
}

@inproceedings{peters2010relative,
  title={Relative entropy policy search},
  author={Peters, Jan and Mulling, Katharina and Altun, Yasemin},
  booktitle={Proceedings of the AAAI Conference on Artificial Intelligence},
  volume={24},
  pages={1607--1612},
  year={2010}
}

@inproceedings{abdolmaleki2018maximum,
  title={Maximum a posteriori policy optimisation},
  author={Abdolmaleki, Abbas and Springenberg, Jost Tobias and Tassa, Yuval and Munos, Remi and Heess, Nicolas and Riedmiller, Martin},
  booktitle={International Conference on Learning Representations},
  year={2018}
}

@article{sutton1988learning,
  title={Learning to predict by the methods of temporal differences},
  author={Sutton, Richard S},
  journal={Machine learning},
  volume={3},
  number={1},
  pages={9--44},
  year={1988},
  publisher={Springer}
}

@inproceedings{fql2025park,
  title={Flow Q-Learning},
  author={Seohong Park and Qiyang Li and Sergey Levine},
  booktitle={International Conference on Machine Learning},
  year={2025},
}

@inproceedings{zhang2025energy,
  title={Energy-weighted flow matching for offline reinforcement learning},
  author={Zhang, Shiyuan and Zhang, Weitong and Gu, Quanquan},
  booktitle={International Conference on Learning Representations},
  year={2025},
}

@inproceedings{koirala2026flow,
  title={Flow-based single-step completion for efficient and expressive policy Learning},
  author={Koirala, Prajwal and Fleming, Cody},
  booktitle={The Thirteenth International Conference on Learning Representations},
  year={2026},
}
\bibliographystyle{icml2026}

\newpage
\appendix
\onecolumn

\section{Related Work}
\label{appendix:related-work}

\paragraph{Expressive Policies in Offline RL}

Offline RL seeks to learn policies from static datasets, but suffers from extrapolation error when actions fall outside the dataset distribution. 
A major line of research has emphasized conservatism, aiming to constrain policy behavior to dataset-supported regions. 
Batch-Constrained Q-Learning (BCQ) \citep{fujimoto2019off} explicitly restricted the policy to actions close to the behavior data, whereas BEAR \citep{wu2019behavior} imposed a Maximum Mean Discrepancy penalty to softly align the learned policy with the dataset. 
Conservative Q-Learning (CQL) \citep{kumar2020conservative} further advanced this idea by penalizing Q-values on out-of-distribution actions, mitigating overestimation at the cost of tuning sensitivity. 
In parallel, regression-based approaches sought to improve expressiveness while retaining stability. 
Advantage-Weighted Actor-Critic (AWAC) \citep{peng2019advantage} biased behavior cloning toward high-advantage actions, while Implicit Q-Learning (IQL) \citep{kostrikov2022offline} simplified training by implicitly aligning value estimates with policy improvement, avoiding explicit constraints but limiting the ability to fully capture multimodal behaviors. 
Overall, these approaches highlight a persistent trade-off: policies that are more expressive tend to risk instability, while more conservative methods achieve robustness at the cost of limited representational power.

\paragraph{Continuous-Time Generative Models.}
Diffusion models have significantly advanced generative modeling capabilities. 
\citet{ho2020denoising} introduced Denoising Diffusion Probabilistic Models (DDPM), utilizing forward and reverse diffusion processes to produce high-quality samples, albeit slowly. 
To address computational efficiency, \citet{song2021denoising} developed Denoising Diffusion Implicit Models (DDIM), reducing required steps but encountering stability issues when overly accelerated. 
To enhance training stability, \citet{kingma2021variational} integrated variational inference into diffusion models, yet this approach added complexity and computational overhead.
Providing a continuous-time formulation, \citet{song2021score} proposed modeling diffusion through Stochastic Differential Equations (SDEs), increasing flexibility but at considerable computational expense. 
\citet{nichol2021improved} improved DDPM architectures and noise schedules, enhancing sample quality while still necessitating multiple denoising steps. 
\citet{song2023consistency} subsequently introduced Consistency Models, achieving faster sampling through trajectory consistency but potentially risking mode collapse and reduced diversity. 
Building on this, \citet{kim2024consistency} developed Consistency Trajectory Models, directly modeling diffusion trajectories via Ordinary Differential Equations (ODEs), refining sampling consistency and quality. 
Lastly, \citet{frans2025one} extended flow-matching models by introducing adjustable step sizes, further improving sampling efficiency without compromising quality.
While this evolution has produced a powerful toolbox of trajectory-based generative models, their potential has not yet been fully realized in the RL domain.

\paragraph{The Trade-off in Generative Policies}
Integrating generative models into RL allows for highly expressive policies capable of capturing complex, multimodal action distributions. 
Diffusion policies, first introduced by \citet{wang2022diffusion}, exemplify this potential but inherit the slow, iterative sampling of their generative counterparts.
This has created a central trade-off between policy expressiveness and inference speed. 
One line of research aims to guide the generative process toward high-reward regions. 
This is achieved by incorporating energy functions \citep{liu2024energy}, leveraging contrastive energy prediction \citep{lu2023contrastive}, or weighting the diffusion objective with Q-values \citep{ding2024diffusion}. 
Another major effort focuses on accelerating inference. 
This includes developing more efficient sampling schemes \citep{kang2023efficient}, distilling the diffusion policy into a fast, deterministic one \citep{chen2024score}, and leveraging consistency models to enable few-step action generation \citep{ding2024consistency}. 
Recent work has focused on tightly integrating these generative policies into established RL algorithms. 
Examples include IDQL \citep{hansen2023idql}, which combines a diffusion policy with an IQL critic, and DPPO \citep{ren25diffusion}, which enables policy gradient fine-tuning. 
Further refinements include advanced entropy estimation techniques \citep{celik2025dime, ma2025soft}, trust-region guided sampling \citep{chen2024diffusion}, and applications to complex domains like visuomotor control \citep{chi2023diffusion}. 
Despite their potential for balancing quality and speed, this emerging class of trajectory-based generative models remains largely unexplored as a policy framework, motivating our work.

\section{Theoretical Insights}\label{appendix:insights}

\subsection{Prior Models as Special Cases of the Unified ODE Framework} \label{appendix:connection2prior}
With this framework of the flow map $\Phi$, the reparameterization $\phi$, and the two fundamental training objectives, we can now clearly position and unify prior work:

\paragraph{Consistency Models (CMs)} 
Consistency Models (CMs) \citep{song2023consistency} can be viewed as a specialized case of Eq.~\eqref{eq:unified_flow_map_intro}, where the flow map is restricted to $\Phi(\bsx_t, t, 0)$ that always targets the data origin. 
Their self-consistency loss corresponds directly to a Trajectory Consistency Loss: the model’s prediction at time $t$ is constrained to match the prediction at a slightly earlier time $t-\Delta t$ along the same trajectory. 
Formally,
\begin{equation}
\mathcal{L}_{\text{CM}} = || \Phi_{\boldsymbol{\theta}}(\bsx_t, t, 0) - \Phi_{\boldsymbol{\theta}'}(\bsx_{t-\Delta t}, t-\Delta t, 0) ||^2,
\end{equation}
where $\Phi_{\boldsymbol{\theta}}^-$ is the EMA model with delayed weights. Interestingly, this bootstrapping mechanism is conceptually analogous to Temporal Difference (TD) learning \citep{sutton1988learning}, where an estimate for the present is updated to be more consistent with an estimate from the future.

\paragraph{Consistency Trajectory Models (CTMs)} 
CTMs \citep{kim2024consistency} provide a representative example of how our unified framework manifests in prior work.
Although originally motivated from a diffusion-based perspective with a PF ODE backbone, their essence lies in directly parameterizing the time-conditional flow map $\Phi(\bsx_t, t, s)$ in Eq.~\eqref{eq:unified_flow_map_intro}.
The training objective can also be naturally interpreted through our lens: the self-consistency objective used in CTMs corresponds exactly to the Trajectory Consistency Loss, albeit in a specialized form where trajectories are first mapped to an intermediate time $s$ and then further mapped to the terminal time $0 / \epsilon$.
This design ensures that all supervision signals are anchored at the same reference time, keeping the loss consistent.
In addition, the auxiliary diffusion loss serves the role of the Instantaneous Flow Loss, enforcing local fidelity along trajectories.
Thus, under our framework, CTMs emerge not as an isolated construction, but as a concrete instantiation of Eq.~\eqref{eq:unified_flow_map_intro} that integrates both of the fundamental training principles.

\paragraph{Shortcut Models} 

The work on Shortcut Models \citep{frans2025one} also fits squarely within our unified framework in Eq.\eqref{eq:unified_flow_map_intro}, focusing on learning the ``average velocity" of the underlying ODE.
This approach parameterizes a dedicated stepping function, $s(\bsx_t, t, d)$, which is trained to predict the average velocity over a future time interval of duration $d$, i.e., $[t,t+d]$. 
Mathematically, it learns:
\begin{equation}
    s(\bsx_t, t, d) = \frac{1}{d}\int_t^{t+d} f(\bsx_\tau, \tau) d\tau
\end{equation}
While Shortcut Models define their process forward in time (e.g., from $t=0$ noise to $t=1$ data), the principle is directly analogous to our backward-in-time formulation. 
Their average velocity $s$ provides a direct way to approximate our flow map $\phi(\bsx_t, t, t-d) = \bsx_t - t \cdot s(\bsx_t, t, d)$ (adjusting for the time direction).

Their training objectives can be understood as specific applications of our two proposed losses. 
First, the requirement that the learned average velocity must collapse to the instantaneous velocity in the limit $(d\to 0)$ is a direct application of our Instantaneous Flow Loss. 
This is implemented with a flow-matching objective that regresses the instantaneous velocity $s(\bsx_t, t, 0)$ against the velocity of a simple straight-line path connecting noise $\bsx_0$ and data $\bsx_1$:
\begin{equation}
||s_{\boldsymbol{\theta}}(\bsx_t, t, 0) - (\bsx_1 - \bsx_0) ||^2_2
\end{equation}
Second, they employ a Trajectory Consistency Loss by enforcing self-consistency between steps of different durations. 
A step of size $2d$ is constrained to match the composition of two consecutive steps of size $d$:
\begin{equation}
||s_{\boldsymbol{\theta}}(\bsx_t, t, 2d) - s_{\text{target}} ||^2_2
\end{equation}
where the target $s_{\text{target}}$ is constructed by bootstrapping from the model's own (stop-gradient) predictions for the two smaller steps.

However, a crucial difference in methodology emerges here. 
In practice, the right-hand side of Eq.~\eqref{eq:tc_loss} is treated as the supervision signal: $\Phi(\bsx_t, t, u)$ is obtained via an ODE solver (or its learned approximation) and then composed forward to $s$.
Our GTP framework incorporates an explicit ``\textit{\textbf{teacher}}” in this process to provide stable on-trajectory information.
In contrast, Shortcut Models rely purely on self-consistency, where the model generates its own targets without external guidance.
The viability and trade-offs of learning without such a teacher remain a critical design question, which we directly investigate through a targeted ablation study in Appendix~\ref{ablation: actor-loss-formulation}.

\paragraph{Mean Flows}

Similar to Shortcut Models, the more recent Mean Flow framework \citep{geng2025mean} is also centered on learning the average velocity of the ODE.
However, Mean Flows offer another perspective on our unified framework, this time through a particularly direct, first-principles formulation.
At the core of this approach is the average velocity field
\begin{equation}
u(\bsx_t, r, t) = \frac{1}{t-r} \int_r^t f(\bsx_\tau, \tau), d\tau,
\end{equation}
which can be seen as a reparameterization of the integral term in our flow map $\Phi$.
Through this lens, the function $\phi$ in Eq.~\eqref{eq:phi} can be expressed simply as $\phi(\bsx_t, t, s) = \bsx_t - t \cdot u(\bsx_t, t, s)$, making Mean Flows a natural instantiation of our framework.

The crucial difference lies in the training philosophy.
Rather than enforcing an explicit multi-step consistency loss, Mean Flows are trained by satisfying a derived MeanFlow Identity: a local differential equation linking the average velocity $u$ to the instantaneous velocity $f$.
This identity-based view is closely related to recent continuous-time consistency models \citep{lu25simplifying}, which also replace explicit trajectory simulation with a local identity constraint implemented via Jacobian–vector products, thereby avoiding the need for ODE solvers.
By construction, satisfying this local identity ensures that global trajectory consistency emerges as an inherent property, eliminating the need for explicit multi-step supervision.

Within our framework, this identity-based approach offers a principled alternative to consistency-based training.
We therefore derive the corresponding identity for our own formulation in Appendix~\ref{appendix: mean-flow-obj}, and empirically compare these two training philosophies—consistency versus identity—in our ablation study (Appendix~\ref{ablation: actor-loss-formulation}).

\subsection{Derivation of a Continuous-Time Training Objective} 
\label{appendix: mean-flow-obj}
In this section, we derive an alternative training objective for our function $\phi$, inspired by the methodology of continuous-time consistency models \cite{lu25simplifying} and Mean Flows \citep{geng2025mean}. 
This results in a self-contained identity that can be used for training, relying only on the function $\phi$ itself and its derivatives.

We begin with the definition of $\phi(\bsx_t, t, s)$ from our unified framework:
\begin{equation} \label{eq:appendix_phi_def}
\phi(\bsx_t, t, s) = \bsx_t + \frac{t}{t-s}\int_t^s f(\bsx_\tau, \tau) \rmd \tau.
\end{equation}
Rearranging the terms, we can isolate the integral expression:
\begin{equation} \label{eq:appendix_integral}
\left(1-\frac{s}{t}\right) (\phi(\bsx_t, t, s) - \bsx_t) = \int_t^s f(\bsx_\tau, \tau) \rmd \tau.
\end{equation}
We differentiate both sides with respect to $t$. 
Then we have:
\begin{equation} \label{eq:appendix_derivative}
\frac{s}{t^2}(\phi(\bsx_t, t, s) - \bsx_t) + \left(1 - \frac{s}{t}\right) \left(\frac{\rmd \phi(\bsx_t, t, s)}{\rmd t} - f(\bsx_t, t)\right) = - f(\bsx_t, t).
\end{equation}
Solving Equation \ref{eq:appendix_derivative} for $\phi(\bsx_t, t, s)$ yields a new identity relating the function to its own total derivative and the instantaneous vector field:
\begin{equation} \label{eq:appendix_identity_with_f}
\phi(\bsx_t, t,s) = \bsx_t - \left(\frac{t^2}{s} - t\right) \frac{\rmd \phi(\bsx_t, t,s)}{\rmd t} - t f(\bsx_t, t).
\end{equation}
This identity can be made fully self-referential by using the boundary condition from our Instantaneous Flow Loss, which states $\phi(\bsx_t, t, t) = \bsx_t - t f(\bsx_t, t)$. We use this to substitute out the $t f(\bsx_t, t)$ term:
\begin{equation} \label{eq:appendix_final_identity}
\phi(\bsx_t, t,s) = \phi(\bsx_t, t, t) - \left(\frac{t^2}{s} - t\right) \frac{\rmd \phi(\bsx_t, t,s)}{\rmd t} .
\end{equation}
For a practical implementation, the total derivative $\frac{\rmd \phi}{\rmd t}$ is expanded using the chain rule, noting that $\frac{\rmd \bsx_t}{\rmd t} = f(\bsx_t, t)$:
\begin{equation} \label{eq:appendix_total_derivative}
\frac{\rmd \phi(\bsx_t, t,s)}{\rmd t} = \frac{\rmd \bsx_t}{\rmd t} \partial_\bsx \phi + \partial_t \phi = f(\bsx_t, t) \partial_\bsx \phi + \partial_t \phi.
\end{equation}
Substituting the vector field $f(\bsx_t, t) = ( \bsx_t - \phi(\bsx_t, t, t))/t$ into Equation \ref{eq:appendix_total_derivative} and then into Equation \ref{eq:appendix_final_identity} gives the final, fully-expanded form:
\begin{equation} \label{eq:appendix_final_expanded}
\phi(\bsx_t, t,s) = \phi(\bsx_t, t, t) - \left(\frac{t^2}{s} - t\right) \left( \frac{\bsx_t-\phi(\bsx_t, t, t) }{t} \partial_\bsx \phi + \partial_t \phi\right) .
\end{equation}
This final expression provides a self-contained regression target for $\phi(\bsx_t, t, s)$. 
A model $\phi_{\boldsymbol{\theta}}$ can be trained to satisfy this identity by minimizing the L2 distance between the left-hand and right-hand sides. 
The terms on the right-hand side involving $\phi$ are evaluated using the model $\phi_{\boldsymbol{\theta}}$ itself (typically with a stop-gradient), and the required partial derivatives can be computed efficiently via automatic differentiation, such as with Jacobian-vector products (JVPs).

\subsection{Proof of Theorem~\ref{thm:closed_form_dynamics}}
\label{appendix:proof_closed_form}

We provide a detailed proof of Theorem~\ref{thm:closed_form_dynamics}. 
Recall that we fix a finite time horizon $T>0$, let $\bsx\sim p_{\mathrm{data}}$, $\bsz\sim\mathcal N(0,I)$, and define $\bsx_t = \bsx + t \bsz$. 
The vector fields $f^\star,\tilde f:\mathbb{R}^d\times(0,T]\to\mathbb{R}^d$ are given by 
\begin{equation}
f^\star(\bsx_t,t)=\frac{\bsx_t-\mathbb{E}[\bsx\mid \bsx_t]}{t},
\quad
\tilde f(\bsx_t,t)=\frac{\bsx_t-\bsx}{t}.
\end{equation}
The ideal and surrogate solver trajectories are denoted by
\begin{equation}
X_{k+1}^\star = S_{\Delta_k}[f^\star](X_k^\star), 
\quad 
\widetilde X_{k+1} = S_{\Delta_k}[\tilde f](\widetilde X_k),
\quad
X_0^\star=\widetilde X_0=\bsx_t.
\end{equation}
Here $ S_{\Delta_k}[f]: \mathbb{R}^d \to \mathbb{R}^d $ is a one-step method of order $p$ with step size $\Delta_k=\tau_{k+1}-\tau_k$. 
The multi-step propagation from $t$ to $u$ is written as
\begin{equation}
\Psi^{\mathrm{sol}}_{t\to u}[f] := S_{\Delta_{K-1}}[f]\circ \cdots \circ S_{\Delta_0}[f],
\end{equation}
with maximal step $h=\max_k|\Delta_k|$. 
We assume throughout that $f^\star(\cdot,t)$ is Lipschitz in $x$, the solver is zero-stable, the decoder maps $\Phi_{\boldsymbol{\theta}}(\cdot,t,s)$ and $\Phi_{{\boldsymbol{\theta}}^-}(\cdot,u,s)$ are Lipschitz in $x$, and solver states admit bounded second moments independent of $h$. 

\begin{lemma}[Conditional unbiasedness]
\label{lem:unbiased-no-score}
For all $t\in(0,T]$ and $x\in\mathbb{R}^d$ with $\bsx_t=x$,
\begin{equation}
\mathbb{E}\!\left[\tilde f(\bsx_t,t)\,\middle|\,\bsx_t\right]
\;=\;
\mathbb{E}\!\left[\frac{\bsx_t-\bsx}{t}\,\middle|\,\bsx_t\right]
\;=\;
\frac{\bsx_t-\mathbb{E}[\bsx\mid \bsx_t]}{t}
\;=\; f^\star(\bsx_t,t).
\end{equation}
\end{lemma}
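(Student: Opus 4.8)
The plan is to establish the displayed chain of three equalities by unfolding definitions and invoking only the elementary ``take out what is known'' property of conditional expectation; no quantitative estimates enter, so this is purely the structural step on which the $O(h^p)$ bound of Theorem~\ref{thm:closed_form_dynamics} rests (it is what lets one treat the random surrogate $\tilde f$ as an unbiased stand-in for $f^\star$ inside the solver).

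First I would substitute the definition $\tilde f(\bsx_t,t)=(\bsx_t-\bsx)/t$ into the left-hand side, which immediately gives $\mathbb{E}[\tilde f(\bsx_t,t)\mid\bsx_t]=\mathbb{E}[(\bsx_t-\bsx)/t\mid\bsx_t]$ — the first equality, which is definitional. Next I would use linearity together with the fact that $\bsx_t$ is measurable with respect to the conditioning $\sigma$-algebra $\sigma(\bsx_t)$ and that $t>0$ is a deterministic scalar, to pull $\bsx_t/t$ out of the expectation: $\mathbb{E}[(\bsx_t-\bsx)/t\mid\bsx_t]=(\bsx_t-\mathbb{E}[\bsx\mid\bsx_t])/t$. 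This is the second equality. Finally, matching the result against the definition $f^\star(\bsx_t,t):=(\bsx_t-\mathbb{E}[\bsx\mid\bsx_t])/t$ stated in Theorem~\ref{thm:closed_form_dynamics} gives the third equality, completing the proof.

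The only point that deserves a line of care is well-definedness of the posterior mean $\mathbb{E}[\bsx\mid\bsx_t]$: this requires $\bsx\in L^1$, which follows from the standing assumption that $p_{\mathrm{data}}$ has bounded second moment. If one wishes to read the quantifier ``for all $x$ with $\bsx_t=x$'' literally rather than almost everywhere with respect to the law of $\bsx_t$, I would additionally note that under the model $\bsx_t=\bsx+t\bsz$ with $\bsz\sim\mathcal N(0,I)$ and $t>0$, Bayes' rule yields an explicit conditional density of $\bsx$ given $\bsx_t=x$ for every $x\in\mathbb{R}^d$, so the regular conditional expectation has a canonical pointwise version and the identity holds for every $x$. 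There is no genuine obstacle beyond this bookkeeping.
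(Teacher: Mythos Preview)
Your proposal is correct and follows essentially the same approach as the paper, which simply states that the chain of equalities ``follows immediately from the definitions of $\tilde f$ and $f^\star$.'' Your version is more explicit---spelling out the linearity/measurability step and the well-definedness caveat for $\mathbb{E}[\bsx\mid\bsx_t]$---but the underlying argument is identical.
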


\begin{proof}
This follows immediately from the definitions of $\tilde f$ and $f^\star$.
\end{proof}

\begin{lemma}[One-step local bias]
\label{lem:local-bias}
Let $x\in\mathbb{R}^d$ be the state at time $\tau_k$. If the solver has order $p$, then
\begin{equation}
S_{\Delta_k}[\tilde f](x)-S_{\Delta_k}[f^\star](x)
=\Delta_k\big(\tilde f(x,\tau_k)-f^\star(x,\tau_k)\big)+O(|\Delta_k|^{p+1}).
\end{equation}
Consequently, conditioning on $\bsx_{\tau_k}=x$ and using Lemma~\ref{lem:unbiased-no-score},
\begin{equation}
\mathbb{E}\!\left[S_{\Delta_k}[\tilde f](x)-S_{\Delta_k}[f^\star](x)\,\middle|\,\bsx_{\tau_k}=x\right]
= O(|\Delta_k|^{p+1}).
\end{equation}
\end{lemma}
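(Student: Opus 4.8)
The plan is to derive the pathwise identity first and then obtain the conditional-mean bound by averaging, exploiting the explicit form of the surrogate field. The starting point is that the integral curves of $\tilde f(\cdot,\tau)=(\,\cdot-\bsx)/\tau$ are straight lines through $\bsx$: the solution of $\dot y_\sigma=\tilde f(y_\sigma,\sigma)$ with $y_{\tau_k}=x$ is the affine map $\sigma\mapsto\bsx+\tfrac{x-\bsx}{\tau_k}\sigma$. Hence the exact time-$\Delta_k$ flow of $\tilde f$ is $x\mapsto x+\Delta_k\tilde f(x,\tau_k)$, and every higher Taylor coefficient of this flow vanishes — equivalently $\partial_\tau\tilde f+(\partial_x\tilde f)\tilde f\equiv 0$ and, by the usual elementary-differential induction, all elementary differentials of $\tilde f$ of order $\ge 2$ vanish. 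A consistent order-$p$ one-step method therefore reproduces this flow up to at most its intrinsic $O(|\Delta_k|^{p+1})$ truncation error; in fact, for any Runge--Kutta scheme with row sums $\sum_j a_{ij}=c_i$ one checks directly that all internal stages collapse to the common vector $\tilde f(x,\tau_k)$, so $S_{\Delta_k}[\tilde f](x)=x+\Delta_k\tilde f(x,\tau_k)$ holds exactly.

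The remaining steps are: (i) record this expansion for the $\tilde f$-side; (ii) invoke the standard local-error estimate for the order-$p$, zero-stable solver applied to the Lipschitz field $f^\star$, namely $S_{\Delta_k}[f^\star](x)=\varphi^{f^\star}_{\Delta_k}(x)+O(|\Delta_k|^{p+1})$, and Taylor-expand the exact $f^\star$-flow as $\varphi^{f^\star}_{\Delta_k}(x)=x+\Delta_k f^\star(x,\tau_k)+R_k(x)$; (iii) subtract, so the $x$-terms cancel and the first-order terms combine into $\Delta_k(\tilde f(x,\tau_k)-f^\star(x,\tau_k))$, giving the pathwise identity with remainder $-R_k(x)+O(|\Delta_k|^{p+1})$; and (iv) condition on $\bsx_{\tau_k}=x$ and take expectations, using Lemma~\ref{lem:unbiased-no-score} to annihilate the $\Delta_k$-order term ($\mathbb{E}[\tilde f(x,\tau_k)\mid\bsx_{\tau_k}=x]=f^\star(x,\tau_k)$), leaving only the higher-order remainder.

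The step that needs the most care — and the main obstacle — is pinning down the order of $R_k(x)$. The $\tilde f$-side contributes nothing beyond the solver's own $O(|\Delta_k|^{p+1})$ error because its flow is affine; but the $f^\star$-side carries the curvature of the probability-flow trajectory, $R_k(x)=\tfrac12\Delta_k^2(\partial_\tau f^\star+(\partial_x f^\star)f^\star)(x,\tau_k)+O(|\Delta_k|^3)$, which is generically nonzero and, being deterministic given $\bsx_{\tau_k}=x$, is not removed by the conditional expectation in step (iv). Two routes close this gap: either restrict to $p=1$, where for explicit Euler $S_{\Delta_k}[f^\star](x)=x+\Delta_k f^\star(x,\tau_k)$ exactly so that $R_k\equiv 0$ and the per-step bias is exactly $O(|\Delta_k|^{p+1})$; or, for higher-order schemes, carry the curvature terms through the composition $\Psi^{\mathrm{sol}}_{t\to u}$ and absorb their accumulation into the global bound of Theorem~\ref{thm:closed_form_dynamics} rather than into this local lemma. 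I would organize the write-up around this dichotomy, proving the clean per-step statement where it holds and making the routing of the residual curvature explicit otherwise.
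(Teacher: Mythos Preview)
Your route is substantially more careful than the paper's. The paper gives a two-line argument: it asserts that ``by definition of an order-$p$ one-step solver, for any drift $f$, $S_{\Delta_k}[f](x) = x + \Delta_k f(x,\tau_k) + O(|\Delta_k|^{p+1})$,'' subtracts the $\tilde f$- and $f^\star$-versions so that the $x$-terms cancel, and then conditions on $\bsx_{\tau_k}=x$ to kill the linear term via Lemma~\ref{lem:unbiased-no-score}. It never invokes the affine structure of the $\tilde f$-flow and does not distinguish $p=1$ from $p\ge 2$.

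The subtlety you flag is real: ``order $p$'' means the solver matches the \emph{exact flow} to $O(|\Delta_k|^{p+1})$, not that it matches the first-order Euler step to that accuracy. For $p\ge 2$ the flow's own Taylor terms $\tfrac12\Delta_k^2(\partial_\tau f+(\partial_x f)f)+\cdots$ are therefore present in $S_{\Delta_k}[f]$ and do not cancel between the $\tilde f$- and $f^\star$-updates. Your use of the affine $\tilde f$-flow to collapse that side exactly, and your isolation of the surviving curvature remainder $R_k(x)$ on the $f^\star$-side --- deterministic given $\bsx_{\tau_k}=x$, hence not removed by the conditional expectation --- pinpoints a gap the paper's short proof glosses over. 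Your proposed dichotomy (clean per-step statement for $p=1$, with Euler giving $R_k\equiv 0$; for higher $p$, route the residual curvature into the global estimate of Theorem~\ref{thm:closed_form_dynamics}) is the honest resolution; the paper simply asserts the lemma uniformly in $p$ without addressing it.
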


\begin{proof}
By definition of an order-$p$ one-step solver, for any drift $f$,
\begin{equation}
S_{\Delta_k}[f](x) = x + \Delta_k f(x,\tau_k) + O(|\Delta_k|^{p+1}).
\end{equation}
Subtracting the two cases $f=\tilde f$ and $f=f^\star$ gives
\begin{equation}
S_{\Delta_k}[\tilde f](x)-S_{\Delta_k}[f^\star](x)
=\Delta_k\big(\tilde f(x,\tau_k)-f^\star(x,\tau_k)\big)+O(|\Delta_k|^{p+1}).
\end{equation}
Taking conditional expectation w.r.t.\ $\bsx_{\tau_k}=x$ and using Lemma~\ref{lem:unbiased-no-score} shows that the linear term vanishes, leaving
\begin{equation}
\mathbb{E}\!\left[S_{\Delta_k}[\tilde f](x)-S_{\Delta_k}[f^\star](x)\,\middle|\,\bsx_{\tau_k}=x\right]
= O(|\Delta_k|^{p+1}).
\end{equation}
\end{proof}

\begin{proposition}[Global state error]
\label{prop:global-state-error}
Let the solver trajectories be
\begin{equation}
X_{k+1}^\star=S_{\Delta_k}[f^\star](X_k^\star),\qquad
\widetilde X_{k+1}=S_{\Delta_k}[\tilde f](\widetilde X_k),\qquad
X_0^\star=\widetilde X_0=\bsx_t.
\end{equation}
If $f^\star(\cdot,\tau)$ is Lipschitz in $x$ and the solver is zero-stable, then there exist constants $C,C'$ (independent of $h$) such that
\begin{equation}
\mathbb{E}\|X_{k+1}^\star-\widetilde X_{k+1}\|
\;\le\; (1+C|\Delta_k|)\,\mathbb{E}\|X_k^\star-\widetilde X_k\|
\;+\; C'|\Delta_k|^{p+1}.
\end{equation}
Consequently, over the finite horizon $[0,T]$,
\begin{equation}
\mathbb{E}\|X_K^\star-\widetilde X_K\| \;\le\; C_T\,h^{p},
\end{equation}
where $C_T$ depends on $T$ and the Lipschitz/stability constants but not on $h$.
\end{proposition}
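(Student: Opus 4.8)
I would run a discrete Grönwall (error‑accumulation) argument at the level of a single solver step, after which the global bound follows by unrolling. Set $e_k := X_k^\star - \widetilde X_k$, so $e_0=0$, and on step $k$ insert the hybrid iterate $S_{\Delta_k}[f^\star](\widetilde X_k)$ to obtain the splitting
\begin{equation}
e_{k+1} = \underbrace{S_{\Delta_k}[f^\star](X_k^\star) - S_{\Delta_k}[f^\star](\widetilde X_k)}_{A_k}
\;+\; \underbrace{S_{\Delta_k}[f^\star](\widetilde X_k) - S_{\Delta_k}[\tilde f](\widetilde X_k)}_{B_k},
\end{equation}
so that $A_k$ transports the error already accumulated and $B_k$ is the fresh discrepancy injected by using $\tilde f$ in place of $f^\star$ on this step. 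For $A_k$ I would invoke zero‑stability: an order‑$p$ one‑step method driven by the Lipschitz field $f^\star$ has the form $\mathrm{id}+\Delta_k\Psi_{f^\star}$ with an increment map whose Lipschitz constant is controlled by $\mathrm{Lip}(f^\star)$, hence $S_{\Delta_k}[f^\star]$ is $(1+C|\Delta_k|)$‑Lipschitz and $\|A_k\|\le(1+C|\Delta_k|)\|e_k\|$ pointwise.

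For $B_k$ I would condition on $\widetilde X_k$. It helps to note first that the surrogate solver reproduces the forward interpolant exactly, $\widetilde X_k=\bsx+\tau_k\bsz$, since the surrogate drift $\tilde f(y,\tau)=(y-\bsx)/\tau$ is stationary along its own trajectories and is therefore integrated without error by a consistent one‑step method; conditioning on $\widetilde X_k=\bsx_{\tau_k}$ is then precisely the conditioning appearing in Lemmas~\ref{lem:unbiased-no-score} and~\ref{lem:local-bias}. Lemma~\ref{lem:local-bias} expands $B_k=-\Delta_k(\tilde f-f^\star)(\bsx_{\tau_k},\tau_k)+O(|\Delta_k|^{p+1})$, and Lemma~\ref{lem:unbiased-no-score} annihilates the conditional mean of the leading term, giving $\|\mathbb{E}[B_k\mid\bsx_{\tau_k}]\|=O(|\Delta_k|^{p+1})$. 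Feeding both estimates into the splitting and using the tower property should produce the stated one‑step recursion $\mathbb{E}\|e_{k+1}\|\le(1+C|\Delta_k|)\mathbb{E}\|e_k\|+C'|\Delta_k|^{p+1}$; unrolling with $e_0=0$, $\prod_j(1+C|\Delta_j|)\le e^{C\sum_j|\Delta_j|}\le e^{CT}$ and $\sum_k|\Delta_k|^{p+1}\le h^p\sum_k|\Delta_k|\le(T-u)h^p$ then yields $\mathbb{E}\|X_K^\star-\widetilde X_K\|\le e^{CT}C'(T-u)h^p$, with the constant independent of $h$.

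The step I expect to be the main obstacle is the passage from the conditional‑mean bound $\|\mathbb{E}[B_k\mid\bsx_{\tau_k}]\|=O(|\Delta_k|^{p+1})$ to genuine control of $\mathbb{E}\|e_{k+1}\|$: the vector $B_k$ is itself only $O(|\Delta_k|)$ in norm, because $\tilde f-f^\star$ is an $O(1)$ random field whose smallness resides entirely in its conditional mean, so a term‑by‑term bound in $L^1$ would accumulate $\sum_kO(|\Delta_k|)=O(1)$ and ruin the rate. The fix is to propagate the mean‑zero fluctuations coherently rather than bounding them one step at a time: track $\mathbb{E}\|e_k\|^2$, expand $\|e_{k+1}\|^2=\|A_k\|^2+2\langle A_k,B_k\rangle+\|B_k\|^2$, exploit that $A_k$ is $\sigma(\bsx_{\tau_k})$‑measurable so that $\mathbb{E}[\langle A_k,B_k\rangle\mid\bsx_{\tau_k}]=\langle A_k,\mathbb{E}[B_k\mid\bsx_{\tau_k}]\rangle$ is $O(|\Delta_k|^{p+1})\|A_k\|$, bound $\mathbb{E}\|B_k\|^2=O(|\Delta_k|^2)$ using the bounded‑second‑moment hypothesis, and close a quadratic Grönwall recursion before finishing with Jensen's inequality $\mathbb{E}\|e_K\|\le(\mathbb{E}\|e_K\|^2)^{1/2}$. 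Making this variance accounting tight enough to recover the full $h^p$ rate, rather than the $h^{1/2}$ that a naive argument produces, is where the zero‑stability and bounded‑moment assumptions must be used at full strength; a secondary technical point is that for $p\ge2$ the one‑step difference $S_{\Delta_k}[f^\star]-S_{\Delta_k}[\tilde f]$ also carries the higher‑order increments of the $f^\star$‑solver at orders $|\Delta_k|^2,\dots,|\Delta_k|^p$, so applying Lemma~\ref{lem:local-bias} cleanly requires routing through the exact $f^\star$‑flow as an intermediary and absorbing those terms into the $O(|\Delta_k|^{p+1})$ remainder.
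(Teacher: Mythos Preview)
Your splitting $e_{k+1}=A_k+B_k$, the $(1+C|\Delta_k|)$-Lipschitz bound on $A_k$ from zero-stability, the appeal to Lemma~\ref{lem:local-bias} for $B_k$, and the discrete Gr\"onwall unrolling with $\prod_j(1+C|\Delta_j|)\le e^{CT}$ and $\sum_k|\Delta_k|^{p+1}\le h^p(t-u)$ are exactly the paper's argument. Where you diverge is in worrying about $\mathbb{E}\|B_k\|$ versus $\|\mathbb{E}[B_k\mid\bsx_{\tau_k}]\|$: the paper does not make this distinction and simply asserts $\mathbb{E}\|B_k\|=O(|\Delta_k|^{p+1})$ ``by Lemma~\ref{lem:local-bias} and the law of total expectation,'' then proceeds with the $L^1$ recursion. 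So the subtlety you raise is genuine, but the paper's proof does not resolve it either---it treats the conditional-mean bound from Lemma~\ref{lem:local-bias} as if it were an expected-norm bound.

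Your proposed $L^2$ repair has its own snag, however: $A_k$ is \emph{not} $\sigma(\bsx_{\tau_k})$-measurable. The ideal iterate $X_k^\star$ is a deterministic function of the starting point $\bsx_t=\bsx+t\bsz$, whereas $\widetilde X_k=\bsx_{\tau_k}=\bsx+\tau_k\bsz$; conditioning on $\bsx_{\tau_k}$ does not determine $\bsx_t$, so $A_k$ still carries randomness (through $\bsz$) that is correlated with the fluctuation $\tilde f-f^\star=\bsz-f^\star(\bsx_{\tau_k},\tau_k)$ inside $B_k$. Hence $\mathbb{E}[\langle A_k,B_k\rangle\mid\bsx_{\tau_k}]$ does not factor as $\langle A_k,\mathbb{E}[B_k\mid\bsx_{\tau_k}]\rangle$, and the cross-term cancellation you rely on does not go through as stated. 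If you want to salvage the $h^p$ rate rigorously, you would need to condition on a larger $\sigma$-algebra (e.g.\ on $\bsx_t$ itself, or jointly on $(\bsx,\bsz)$ up to the relevant information) and re-examine what unbiasedness statement survives.
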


\begin{proof}
We start from the standard decomposition
\begin{equation}
X_{k+1}^\star-\widetilde X_{k+1}
=\big(S_{\Delta_k}[f^\star](X_k^\star)-S_{\Delta_k}[f^\star](\widetilde X_k)\big)
+\big(S_{\Delta_k}[f^\star](\widetilde X_k)-S_{\Delta_k}[\tilde f](\widetilde X_k)\big).
\end{equation}
By Lipschitz regularity of $f^\star$ and zero-stability of the scheme, there exists $C>0$ such that
\begin{equation}
\|S_{\Delta_k}[f^\star](x)-S_{\Delta_k}[f^\star](y)\|
\le (1+C|\Delta_k|)\,\|x-y\|, \qquad \forall\,x,y\in\mathbb{R}^d.
\end{equation}
For instance, for explicit Euler $S_{\Delta}[f](x)=x+\Delta f(x,\tau_k)$ and thus
\begin{equation}
\|S_{\Delta}[f^\star](x)-S_{\Delta}[f^\star](y)\|
=\|(x-y)+\Delta(f^\star(x,\tau_k)-f^\star(y,\tau_k))\|
\le (1+L_f|\Delta|)\|x-y\|.
\end{equation}
General one-step methods admit the same bound with a (method-dependent) constant $C$.

By Lemma~\ref{lem:local-bias} and the law of total expectation,
\begin{equation}
\mathbb{E}\big\|S_{\Delta_k}[f^\star](\widetilde X_k)-S_{\Delta_k}[\tilde f](\widetilde X_k)\big\|
=O(|\Delta_k|^{p+1}).
\end{equation}
Combining the two parts and taking expectations yields the recursion
\begin{equation}
\mathbb{E}\|X_{k+1}^\star-\widetilde X_{k+1}\|
\le (1+C|\Delta_k|)\,\mathbb{E}\|X_k^\star-\widetilde X_k\| + C'|\Delta_k|^{p+1}.
\end{equation}

Denote $E_k:=\mathbb{E}\|X_k^\star-\widetilde X_k\|$, $a_k:=C|\Delta_k|$, $b_k:=C'|\Delta_k|^{p+1}$. Then
\begin{equation}
E_{k+1}\le (1+a_k)E_k+b_k.
\end{equation}
Unrolling the recursion gives
\begin{equation}
E_K \le \Big(\prod_{j=0}^{K-1}(1+a_j)\Big)E_0
+ \sum_{i=0}^{K-1}\Big(\prod_{j=i+1}^{K-1}(1+a_j)\Big)b_i.
\end{equation}
Since $E_0=0$, the first term vanishes. For the product term, use $1+x\le e^x$ to obtain
\begin{equation}
\prod_{j=i+1}^{K-1}(1+a_j)\le \exp\!\Big(\sum_{j=i+1}^{K-1}a_j\Big)
=\exp\!\Big(C\sum_{j=i+1}^{K-1}|\Delta_j|\Big)\le \exp(CT)=:C_T.
\end{equation}
For the sum, note $|\Delta_i|^{p+1}\le h^p|\Delta_i|$, hence
\begin{equation}
\sum_{i=0}^{K-1} b_i \;\le\; C'\sum_{i=0}^{K-1}|\Delta_i|^{p+1}
\;\le\; C'h^p\sum_{i=0}^{K-1}|\Delta_i|
\;=\; C'h^p(t-u) \;\le\; C'Th^p.
\end{equation}
Combining the two bounds,
\begin{equation}
E_K \;\le\; C_T \sum_{i=0}^{K-1} b_i \;\le\; (C_T C' T)\, h^p \;=:\; \tilde C_T\, h^p,
\end{equation}
which proves the claim.
\end{proof}

\begin{proposition}[Decoder discrepancy]
\label{prop:decoder}
If $\Phi_{{\boldsymbol{\theta}}^-}(\cdot,u,s)$ is Lipschitz in $x$ with constant $L$, then
\begin{equation}
\mathbb{E}\|\Phi_{{\boldsymbol{\theta}}^-}(X_K^\star,u,s)-\Phi_{{\boldsymbol{\theta}}^-}(\widetilde X_K,u,s)\|
\le L\,\mathbb{E}\|X_K^\star-\widetilde X_K\| = O(h^p),
\end{equation}
and, by Jensen and bounded moments,
\begin{equation}
\mathbb{E}\|\Phi_{{\boldsymbol{\theta}}^-}(X_K^\star,u,s)-\Phi_{{\boldsymbol{\theta}}^-}(\widetilde X_K,u,s)\|^2
= O(h^{2p}).
\end{equation}
\end{proposition}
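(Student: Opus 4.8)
The plan is to dispatch the two displayed claims in order: the first is an immediate consequence of Lipschitz continuity combined with the global state-error bound of Proposition~\ref{prop:global-state-error}, and the second reduces to a \emph{mean-square} version of that same bound. For the first claim, since $\Phi_{\boldsymbol{\theta}^-}(\cdot,u,s)$ is $L$-Lipschitz, for every realization of the shared randomness ($\bsx\sim p_{\mathrm{data}}$, $\bsz\sim\mathcal N(0,I)$, and any solver randomness, all common to the two trajectories because $X_0^\star=\widetilde X_0=\bsx_t$) we have the pointwise inequality $\|\Phi_{\boldsymbol{\theta}^-}(X_K^\star,u,s)-\Phi_{\boldsymbol{\theta}^-}(\widetilde X_K,u,s)\|\le L\,\|X_K^\star-\widetilde X_K\|$. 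Taking expectations and invoking $\mathbb E\|X_K^\star-\widetilde X_K\|\le C_T h^p$ from Proposition~\ref{prop:global-state-error} gives $\mathbb E\|\Phi_{\boldsymbol{\theta}^-}(X_K^\star,u,s)-\Phi_{\boldsymbol{\theta}^-}(\widetilde X_K,u,s)\|\le LC_T h^p=O(h^p)$.

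For the second claim I would square the same pointwise Lipschitz estimate, so that it suffices to establish the mean-square state error $\mathbb E\|X_K^\star-\widetilde X_K\|^2=O(h^{2p})$. I would obtain this by replaying the Gronwall argument in the proof of Proposition~\ref{prop:global-state-error} with the squared norm: the one-step contraction factor $(1+C|\Delta_k|)$ squares to $1+O(|\Delta_k|)$, and the local surrogate term $S_{\Delta_k}[\tilde f](\cdot)-S_{\Delta_k}[f^\star](\cdot)$ is controlled in conditional mean square by $O(|\Delta_k|^{2p+2})$ using the conditional unbiasedness of $\tilde f$ (Lemma~\ref{lem:unbiased-no-score}) together with the bounded-second-moment hypothesis on solver states. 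The resulting recursion $\mathbb E\|e_{k+1}\|^2\le(1+O(|\Delta_k|))\,\mathbb E\|e_k\|^2+O(|\Delta_k|^{2p+2})$ with $e_0=0$ unrolls exactly as before, using $\sum_k|\Delta_k|^{2p+2}\le h^{2p}\sum_k|\Delta_k|^2\le Th^{2p}$ and $\prod_k(1+O(|\Delta_k|))\le e^{O(T)}$, to give $\mathbb E\|e_K\|^2\le\tilde C_T h^{2p}$; combining with the squared Lipschitz bound yields $\mathbb E\|\Phi_{\boldsymbol{\theta}^-}(X_K^\star,u,s)-\Phi_{\boldsymbol{\theta}^-}(\widetilde X_K,u,s)\|^2=O(h^{2p})$. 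As a fallback, the weaker statement $O(h^p)$ follows cheaply from the first claim by Jensen together with the uniform-in-$h$ bound $\mathbb E\|\Phi_{\boldsymbol{\theta}^-}(X_K^\star,u,s)-\Phi_{\boldsymbol{\theta}^-}(\widetilde X_K,u,s)\|^2\le 2L^2(\mathbb E\|X_K^\star\|^2+\mathbb E\|\widetilde X_K\|^2)=O(1)$, but the sharp rate requires the mean-square recursion above.

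The step I expect to be the main obstacle is precisely the mean-square local-error estimate for the surrogate solver, that is, showing that after exploiting conditional unbiasedness the one-step discrepancy contributes at order $|\Delta_k|^{2p+2}$ rather than merely $|\Delta_k|^2$. Naively $\tilde f(\bsx_t,t)-f^\star(\bsx_t,t)=(\mathbb E[\bsx\mid\bsx_t]-\bsx)/t$ is only $O(1)$ in magnitude with $O(1)$ conditional variance, so the cancellation is a weak-type phenomenon (in the sense of stochastic numerics): one must verify that the solver state entering step $k$ still carries the conditional law for which Lemma~\ref{lem:unbiased-no-score} applies, so that the martingale-like increments telescope rather than accumulate their variances. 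This is where the bounded-second-moment and Lipschitz hypotheses of Theorem~\ref{thm:closed_form_dynamics} do the real work; once the mean-square state error is in hand, passing it through the Lipschitz decoder $\Phi_{\boldsymbol{\theta}^-}$ is routine.
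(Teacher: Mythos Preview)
Your argument for the first display is correct and coincides with the paper's: pointwise Lipschitz continuity of $\Phi_{\boldsymbol{\theta}^-}(\cdot,u,s)$ followed by the $O(h^p)$ first-moment bound from Proposition~\ref{prop:global-state-error}. The paper's entire proof is in fact the one-line sketch ``Immediate from the Lipschitz property of $\Phi_{\boldsymbol{\theta}^-}$, Proposition~\ref{prop:global-state-error}, and Jensen's inequality,'' so on this part you and the paper agree.

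For the second display you go well beyond the paper, which offers no details beyond the words ``Jensen and bounded moments.'' Your mean-square Gronwall plan is the natural attempt, but the step you yourself flag as the obstacle is a real gap, and the resolution you sketch does not work. Conditional unbiasedness (Lemma~\ref{lem:unbiased-no-score}) annihilates the conditional \emph{mean} of the leading term $\Delta_k(\tilde f-f^\star)$, but not its conditional \emph{variance}, which is $|\Delta_k|^2\,\mathrm{Var}(\bsx\mid\bsx_{\tau_k})/\tau_k^2=O(|\Delta_k|^2)$ since, as you note, $\tilde f-f^\star=(\mathbb E[\bsx\mid\bsx_t]-\bsx)/t$ has $O(1)$ conditional variance. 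Hence the one-step surrogate error has conditional mean square $O(|\Delta_k|^2)$, not $O(|\Delta_k|^{2p+2})$. The martingale structure you invoke does make cross-terms vanish so that variances \emph{add} rather than compound, but adding is already fatal: $\sum_k O(|\Delta_k|^2)\le h\sum_k|\Delta_k|=O(h)$, which yields only $\mathbb E\|e_K\|^2=O(h)$. Martingale increments do not ``telescope'' in $L^2$; their squared norms sum.

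Your fallback does not rescue $O(h^p)$ for the second moment either: from $\mathbb E\|Y\|=O(h^p)$ and $\mathbb E\|Y\|^2=O(1)$ one cannot deduce $\mathbb E\|Y\|^2=O(h^p)$ without an almost-sure or higher-moment bound to interpolate against, and Jensen goes the wrong way (it lower-bounds $\mathbb E\|Y\|^2$ by $(\mathbb E\|Y\|)^2$). In fairness, the paper's own appeal to ``Jensen and bounded moments'' is no more convincing on this point; the sharp $O(h^{2p})$ rate in the second display appears to require hypotheses beyond those stated in Theorem~\ref{thm:closed_form_dynamics}, and neither the paper's sketch nor your proposal closes that gap.
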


\begin{proof}[Proof sketch]
Immediate from the Lipschitz property of $\Phi_{{\boldsymbol{\theta}}^-}$, 
Proposition~\ref{prop:global-state-error}, and Jensen's inequality.
\end{proof}

\begin{proposition}[Objective gap]
\label{prop:objective-gap}
Define 
\begin{align}
\mathcal L_{\mathrm{ideal}}({\boldsymbol{\theta}}) := \mathbb{E}\big\|\Phi_{\boldsymbol{\theta}}(\bsx_t,t,s)- \Phi_{{\boldsymbol{\theta}}^-}(X_K^\star,u,s)\big\|^2, \\
\mathcal L_{\mathrm{prac}}({\boldsymbol{\theta}}) :=
\mathbb{E}\big\|\Phi_{\boldsymbol{\theta}}(\bsx_t,t,s)-
\Phi_{{\boldsymbol{\theta}}^-}(\widetilde X_K,u,s)\big\|^2.
\end{align}
Then
\begin{equation}
\big|\mathcal L_{\mathrm{prac}}({\boldsymbol{\theta}})-\mathcal L_{\mathrm{ideal}}({\boldsymbol{\theta}})\big|
= O(h^{p}).
\end{equation}
\end{proposition}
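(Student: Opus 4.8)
The plan is to reduce the difference of the two quadratic objectives to a single inner‑product expression, control it by Cauchy--Schwarz, and then feed in the second‑moment estimate from Proposition~\ref{prop:decoder} for the $O(h^p)$ rate. Concretely, I would abbreviate $A := \Phi_{\boldsymbol{\theta}}(\bsx_t,t,s)$, $B^\star := \Phi_{\boldsymbol{\theta}^-}(X_K^\star,u,s)$ and $\widetilde B := \Phi_{\boldsymbol{\theta}^-}(\widetilde X_K,u,s)$, so that $\mathcal L_{\mathrm{ideal}}(\boldsymbol{\theta}) = \mathbb{E}\|A-B^\star\|^2$ and $\mathcal L_{\mathrm{prac}}(\boldsymbol{\theta}) = \mathbb{E}\|A-\widetilde B\|^2$; the two estimators share the same first argument $A$ and differ only in the solver‑produced state fed to the (shared, EMA) decoder.

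The steps, in order: (i) use the elementary identity $\|u\|^2-\|v\|^2 = \langle u-v,\,u+v\rangle$ with $u=A-\widetilde B$, $v=A-B^\star$, together with $u-v = B^\star-\widetilde B$ and the triangle inequality, to obtain the pointwise bound
\[
\bigl|\,\|A-\widetilde B\|^2-\|A-B^\star\|^2\,\bigr|\;\le\;\|B^\star-\widetilde B\|\,\bigl(2\|A\|+\|B^\star\|+\|\widetilde B\|\bigr);
\]
(ii) take expectations and apply Cauchy--Schwarz, giving
\[
\bigl|\mathcal L_{\mathrm{prac}}(\boldsymbol{\theta})-\mathcal L_{\mathrm{ideal}}(\boldsymbol{\theta})\bigr|\;\le\;\sqrt{\mathbb{E}\|B^\star-\widetilde B\|^2}\;\sqrt{\mathbb{E}\bigl(2\|A\|+\|B^\star\|+\|\widetilde B\|\bigr)^2};
\]
(iii) bound the first factor by Proposition~\ref{prop:decoder}, which already gives $\mathbb{E}\|B^\star-\widetilde B\|^2 = O(h^{2p})$, hence its square root is $O(h^p)$; (iv) bound the second factor by a constant independent of $h$, using the standing hypotheses of Theorem~\ref{thm:closed_form_dynamics} — $\bsx_t=\bsx+t\bsz$ has finite second moment (since $p_{\mathrm{data}}$ does and $t\le T$), the solver states $X_K^\star,\widetilde X_K$ have bounded second moments uniformly in $h$, and $\Phi_{\boldsymbol{\theta}},\Phi_{\boldsymbol{\theta}^-}$ are Lipschitz — so that by the $L^2$ triangle inequality $\mathbb{E}\bigl(2\|A\|+\|B^\star\|+\|\widetilde B\|\bigr)^2 = O(1)$. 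Multiplying (iii) and (iv) yields $\bigl|\mathcal L_{\mathrm{prac}}-\mathcal L_{\mathrm{ideal}}\bigr| = O(h^p)$, as claimed.

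The main obstacle, such as it is, is the integrability bookkeeping in step (iv): one must verify that the ``cross'' factor $\mathbb{E}\bigl(2\|A\|+\|B^\star\|+\|\widetilde B\|\bigr)^2$ is $O(1)$ \emph{uniformly} in the step size, so that the $O(h^p)$ decay of the other factor is not swallowed by a factor that blows up as $h\to 0$. This is precisely what the bounded‑second‑moment assumption on solver states and the Lipschitz assumptions on the decoders are for; once those are invoked, the remaining manipulations are routine, and I would keep the write‑up brief, citing only Proposition~\ref{prop:decoder} and these standing assumptions.
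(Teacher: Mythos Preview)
Your proposal is correct and follows essentially the same route as the paper: both use the polarization identity $\|u\|^2-\|v\|^2=\langle u-v,u+v\rangle$, apply Cauchy--Schwarz in $L^2$, invoke Proposition~\ref{prop:decoder} for the $O(h^p)$ factor, and the bounded-second-moment assumption for the $O(1)$ factor. The only cosmetic difference is that the paper keeps the cross term as $\|B^\star+\widetilde B-2A\|$ rather than splitting it via the triangle inequality into $2\|A\|+\|B^\star\|+\|\widetilde B\|$; your extra step is harmless and your integrability bookkeeping in (iv) is, if anything, slightly more explicit than the paper's one-line appeal to the standing assumptions.
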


\begin{proof}
Subtracting the two objectives yields
\begin{align}
&\mathcal L_{\mathrm{prac}}({\boldsymbol{\theta}})-\mathcal L_{\mathrm{ideal}}({\boldsymbol{\theta}}) = \nonumber \\  
& \qquad \mathbb{E} \left[\|\Phi_{\boldsymbol{\theta}}(\bsx_t,t,s)-\Phi_{{\boldsymbol{\theta}}^-}(\widetilde X_K,u,s)\|^2 -\|\Phi_{\boldsymbol{\theta}}(\bsx_t,t,s)-\Phi_{{\boldsymbol{\theta}}^-}(X_K^\star,u,s)\|^2\right].
\end{align}
Expanding the difference gives
\begin{align}
&\|\Phi_{\boldsymbol{\theta}}(\bsx_t,t,s)-\Phi_{{\boldsymbol{\theta}}^-}(\widetilde X_K,u,s)\|^2
-\|\Phi_{\boldsymbol{\theta}}(\bsx_t,t,s)-\Phi_{{\boldsymbol{\theta}}^-}(X_K^\star,u,s)\|^2 = \nonumber \\  
& \qquad \big\langle \Phi_{{\boldsymbol{\theta}}^-}(X_K^\star,u,s)-\Phi_{{\boldsymbol{\theta}}^-}(\widetilde X_K,u,s),\,
\Phi_{{\boldsymbol{\theta}}^-}(X_K^\star,u,s)+\Phi_{{\boldsymbol{\theta}}^-}(\widetilde X_K,u,s)-2\Phi_{\boldsymbol{\theta}}(\bsx_t,t,s)\big\rangle.
\end{align}
Applying the Cauchy--Schwarz inequality yields
\begin{align}
&\big|\mathcal L_{\mathrm{prac}}({\boldsymbol{\theta}})-\mathcal L_{\mathrm{ideal}}({\boldsymbol{\theta}})\big|
\le \nonumber \\
&\sqrt{\mathbb{E}\|\Phi_{{\boldsymbol{\theta}}^-}(X_K^\star,u,s)-\Phi_{{\boldsymbol{\theta}}^-}(\widetilde X_K,u,s)\|^2}\,
\cdot \sqrt{\mathbb{E}\|\Phi_{{\boldsymbol{\theta}}^-}(X_K^\star,u,s)+\Phi_{{\boldsymbol{\theta}}^-}(\widetilde X_K,u,s)-2\Phi_{\boldsymbol{\theta}}(\bsx_t,t,s)\|^2}.
\end{align}
The second factor is $O(1)$ uniformly in $h$ by the bounded-moment assumption,
while the first factor is $O(h^p)$ by Proposition~\ref{prop:decoder}.
Hence the product is $O(h^p)$.
\end{proof}

Combining Propositions~\ref{prop:global-state-error}--\ref{prop:objective-gap} proves that
\begin{equation}
\big|\,\mathcal L_{\mathrm{prac}}({\boldsymbol{\theta}})-\mathcal L_{\mathrm{ideal}}({\boldsymbol{\theta}})\,\big|
= O(h^{p}).
\end{equation}
which is the claim of Theorem~\ref{thm:closed_form_dynamics}. \qed

\subsection{Theoretical Rationale for the Score Function Approximation} \label{appendix: score_approx}


Theorem~\ref{thm:closed_form_dynamics} formally establishes that replacing the true vector field $f^\star$ with the surrogate $\tilde f$ changes the learning objective only by $O(h^p)$. 
Here we complement this result with intuition, showing why such an approximation is both natural and conceptually aligned with existing frameworks.

The core challenge when training our GTP from scratch is providing stable, on-trajectory supervision. 
The ideal consistency objective is to enforce that predictions from any two points on the same true ODE trajectory are identical. 
In principle, the model could generate these points itself by acting as its own "teacher" and numerically solving its learned ODE. 
However, when learning from scratch, this self-supervision process is inherently unstable. 
The initially random model produces highly inaccurate estimates of the underlying vector field, which are then used to generate its own supervision. 
This creates a vicious cycle where flawed targets lead to poor updates, causing further error propagation.

In contrast to our formulation with an explicit Inst Map, Consistency Training \citep{song2023consistency} compensates for the missing teacher by introducing a simple analytical supervision path (a straight line) from which paired samples can be drawn. 
The process, illustrated in Figure \ref{fig:trajectory}, is as follows:
\begin{enumerate}
    \item A simple, straight-line supervision path (the white line) is constructed via $\bsx_t = \bsx_0 + t \cdot \bsz$ by sampling a data point $\bsx_0$ and noise $\bsz$.
    \item A point $\bsx_t$ is sampled from this path. The ideal objective would require us to pair it with the corresponding point $\bsx_{t-\Delta t}$ on the same true ODE trajectory (the green curve). However, obtaining this point is intractable without a teacher model.
    \item To create a practical objective, Consistency Training instead samples the second point, which we denote $\bsx'_{t-\Delta t}$, directly from the simple white line.
    \item Each of these points now lies on its own true (but unknown) PF ODE trajectory. The green curve is the true trajectory passing through $\bsx_t$ while the blue curve is the true trajectory passing through $\bsx_{t-\Delta t}$.
    \item The model $\Phi$ is then tasked with predicting the data origin from each of these starting points, yielding $\hat{\bsx}_0 = \Phi(\bsx_t, t, 0)$ and $\hat{\bsx}'_0 = \Phi(\bsx_{t-\Delta t}, t-\Delta t, 0)$. The consistency loss enforces that these two predictions must match, even though they originated from different points on different true trajectories:
    \begin{equation}
    \mathcal{L}_{\text{CT}} = \mathbb{E} [ || \hat{\bsx}_0 - \hat{\bsx}'_0 ||^2_2]
    \end{equation}
\end{enumerate}

The profound implication of this objective can be understood intuitively: the model is being asked to produce the same output ($\bsx_0$) from two different inputs ($\bsx_t$ and $\bsx'_{t-\Delta t}$). 
For the model to succeed at this task across all possible data points $\bsx_0$ and noise vectors $\bsz$, it cannot learn any specific set of curved trajectories. 
Its only viable strategy is to learn a vector field whose solution paths are, on average, straight lines that align with the simple $\bsx_t = \bsx_0 + t \cdot \bsz$ supervision structure. 
This effectively "tames" the learning problem, forcing the model to learn a consistent mapping from any point on a noisy line back to its origin.

This leads to the deeper theoretical justification. 
A given noisy point $\bsx_t$ is ambiguous; it could have been generated by many different pairs of $(\bsx_0, \bsz)$. 
To minimize the consistency loss on average over the entire dataset, the model's output $\Phi(\bsx_t, t, 0)$ cannot learn to predict any single $\bsx_0$. 
Instead, it is implicitly forced to learn the optimal Bayes estimator: the conditional expectation $\mathbb{E}[\bsx_0 | \bsx_t]$. 
This learned denoiser is precisely the component needed to define the true, underlying PF ODE.
This process, where the model learns a single deterministic trajectory by averaging over all possible linear paths that could generate a point $\bsx_t$, is conceptually illustrated in Figure \ref{fig:gtp}.

To make this connection concrete, Figure~\ref{fig:trajectory2} illustrates the training objective of our GTP framework. 
While Consistency Training supervises the model indirectly by enforcing agreement between predictions from two noisy samples, our GTP formulation directly parameterizes the entire solution map $\Phi(\bsx_t, t, s)$ and enforces its self-consistency across intervals. 
Conceptually, this is nothing but the same principle as in Consistency Training: both objectives ensure that different ways of tracing back a noisy point must yield the same underlying origin. 
The key difference is one of perspective---Consistency Training views the model as a local denoiser (learning the map to $t=0$), whereas GTP treats the model as the generator of a global trajectory solution (learning the map to any $s$). 
Thus, the two formulations are theoretically equivalent in spirit, with GTP offering a more direct and integral-level realization of the same consistency idea.


A subtle point concerns the role of the step size. 
In Consistency Training \citep{song2023consistency}, the theoretical justification is given in the limit $\Delta t \to 0$, ensuring locally correct supervision. 
In our setting, the step size parameter $h$ is defined as the maximum interval between adjacent time points, so the requirement $h \to 0$ is directly analogous to their $\Delta t \to 0$ condition. 
The difference is that our Theorem~\ref{thm:closed_form_dynamics} makes this correspondence explicit by proving that, even for finite $h$, the surrogate objective deviates from the ideal one only by $O(h^p)$. 
Thus the infinitesimal-step reasoning of Consistency Training can be viewed as a special case of our more general analysis.

Finally, this principle is not an isolated finding but a direct instantiation of the same powerful idea that underpins the Flow Matching framework \citep{lipman2023flow}. 
Both approaches learn the complex, non-linear vector field of the true ODE by supervising it with a target derived from the conditional expectation of simple, analytical paths. 
They differ only in what the neural network explicitly learns: while Flow Matching trains a network to regress against this expected velocity, our GTP framework learns to approximate the solution map $\Phi(\bsx_t, t, s)$ itself (the integral). 
This deep connection confirms that our training strategy is not an ad-hoc simplification but a principled and theoretically sound method for learning the correct generative dynamics.



\begin{figure}[htb]
   \centering
   \includegraphics[width=\linewidth]{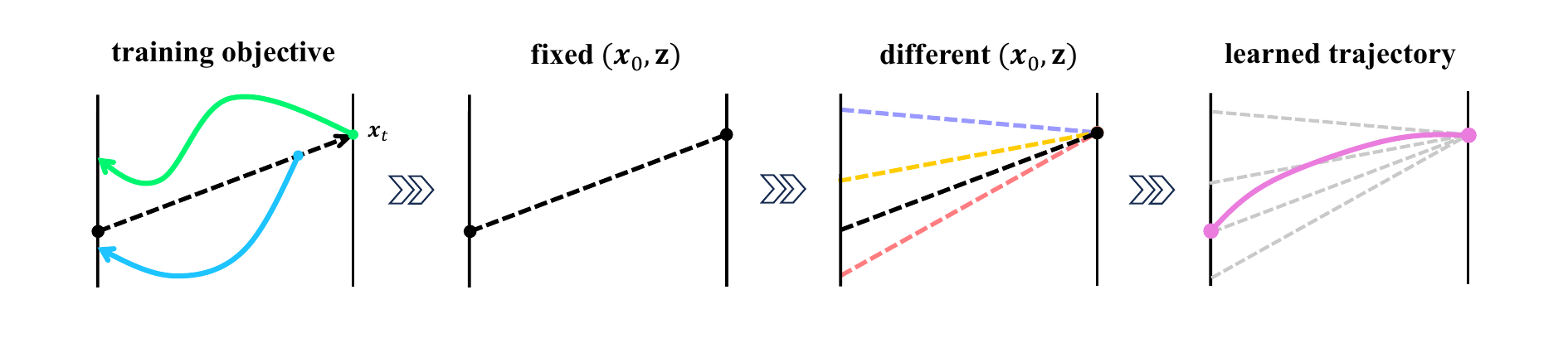}
   \caption{Illustration of the learning process. A given point $\bsx_t$ can be formed by many different linear paths (corresponding to different pairs of $(\bsx_0, \bsz)$). The model is trained to learn a single, deterministic "learned trajectory" that represents the conditional expectation of these paths. This forces the model to learn the true underlying generative dynamics.}
   \label{fig:gtp}
\end{figure}

\begin{figure}[htb]
  \centering
  \begin{subfigure}[b]{0.45\linewidth}
    \centering
    \includegraphics[width=\linewidth]{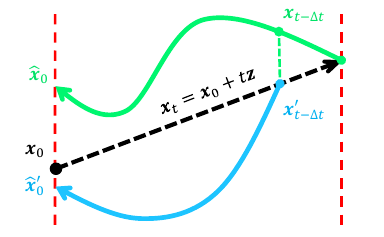}
    \caption{Illustration of the Consistency Training objective. The process uses a simple, analytical supervision path (white line) defined by $\bsx_t=\bsx_0+t\cdot \bsz$. Two points, $\bsx_t$ and $\bsx'_{t-\Delta t}$, are sampled from this path. Each lies on its own true (but unknown) PF ODE trajectory (green and blue curves). The consistency loss enforces that the model's predictions for the data origin, $\hat{\bsx}_0$ and $\hat{\bsx}'_0$, must match, even when starting from different points on different trajectories.}
    \label{fig:trajectory}
  \end{subfigure}
  \hfill
  \begin{subfigure}[b]{0.45\linewidth}
    \centering
    \includegraphics[width=\linewidth]{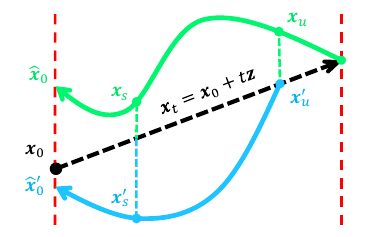}
    \caption{Illustration of the GTP Training objective. The process mirrors that of standard Consistency Training but is generalized for an arbitrary target time $s$. A student model's direct prediction from $\bsx_t$ to a target point $\bsx_s$ (green path) is trained to match a target model's two-step prediction via an intermediate point $\bsx_u$ (blue path). This generalization allows the model to learn the full solution map $\Phi(\bsx_t, t, s)$, not just the specialized map to the origin.}
    \label{fig:trajectory2}
  \end{subfigure}
  \caption{Comparison of training objectives. (a) Standard Consistency Training supervises predictions back to the origin. (b) GTP extends this principle by enforcing self-consistency across arbitrary intervals, enabling direct learning of the solution map.}
  \label{fig:trajectory_all}
\end{figure}

\subsection{Derivation of the Advantage-Weighted Objective} \label{appendix: derivation_advantage_weight}
This section provides the detailed theoretical derivation for the advantage-weighted learning objective presented in Section \ref{sec:value guidance}.

Our starting point is a standard objective in offline RL that seeks to maximize the Q-function while regularizing the learned policy $\pi$ to stay close to the dataset's behavior policy $\pi_{\mathrm{BC}}$ via a KL-divergence constraint:
\begin{equation}
\max_{\pi} \quad \mathbb{E}_{a \sim \pi(\cdot|s)} \left[Q(s, a) - \frac{1}{\eta} D_{\mathrm{KL}}(\pi(\cdot|s) || \pi_{\mathrm{BC}}(\cdot|s))\right]
\end{equation}
As shown by prior work in variational RL \citep{peters2010relative,abdolmaleki2018maximum,kumar2020conservative}, the optimal solution $\pi^*$ for this problem takes the form of the behavior policy, re-weighted by the exponentiated Q-function.
For greater conceptual clarity and numerical stability, this solution is typically expressed using the advantage function $A(s, a) = Q(s, a) - V(s)$:
\begin{equation}
\pi^*(a|s) = \frac{1}{Z(s)} \pi_{\mathrm{BC}}(a|s) \exp\left(\eta A(s, a)\right),
\end{equation}
where $Z(s)$ is the state-dependent normalization term.
This $\pi^*$ represents the ideal, value-improved target policy we wish our model to learn.
The task now becomes how to train our expressive generative policy $\pi_{\boldsymbol{\theta}}$ to match this optimal target $\pi^*$. 
The natural way to do so is to minimize the KL-divergence between them:
\begin{equation}
\min_{\boldsymbol{\theta}}  D_{\mathrm{KL}}(\pi^*(\cdot|s) || \pi_{\boldsymbol{\theta}}(\cdot|\bss)) = \max_{\boldsymbol{\theta}} \mathbb{E}_{a \sim \pi^*(\cdot|s)} [\log \pi_{\boldsymbol{\theta}}(\bsa|\bss)]
\end{equation}

To compute this expectation, we use importance sampling to switch from the intractable target distribution $\pi^*$ to the tractable dataset distribution $\pi_{\mathrm{BC}}$. 
The importance weight is $\pi^*(a|s) / \pi_{\mathrm{BC}}(a|s) = \exp(\eta A(s, a)) / Z(s)$. 
Substituting this into our objective gives:
\begin{equation}
\max_{\boldsymbol{\theta}} \quad \mathbb{E}_{(s,a) \sim \mathcal{D}} \left[ \frac{\exp\left(\eta A(s, a)\right)}{Z(s)} \log \pi_{\boldsymbol{\theta}}(a|s) \right]
\end{equation}
Here we arrive at the final crucial step. 
The normalization term $Z(s)$ is also independent of our optimization variable $\boldsymbol{\theta}$. 
Therefore, when taking the gradient with respect to $\boldsymbol{\theta}$, $Z(s)$ acts as a constant scaling factor and does not affect the location of the optimum. 
We can thus drop it from the optimization objective, which yields the final, practical form:
\begin{equation}
\max_{\boldsymbol{\theta}} \quad \mathbb{E}_{(s, a) \sim \mathcal{D}} \left[ \exp\left(\eta A(s, a)\right) \log \pi_{\boldsymbol{\theta}}(a|s) \right]
\end{equation}
While we write the loss here in terms of log-likelihood for clarity, 
the same exponential advantage weighting directly applies to any generative training loss, including diffusion and flow-matching objectives. 
This confirms that applying an exponential advantage weight to the log-likelihood objective of our generative policy is the theoretically correct implementation of the variational policy optimization framework.

\subsection{Additional Discussion on Actor Loss Formulations}
\label{appendix:actor-loss}

In Section~\ref{sec:ablation}, we compared our variational guidance against a baseline that linearly combines the generative loss with a Q-learning actor loss. 
Here we provide a more detailed discussion. 

\paragraph{Formulation.}
The baseline takes the form 
\begin{equation}
    \mathcal{L}_{\text{actor}} = \mathcal{L}_{\text{BC}} + \lambda \, \mathcal{L}_{Q},
\end{equation}
while our method instead adopts a weighted behavior cloning objective:
\begin{equation}
    \mathcal{L}_{\text{weighted-BC}} = 
    \mathbb{E}_{(s,a)\sim \mathcal{D}} \big[w(s,a)\, \mathcal{L}_{\text{BC}} \big],
\end{equation}
where $w(s,a)$ is given in Equation~\ref{eq:weight}.

\begin{lemma}
When $\mathcal{L}_{\mathrm{BC}}(\pi;\pi_\beta)$ is instantiated as a KL divergence 
$D_{\mathrm{KL}}(\pi_\beta(\cdot|s)\,\|\,\pi(\cdot|s))$, 
the linear-combination objective corresponds to a KL-regularized policy improvement whose optimizer
\begin{equation}
\pi^*(a|s) = \tfrac{1}{Z(s)}\,\pi_\beta(a|s)\,\exp\!\big(\lambda\,Q(s,a)\big).
\end{equation}
Training a parametric policy $\pi_\theta$ to match $\pi^*$ then leads to a weighted-BC update
\begin{equation}
\max_\theta\;\; \mathbb{E}_{a\sim \pi_\beta(\cdot|s)}
\!\left[\exp\!\big(\lambda\,Q(s,a)\big)\,\log \pi_\theta(a|s)\right],
\end{equation}
i.e., weighted behavior cloning with weights $w(s,a)\propto \exp(\lambda Q(s,a))$.
\end{lemma}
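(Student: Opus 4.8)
The plan is to reproduce, step for step, the derivation of Appendix~\ref{appendix: derivation_advantage_weight}, but with the raw action-value $Q$ in place of the advantage $A$, in two stages: (a) show that the linear-combination actor loss is the sample-based surrogate of a KL-regularized policy-improvement problem, and solve that problem in closed form over the simplex of conditional distributions to obtain the Gibbs-form target $\pi^*$; (b) fit the parametric policy $\pi_\theta$ to $\pi^*$ by a forward-KL (maximum-likelihood) projection, then change measure to $\pi_\beta$ to expose it as weighted behavior cloning.

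For stage (a), write $\mathcal{L}_Q(\pi;s) = -\mathbb{E}_{a\sim\pi(\cdot|s)}[Q(s,a)]$, so that minimizing $\mathcal{L}_{\mathrm{actor}} = \mathcal{L}_{\mathrm{BC}} + \lambda\,\mathcal{L}_Q$ over $\pi(\cdot|s)$ is, up to the overall factor $1/\lambda$, the problem
\begin{equation}
\max_{\pi(\cdot|s)}\;\; \mathbb{E}_{a\sim\pi(\cdot|s)}\big[Q(s,a)\big] \;-\; \tfrac{1}{\lambda}\,D_{\mathrm{KL}}\!\big(\pi(\cdot|s)\,\big\|\,\pi_\beta(\cdot|s)\big).
\end{equation}
Fixing $s$, I would solve this pointwise via a Lagrange multiplier for $\int\pi(a|s)\,\mathrm{d}a=1$: the functional derivative with respect to $\pi(a|s)$ set to zero gives $\log\pi(a|s) = \lambda Q(s,a) + \log\pi_\beta(a|s) + (\text{const})$, i.e. $\pi^*(a|s)\propto\pi_\beta(a|s)\exp(\lambda Q(s,a))$; normalizing by $Z(s) = \int\pi_\beta(a|s)\exp(\lambda Q(s,a))\,\mathrm{d}a$ yields the stated optimizer, and strict convexity of $D_{\mathrm{KL}}(\cdot\,\|\,\pi_\beta)$ makes it the unique maximizer.

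For stage (b), fit $\pi_\theta$ to $\pi^*$ by minimizing $D_{\mathrm{KL}}(\pi^*(\cdot|s)\,\|\,\pi_\theta(\cdot|s))$, which up to a $\theta$-independent constant equals $-\mathbb{E}_{a\sim\pi^*(\cdot|s)}[\log\pi_\theta(a|s)]$. Since $\pi^*$ cannot be sampled, change measure to the dataset policy through $\pi^*(a|s)/\pi_\beta(a|s) = \exp(\lambda Q(s,a))/Z(s)$, obtaining
\begin{equation}
\max_{\theta}\;\; \mathbb{E}_{a\sim\pi_\beta(\cdot|s)}\!\left[\tfrac{1}{Z(s)}\exp\!\big(\lambda Q(s,a)\big)\,\log\pi_\theta(a|s)\right].
\end{equation}
Because $Z(s)$ does not depend on $\theta$, it is a per-state constant that does not move the maximizer of the inner objective and may be dropped, giving the weighted-BC update with $w(s,a)\propto\exp(\lambda Q(s,a))$; finally, as in Appendix~\ref{appendix: derivation_advantage_weight}, I would note that $\log\pi_\theta$ may be replaced by any generative surrogate (diffusion or flow-matching loss), so the same weight attaches to $\mathcal{L}_{\mathrm{BC}}$.

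The main obstacle is keeping straight the two distinct roles played by ``KL'' and matching them to the Lemma's hypothesis: the regularizer that must be used in stage (a) to produce the Gibbs form $\pi^*$ is the \emph{reverse} KL $D_{\mathrm{KL}}(\pi\,\|\,\pi_\beta)$, whereas the $\mathcal{L}_{\mathrm{BC}}$ written in the Lemma and used for the fit in stage (b) is the \emph{forward} KL (equivalently, the negative data log-likelihood). I would therefore state the correspondence explicitly --- the linear combination, read as a variational objective, has $\pi^*$ as its optimizer, and the forward-KL $\mathcal{L}_{\mathrm{BC}}$ is exactly the loss that projects $\pi_\theta$ onto that optimizer --- and flag the one remaining simplification, that discarding $Z(s)$ is harmless for the per-state objective but silently reweights states, which is the standard convention also adopted for the advantage-weighted objective. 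This is precisely the comparison the appendix wants: the derivation is structurally identical to the advantage-weighted case, but the weights $\exp(\lambda Q)$ are uncentered, which is exactly why the linear baseline is brittle to the critic's scale.
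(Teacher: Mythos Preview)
Your proposal is correct and follows essentially the same two-stage route as the paper's proof sketch (and its referenced Appendix~\ref{appendix: derivation_advantage_weight}): solve the KL-regularized improvement problem in closed form to obtain the Boltzmann target $\pi^*$, then project $\pi_\theta$ onto it via forward KL, importance-sample back to $\pi_\beta$, and drop the $\theta$-independent $Z(s)$. Your treatment is in fact sharper than the paper's on the one subtlety you flag: the paper's sketch keeps the forward divergence $D_{\mathrm{KL}}(\pi_\beta\,\|\,\pi)$ in both the lemma hypothesis and the displayed objective yet still asserts the Boltzmann optimizer, whereas that closed form genuinely requires the reverse direction $D_{\mathrm{KL}}(\pi\,\|\,\pi_\beta)$ --- exactly as you use in stage~(a) and as the paper itself uses in Appendix~\ref{appendix: derivation_advantage_weight}.
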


\begin{proof}[Proof Sketch]
By definition,
\begin{equation}
\mathcal{L}_{\text{actor}} = \mathcal{L}_{\text{BC}} + \lambda \, \mathcal{L}_{Q}.
\end{equation}
When $\mathcal{L}_{\text{BC}}$ is instantiated as a KL divergence and $\mathcal{L}_{Q}$ 
as the negative $Q$-expectation, this becomes
\begin{equation}
\mathcal{L}_{\text{actor}} = \mathbb{E}_{s \sim \mathcal{D}}
\Big[
  D_{\mathrm{KL}}\!\big(\pi_\beta(\cdot|s)\,\|\,\pi(\cdot|s)\big) 
  - \lambda \,\mathbb{E}_{a\sim\pi(\cdot|s)}[Q(s,a)]
\Big].
\end{equation}
Optimizing over $\pi$ yields
\begin{equation}
\pi^*(a|s) = \tfrac{1}{Z(s)}\,\pi_\beta(a|s)\,\exp\!\big(\lambda Q(s,a)\big),
\end{equation}
which is exactly the same Boltzmann form derived in 
Appendix~\ref{appendix: derivation_advantage_weight}. 
Minimizing the KL divergence between $\pi^*$ and the parametric policy $\pi_\theta$ 
is therefore equivalent to weighted BC with exponential weights.
\end{proof}
\textit{Remark.}
Replacing $Q(s,a)$ with the advantage $A(s,a)=Q(s,a)-V(s)$ makes the weights invariant to affine shifts of $Q$; the state-value term is absorbed into $Z(s)$, leaving the training objective unchanged.

\paragraph{Discussion.}
This result shows that, when the behavior cloning loss is instantiated as a KL divergence, the linear-combination baseline and our weighted-BC approach are consistent through the underlying KL-regularized policy improvement. 
For other choices of $\mathcal{L}_{\text{BC}}$, this connection is less direct. 
In practice, however, the raw linear form is highly sensitive to the scale of $\lambda$ and critic values, whereas our normalized and clipped weighting provides stable and robust training across settings.

Beyond this theoretical connection, the linear combination can be viewed as a more direct engineering heuristic. 
Its practical challenge lies in the choice of $\lambda$, which controls the relative strength of the gradient signals. 
Since our $\mathcal{L}_{\text{BC}}$ is already composed of multiple components, the magnitudes of $\mathcal{L}_{\text{BC}}$ and $\mathcal{L}_{Q}$ can differ by orders of magnitude. 
Stabilizing this baseline thus requires careful tuning of $\lambda$—often by matching empirical gradient norms during training—otherwise one may encounter exploding critic gradients or vanishing actor updates.

A further drawback is that the direct gradient from $\mathcal{L}_{Q}$ can steer the policy toward out-of-distribution (OOD) actions because it relies on critic estimates for actions with little or no dataset support. 
In practice, this extrapolation error propagates through the bootstrapped TD targets, inflating temporal-difference residuals and often causing the critic loss to explode—a phenomenon we frequently observed in implementation. 
By contrast, our weighted-BC formulation never changes the data itself: the policy is always trained on in-distribution samples, but their effective frequency (or density) is adjusted according to value estimates. 
This reweighting not only shifts probability mass toward high-value regions of the dataset (Figure~\ref{fig:value-guided-distribution}), but also makes gradient magnitudes easier to control, thereby greatly reducing the occurrence of critic loss explosions in practice.

\begin{figure}[t]
    \centering
    \includegraphics[width=0.45\linewidth]{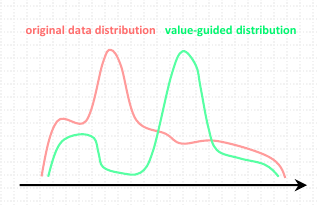}
    \caption{Illustration of weighted behavior cloning. 
    The empirical dataset distribution (black) is reweighted into a value-guided distribution (red), 
    which emphasizes high-value regions while remaining strictly within the data support.}
    \label{fig:value-guided-distribution}
\end{figure}

\section{Implementation Details}\label{appendix:implementation details}

\subsection{Experimental Hyperparameters} \label{sec:hyperpara}

Unless otherwise noted, all ablation studies were executed on the RTX 4090 + i9-13900K workstation; the other machines were used for main-figure experiments. 
All experiments are implemented in Python and conducted on five machines: 
one with an RTX 4090 and i9-13900K CPU (24 cores / 32 threads); 
three with dual A40 GPUs and dual EPYC 7313 CPUs (16 cores / 32 threads each); 
and one with an RTX 3050 and i7-13700H CPU (14 cores / 20 threads). 
RAM ranges from 24 GB to 128 GB across machines.

\begin{table}[htbp]
\centering
\caption{The hyperparameters in offline (including BC, $\eta=0$) training on D4RL Gym, AntMaze, Adroit and Kitchen tasks.}
\label{tab:hyperparameters}
\begin{tabular}{c|ccccc}
\Xhline{1.2pt}
 & \multicolumn{5}{c}{\textbf{Hyperparameters}} \\
\hline
\textbf{Tasks} & learning rate & $\eta$ & Q norm & max Q backup & gradient norm \\
\hline
halfcheetah-medium-v2 & $3 \times 10^{-4}$ & 5.0 & False & False & 9.0 \\
hopper-medium-v2 & $3 \times 10^{-4}$ & 1.0 & False & False & 9.0 \\
walker2d-medium-v2 & $3 \times 10^{-4}$ & 5.0 & False & False & 1.0 \\
halfcheetah-medium-replay-v2 & $3 \times 10^{-4}$ & 5.0 & False & False & 2.0 \\
hopper-medium-replay-v2 & $3 \times 10^{-4}$ & 5.0 & False & False & 4.0 \\
walker2d-medium-replay-v2 & $3 \times 10^{-4}$ & 5.0 & False & False & 4.0 \\
halfcheetah-medium-expert-v2 & $3 \times 10^{-4}$ & 5.0 & False & False & 7.0 \\
hopper-medium-expert-v2 & $3 \times 10^{-4}$ & 5.0 & False & False & 5.0 \\
walker2d-medium-expert-v2 & $3 \times 10^{-4}$ & 5.0 & False & False & 5.0 \\
\hline
antmaze-umaze-v0 & $3 \times 10^{-4}$ & 5.0 & False & False & 2.0 \\
antmaze-umaze-diverse-v0 & $3 \times 10^{-4}$ & 1.0 & False & True & 3.0 \\
antmaze-medium-play-v0 & $1 \times 10^{-3}$ & 5.0 & False & True & 2.0 \\
\Xhline{1.2pt}
\end{tabular}
\end{table}

\subsection{Dynamic Timestep Scheduling for Robust Trajectory Learning} \label{sec:time-schedule}

In our implementation, to address the trade-off between computational cost and approximation accuracy inherent in selecting the number of discretization steps $N$, we adopt a dynamic scheduling strategy inspired by \citet{song24improved}.
Rather than fixing $N$, the schedule gradually increases the number of steps as training progresses. 
This curriculum exposes the model to trajectories of varying resolutions, which prevents overfitting to a specific discretization and promotes a more faithful understanding of the underlying continuous-time dynamics. 
Consequently, the learned policy becomes robust to the choice of inference steps, maintaining strong performance even with a small number of sampling steps at deployment.

Formally, the number of steps at iteration $k$ is given by:
\begin{equation}
N(k) = \min\!\left(s_0 \cdot 2^{\left\lfloor \tfrac{k}{K'} \right\rfloor}, \, s_1 \right) + 1,
\quad \text{where } \;
K' = \left\lfloor \frac{K}{\log_2 \tfrac{s_1}{s_0} + 1} \right\rfloor .
\end{equation}
Here $K$ is the total number of training iterations, $s_0=10$ and $s_1=1280$ are the minimum and maximum number of discretization steps, respectively. 
The schedule doubles the step count every $K'$ iterations until the maximum $s_1$ is reached, 
starting with short, computationally efficient rollouts and progressively refining toward longer, more accurate trajectories.

\section{Additional Results}
\label{appendix:additional}

\subsection{Additional Comparisons with Recent Generative Policy Baselines}

To further compare GTP with recent generative-policy methods for offline RL, we include additional results against FQL \citep{fql2025park}, QIPO-Diff \citep{zhang2025energy}, QIPO-OT \citep{zhang2025energy}, and SSCQL \citep{koirala2026flow}.
These methods are closely related to our work because they also aim to improve generative policy learning through flow-based modeling, Q-guidance, or efficient policy generation.
For completeness, some FQL results are taken from the SSCQL paper, which we clarify in the table caption.
As shown in Table~\ref{tab:recent-baselines}, GTP achieves the best average performance on both the Gym and AntMaze suites among methods with available results, while remaining competitive on individual tasks.
These results suggest that the benefit of GTP does not only come from outperforming earlier diffusion or consistency baselines, but also remains meaningful under comparison with more recent generative offline RL methods.

\begin{table*}[htbp]
\centering
\caption{Additional comparison with recent generative-policy methods on D4RL. Results are normalized scores. Some FQL results are taken from the SSCQL paper for completeness. ``--'' indicates that the result is not available.}
\label{tab:recent-baselines}
\resizebox{0.98\textwidth}{!}{%
\begin{tabular}{l||ccccc}
\hline
\textbf{Gym} & \textbf{FQL} & \textbf{QIPO-Diff} & \textbf{QIPO-OT} & \textbf{SSCQL} & \textbf{GTP (Ours)} \tabularnewline
\hline
halfcheetah-m & \textbf{55.6} & $48.2 \pm 0.2$ & $54.2 \pm 1.3$ & $52.3 \pm 0.5$ & $53.9 \pm 0.1$ \tabularnewline
hopper-m & 60.6 & $89.5 \pm 10.0$ & $94.0 \pm 13.3$ & $\mathbf{102.4 \pm 0.2}$ & $90.3 \pm 2.7$ \tabularnewline
walker2d-m & 65.9 & $85.0 \pm 0.5$ & $87.6 \pm 1.5$ & $84.2 \pm 0.9$ & $\mathbf{89.5 \pm 0.6}$ \tabularnewline
halfcheetah-mr & 48.3 & $45.3 \pm 0.4$ & $48.0 \pm 0.8$ & $44.4 \pm 1.0$ & $\mathbf{50.8 \pm 0.4}$ \tabularnewline
hopper-mr & 50.7 & $101.2 \pm 0.5$ & $101.2 \pm 2.2$ & $101.4 \pm 0.4$ & $\mathbf{101.7 \pm 0.3}$ \tabularnewline
walker2d-mr & 38.8 & $90.1 \pm 4.5$ & $78.6 \pm 26.1$ & $85.9 \pm 13.4$ & $\mathbf{94.2 \pm 0.3}$ \tabularnewline
halfcheetah-me & 92.0 & $94.1 \pm 0.5$ & $94.4 \pm 0.5$ & $\mathbf{98.1 \pm 0.9}$ & $93.8 \pm 0.8$ \tabularnewline
hopper-me & 104.6 & $112.1 \pm 0.4$ & $108.0 \pm 5.2$ & $110.9 \pm 0.9$ & $\mathbf{112.2 \pm 0.6}$ \tabularnewline
walker2d-me & 111.7 & $110.1 \pm 0.5$ & $110.9 \pm 1.0$ & $111.1 \pm 0.1$ & $\mathbf{114.2 \pm 0.3}$ \tabularnewline
\hline
\textbf{Average} & 69.8 & 86.2 & 86.3 & 87.9 & \textbf{89.0} \tabularnewline
\hline
\hline
\textbf{AntMaze} & \textbf{FQL} & \textbf{QIPO-Diff} & \textbf{QIPO-OT} & \textbf{SSCQL} & \textbf{GTP (Ours)} \tabularnewline
\hline
antmaze-u & \textbf{100} & $97.5 \pm 0.5$ & $93.6 \pm 7.0$ & -- & $\mathbf{100 \pm 0}$ \tabularnewline
antmaze-ud & -- & $73.9 \pm 6.4$ & $76.1 \pm 9.9$ & -- & $\mathbf{81.9 \pm 4.4}$ \tabularnewline
antmaze-mp & -- & $82.8 \pm 3.2$ & $80.0 \pm 13.7$ & -- & $\mathbf{83.3 \pm 8.1}$ \tabularnewline
antmaze-md & $85.2 \pm 1.0$ & $86.0 \pm 8.6$ & $86.4 \pm 5.4$ & -- & $\mathbf{94.2 \pm 2.0}$ \tabularnewline
antmaze-lp & -- & $\mathbf{73.2 \pm 10.9}$ & $55.5 \pm 29.4$ & -- & $53.5 \pm 2.2$ \tabularnewline
antmaze-ld & -- & $40.5 \pm 20.4$ & $32.1 \pm 23.2$ & -- & $\mathbf{71.0 \pm 4.9}$ \tabularnewline
\hline
\textbf{Average} & -- & 77.3 & 72.0 & -- & \textbf{80.6} \tabularnewline
\hline
\end{tabular}%
}
\end{table*}

The expanded comparison shows that GTP achieves the best overall average on both Gym and AntMaze, while also attaining the best or competitive performance on most tasks.
We believe this advantage comes from GTP's emphasis on learning the full generative solution trajectory, which is the core motivation of our unified framework.
Compared with methods that rely more heavily on discretization or aggressive compression of the generative process, GTP preserves trajectory information more faithfully, which can lead to stronger policy quality.
At the same time, GTP still has limitations: its policy improvement step uses a relatively simple advantage-weighted objective, and its training cost remains higher than lightweight methods such as SSCQL.
These results therefore highlight both the strength of GTP in faithful trajectory modeling and the need for future work on more efficient training and stronger policy-guidance mechanisms.

\subsection{Additional Evaluation on OGBench and Visual-Observation Tasks}
\label{appendix:ogbench}

We further evaluate GTP on OGBench to examine whether the proposed policy class remains effective beyond the standard D4RL benchmark.
In addition to state-based cube manipulation tasks, we include pixel-based visual OGBench tasks to test whether GTP can handle high-dimensional image observations.
Due to the high computational cost of visual offline RL, we follow the setting used in FQL and run each visual task with 4 random seeds.
As shown in Table~\ref{tab:ogbench-results}, GTP outperforms both FQL and CAC on the state-based cube tasks, and also achieves stronger performance than FQL on the visual-observation variants.
These results suggest that GTP is not restricted to low-dimensional state inputs and can remain effective in image-based offline RL settings.

\begin{table*}[htbp]
\centering
\caption{Additional results on OGBench and visual-observation tasks. State-based cube results are averaged over 5 tasks; visual-observation tasks are evaluated over 4 random seeds following FQL. ``--'' indicates that the result is not available.}
\label{tab:ogbench-results}
\resizebox{0.95\textwidth}{!}{%
\begin{tabular}{l||ccc}
\hline
\textbf{Task} & \textbf{CAC} & \textbf{FQL} & \textbf{GTP (Ours)} \tabularnewline
\hline
OGBench cube-single-singletask & $85 \pm 9$ & $96 \pm 1$ & \textbf{100} \tabularnewline
OGBench cube-double-singletask & $6 \pm 2$ & $29 \pm 2$ & $\mathbf{45 \pm 10}$ \tabularnewline
visual-cube-single-play-singletask-task1-v0 & -- & $81 \pm 12$ & $\mathbf{89 \pm 3}$ \tabularnewline
visual-cube-double-play-singletask-task1-v0 & -- & $21 \pm 11$ & $\mathbf{27 \pm 7}$ \tabularnewline
\hline
\end{tabular}%
}
\end{table*}

These visual-task results provide direct evidence that GTP can be extended from state inputs to image observations without changing the core policy-learning formulation.
The main additional challenge in this setting is learning a higher-dimensional observation representation, rather than modifying the generative trajectory policy itself.

\subsection{Effect of Sampling Steps \texorpdfstring{$K$}{K}}

\begin{table*}[htbp]
\centering
\caption{Comparison among diffusion- and flow-based offline RL methods on D4RL Gym (mean $\pm$ std over 5 seeds).}
\label{tab:gym-diffusion-t2}
\resizebox{0.98\textwidth}{!}{
\begin{tabular}{l||ccccc|c}
\Xhline{1.2pt}
\textbf{Gym Tasks} & \textbf{D-QL} & \textbf{QGPO} & \textbf{BDM} & \textbf{C-AC} & \textbf{GTP ($K=5$)} & \textbf{GTP ($K=2$)} \\
\hline
halfcheetah-m & 51.1 & 54.1 & 57.0 & \textbf{69.1} & 53.9$\pm$0.1 & 53.1$\pm$0.5 \\
hopper-m & \textbf{90.5} & 98.0 & \textbf{98.4} & 80.7 & 90.3$\pm$2.7 & 87.8$\pm$2.3 \\
walker2d-m & 87.0 & 86.0 & 87.4 & 83.1 & 89.5$\pm$0.6 & \textbf{90.5$\pm$0.5} \\
halfcheetah-mr & 47.8 & 47.6 & 51.6 & \textbf{58.7} & 50.8$\pm$0.4 & 48.7$\pm$0.2 \\
hopper-mr & \textbf{101.3} & 96.9 & 92.7 & 99.7 & 101.7$\pm$0.3 & 101.6$\pm$0.5 \\
walker2d-mr & \textbf{95.5} & 84.4 & 89.2 & 79.5 & 94.2$\pm$0.3 & \textbf{94.3$\pm$1.4} \\
halfcheetah-me & \textbf{96.8} & 93.5 & 93.2 & 84.3 & 93.8$\pm$0.8 & \textbf{96.2$\pm$0.4} \\
hopper-me & 111.1 & 108.0 & 104.9 & 100.4 & \textbf{112.2$\pm$0.6} & 111.7$\pm$0.6 \\
walker2d-me & 110.1 & 110.7 & 111.1 & 110.4 & \textbf{114.2$\pm$0.3} & \textbf{114.2$\pm$1.0} \\
\hline
\textbf{Average} & 87.9 & 86.6 & 87.3 & 85.1 & \textbf{89.0} & 88.7 \\
\Xhline{1.2pt}
\end{tabular}
}
\end{table*}

The ablation results in Table~\ref{tab:gym-diffusion-t2} show that using a shorter sampling horizon ($T=2$) yields essentially the same performance in expectation as the default $T=5$ across all Gym tasks. 
While the $T=2$ scores exhibit slightly higher variance, the means remain very close to those of $T=5$, indicating no meaningful degradation in policy quality.
Importantly, reducing the horizon from $T=5$ to $T=2$ leads to a substantial improvement in efficiency, as it requires significantly fewer sampling steps while maintaining comparable returns.
This demonstrates that GTP does not rely on long ODE trajectories for strong performance, and that a very short generative trajectory is sufficient for effective policy improvement in offline RL.

\subsection{Effect of Time Scheduling}
\label{appendix:time-scheduler}

As discussed in Theorem~4.1, our score approximation becomes consistent with the original objective in expectation as the solver step size approaches zero. 
In practice, however, the number of solver steps must be finite, which introduces a trade-off between approximation quality and computational efficiency. 
To balance this trade-off, we use a time scheduler that gradually adjusts the trajectory discretization during training. 
We ablate this design by comparing it with a fixed discretization using $N=40$.

\begin{table}[t]
\centering
\caption{Ablation on time scheduling using \texttt{hopper-medium-expert-v2} (mean $\pm$ std over 5 seeds).}
\label{tab:time-scheduler}
\begin{tabular}{lcc}
\hline
\textbf{Method} & \textbf{Time (h)} & \textbf{Score} \\
\hline
GTP w/ time scheduler & 4.26 & $\mathbf{112.2 \pm 0.6}$ \\
GTP w/ fixed $N=40$ & 5.45 & $107.4 \pm 4.5$ \\
\hline
\end{tabular}
\end{table}

As shown in Table~\ref{tab:time-scheduler}, the time scheduler improves both efficiency and stability. 
Compared with the fixed $N=40$ setting, it achieves higher final performance, lower variance, and shorter training time. 
This suggests that scheduling the trajectory discretization provides a practical way to balance approximation quality and computational cost during training.

\subsection{Sensitivity Analysis}
\label{app:sensitivity}

To further examine the robustness of GTP, we conduct a sensitivity analysis with respect to the sampling horizon $T$ and the advantage temperature $\eta$ in the value-guided objective.

\paragraph{Sampling horizon $T$.}
Figure~\ref{fig:sensitivity}(a) plots the return on \texttt{hopper-medium-expert-v2} as a function of the sampling horizon $T \in \{1,2,3,4,5\}$.
Performance improves substantially when increasing $T$ from $1$ to $2$, but quickly saturates for $T \ge 2$, with nearly indistinguishable returns for $T=2,3,4,5$.
This confirms that GTP does not rely on long ODE trajectories: a very short horizon ($T=2$) is already sufficient in practice.

\paragraph{Advantage temperature $\eta$.}
Figure~\ref{fig:sensitivity}(b) shows the effect of varying the advantage temperature $\eta \in \{1,2,3,5,7,10,20\}$.
GTP is relatively stable for $\eta$ in the range $[1,3]$, with a mild peak around $\eta=1$--$2$.
Larger values of $\eta$ slightly degrade performance due to over-emphasizing a small number of high-advantage samples, but the overall variation remains moderate.
These results indicate that GTP is robust to the choice of $\eta$ over a reasonably wide range and does not require fine-tuning of this hyperparameter.

\begin{figure}[t]
    \centering
    \begin{subfigure}{0.48\linewidth}
        \centering
        \includegraphics[width=\linewidth]{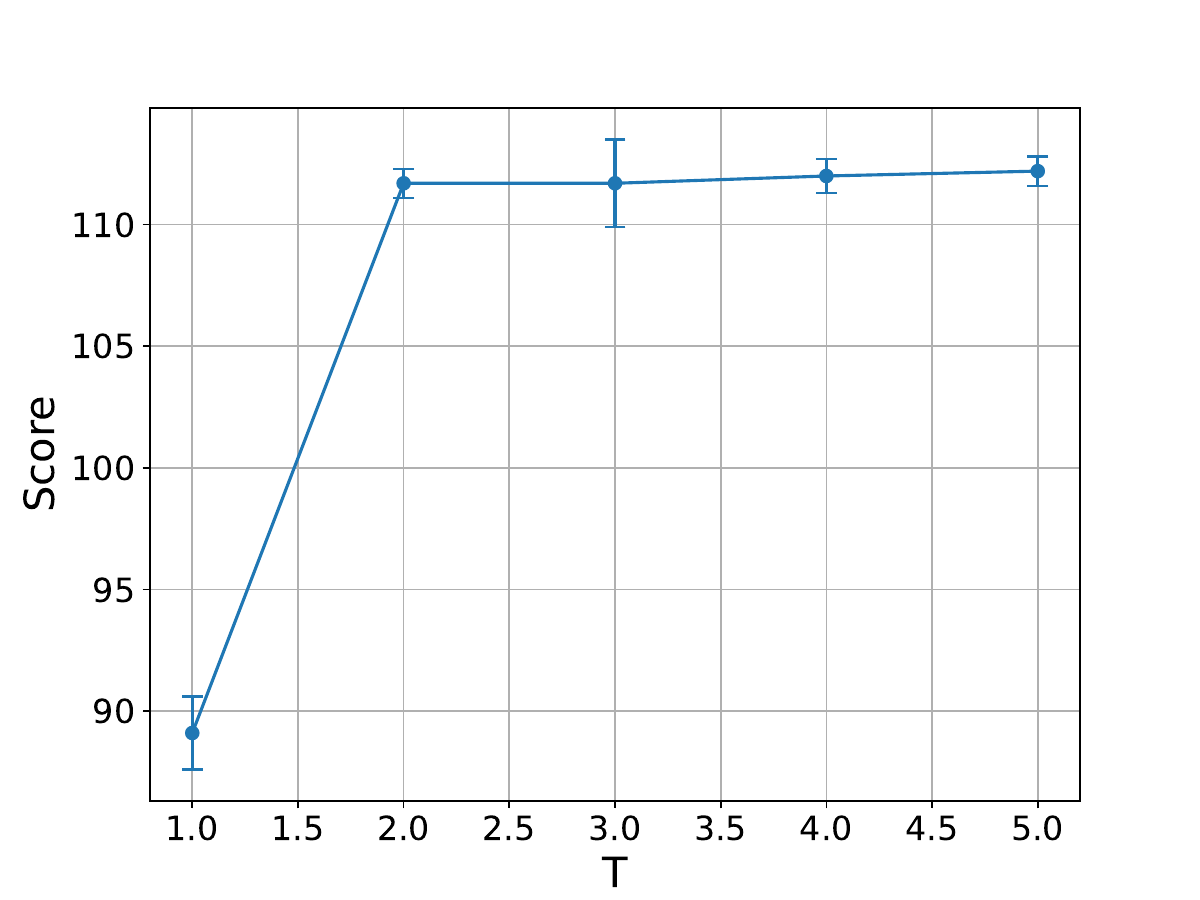}
        \caption{Effect of sampling horizon $T$.}
    \end{subfigure}\hfill
    \begin{subfigure}{0.48\linewidth}
        \centering
        \includegraphics[width=\linewidth]{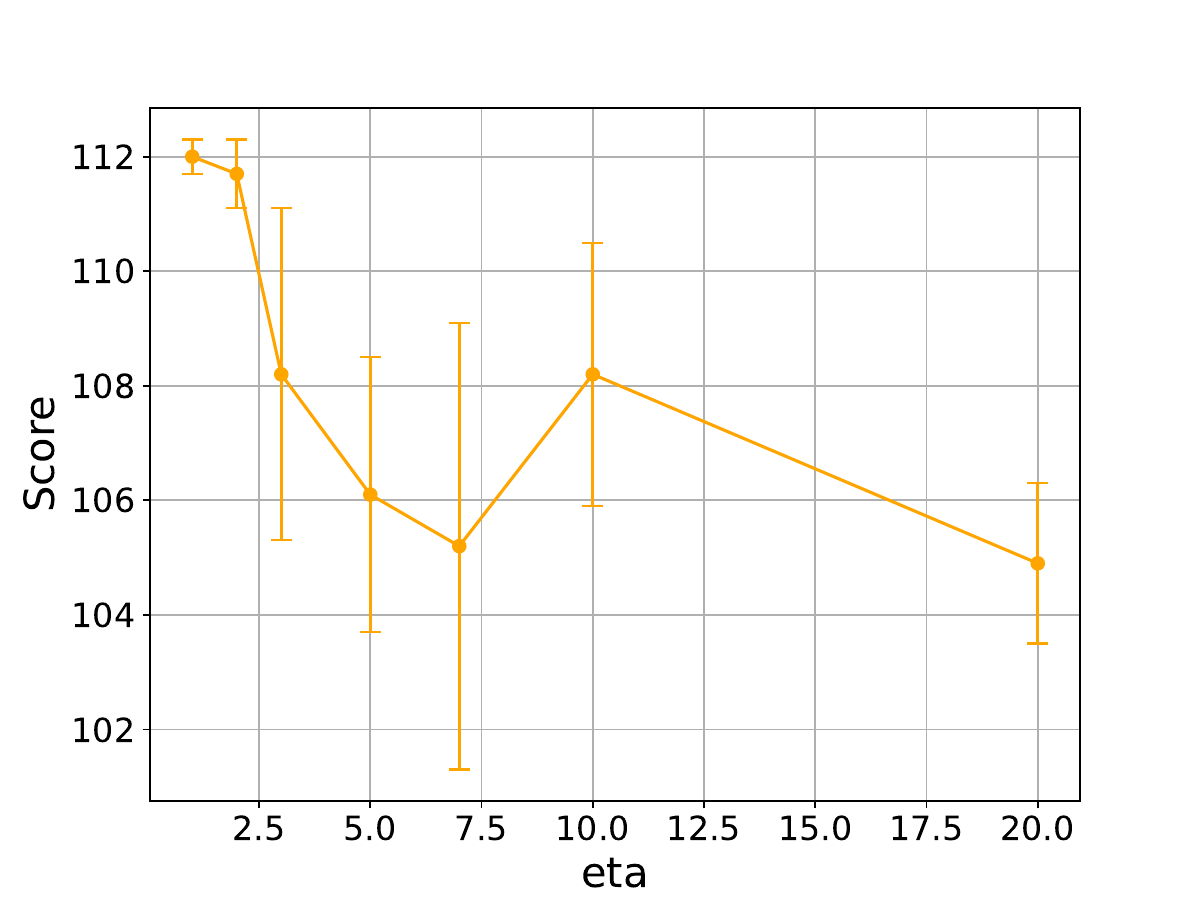}
        \caption{Effect of advantage temperature $\eta$.}
    \end{subfigure}
    \caption{Sensitivity analysis of GTP on \texttt{hopper-medium-expert-v2}.
    Shaded regions denote $\pm$ one standard deviation over 5 seeds.}
    \label{fig:sensitivity}
\end{figure}

\subsection{Computational Efficiency}
\label{appendix:efficiency}

To further characterize the computational cost of GTP, we compare it with representative generative-policy baselines under a matched experimental setup.
All measurements are obtained on the same machine with the same training budget of 2M steps and the same training batch size of 256.
We report training time, inference latency, and sampling steps, which together characterize both training and inference cost.
All measurements are obtained on the same machine with an Intel i9-14900KF CPU and an RTX 4090 GPU, under the same training budget of 2M steps and the same training batch size of 256.

\begin{table}[htbp]
\centering
\caption{Computational comparison of representative generative-policy methods. All methods are trained for 2M steps with batch size 256. For FQL, $10/1$ means that the method jointly trains a 10-step flow policy and a 1-step distilled policy, and uses the latter for inference.}
\label{tab:computational-efficiency}
\resizebox{0.98\linewidth}{!}{%
\begin{tabular}{lccccc}
\hline
\textbf{Method} & \textbf{Training Time (h)} & \textbf{Inference Time (ms)} & \textbf{Training Steps} & \textbf{Sampling Steps} & \textbf{Batch Size} \tabularnewline
\hline
DQL & 7.1 & 1.16 & 2M & 5 & 256 \tabularnewline
CAC & 3.7 & 0.55 & 2M & 2 & 256 \tabularnewline
FQL & 10.0 & 0.36 & 2M & 10/1 & 256 \tabularnewline
GTP & 4.3 & 0.94 & 2M & 5 & 256 \tabularnewline
GTP & 3.8 & 0.67 & 2M & 2 & 256 \tabularnewline
\hline
\end{tabular}%
}
\end{table}

As shown in Table~\ref{tab:computational-efficiency}, GTP achieves a favorable compute--performance trade-off.
Compared with DQL and FQL, GTP requires substantially less training time while maintaining competitive inference cost.
The two-step variant further reduces inference latency from 0.94 ms to 0.67 ms, while preserving strong returns as shown in Table~\ref{tab:gym-diffusion-t2}.
At the same time, GTP is still more computationally expensive than lightweight consistency-style methods such as CAC.
This suggests that the main benefit of GTP is not being the cheapest policy class, but providing stronger policy quality while avoiding the high inference cost of iterative diffusion policies.

\subsection{Ablation on Actor Loss Formulation} 
\label{ablation: actor-loss-formulation}
We further examined whether our explicit teacher is truly necessary by testing two teacher-free philosophies: 
(i) the self-consistency principle underlying Shortcut Models \citep{frans2025one}, and 
(ii) the identity-based formulation of continuous consistency models and Mean Flows \citep{lu25simplifying, geng2025mean}. 
Both yield a continuous-time actor loss that is theoretically elegant and fully self-referential, requiring no external supervision.

\textbf{Results and Analysis.}
In practice, however, both variants were unsatisfactory. 
The identity-based loss demanded repeated JVPs in PyTorch, leading to prohibitive memory and runtime overhead. 
Despite multiple attempts and stabilization tricks, all identity-based runs eventually encountered severe divergence, with either actor loss blowing up or critic loss exploding due to OOD data. 
We nevertheless completed several such runs: one experiment took 6.03 hours to finish but still collapsed in performance. 
We suspect this discrepancy arises because the original Mean Flows were implemented on TPUs with specialized auto-differentiation, while PyTorch+GPU implementations incur heavy JVP costs and suffer from numerical instability—issues that are especially problematic in RL, where stable training is crucial. 
The self-consistency variant, while computationally lighter, also produced unstable targets and degraded policy quality. 

In contrast, our teacher-guided score approximation provided stable, efficient training and consistently stronger policies. 
These results highlight that while teacher-free objectives are conceptually appealing, they are not yet practical under standard GPU-based RL settings. 
We leave further improvements in this direction as an interesting avenue for future work.

\begin{table}[t]
\centering
\caption{Ablation results on \texttt{hopper-medium-expert-v2} 
(mean ± std over 5 random seeds). 
Teacher-free objectives are either too costly (Mean Flows) or unstable (Shortcut), 
while our teacher-based formulation achieves the best trade-off.}
\label{tab:actor_loss_ablation}
\begin{tabular}{lcc}
\toprule
\textbf{Method} & \textbf{Training Time} & \textbf{Score} \\
\midrule
Shortcut Models (no teacher) & 4.58 h & $76.1 \pm 5.7$ \\
Mean Flows (identity)        & 6.03 h & Diverged \\
\textbf{GTP (ours)}          & 4.26 h & $\mathbf{112.2 \pm 0.6}$ \\
\bottomrule
\end{tabular}
\end{table}

\subsection{Visualizing Expressiveness and Efficiency in Multi-Goal Environments}
\label{appendix:multi-goal}

\begin{figure}[htbp]
  \centering
\begin{subfigure}[t]{0.24\textwidth}
  \includegraphics[width=\linewidth]{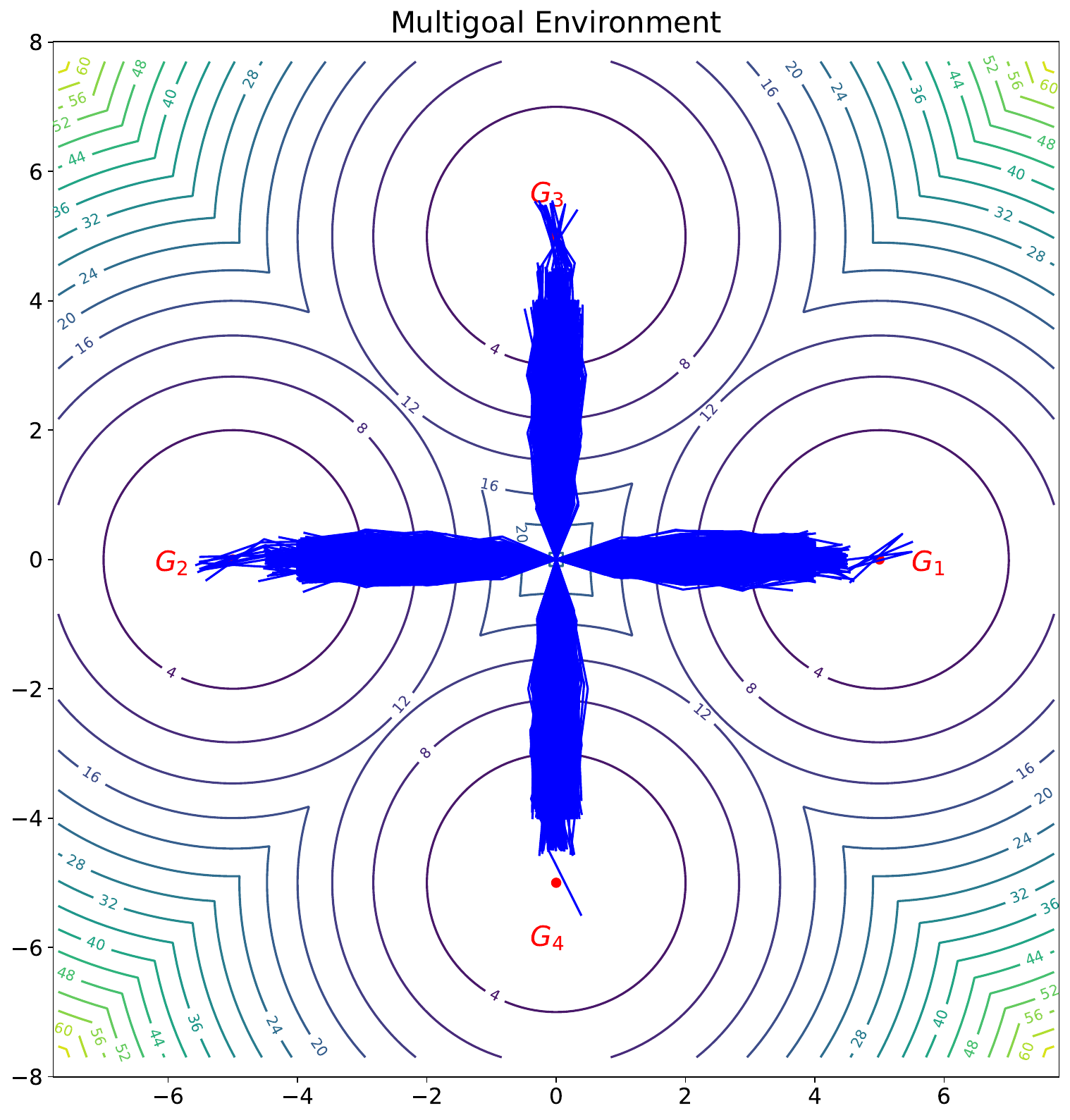}
  \caption{\scriptsize Original Dataset}
  \label{fig:multi_dataset}
\end{subfigure}\hfill
  \begin{subfigure}[t]{0.24\textwidth}
    \includegraphics[width=\linewidth]{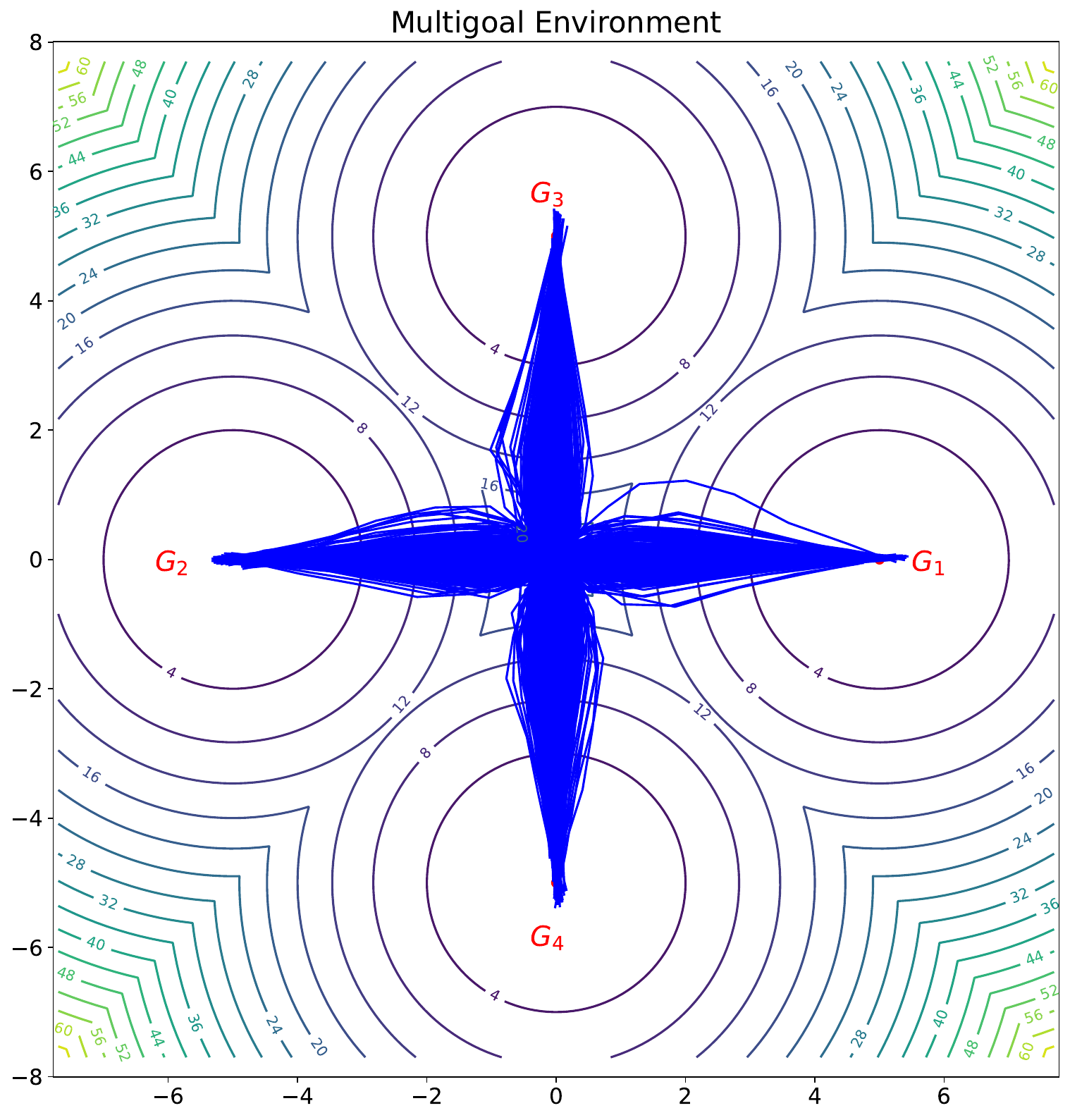}
    \caption{\scriptsize Diffusion-QL (48\,mins)}
    \label{fig:multi_dm}
  \end{subfigure}\hfill
  \begin{subfigure}[t]{0.24\textwidth}
    \includegraphics[width=\linewidth]{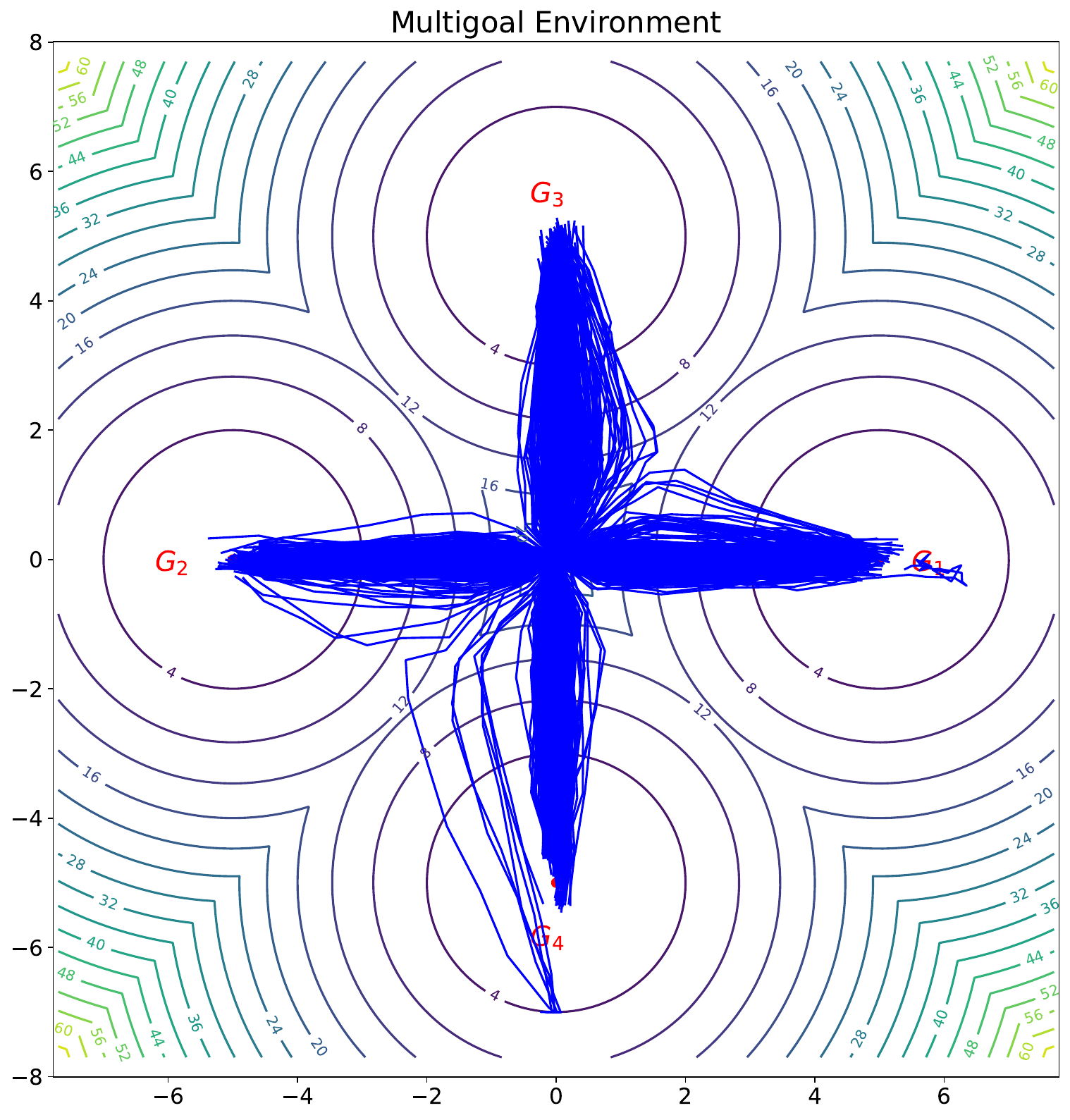}
    \caption{\scriptsize Consistency-AC (35\,mins)}
    \label{fig:multi_cm}
  \end{subfigure}\hfill
  \begin{subfigure}[t]{0.24\textwidth}
    \includegraphics[width=\linewidth]{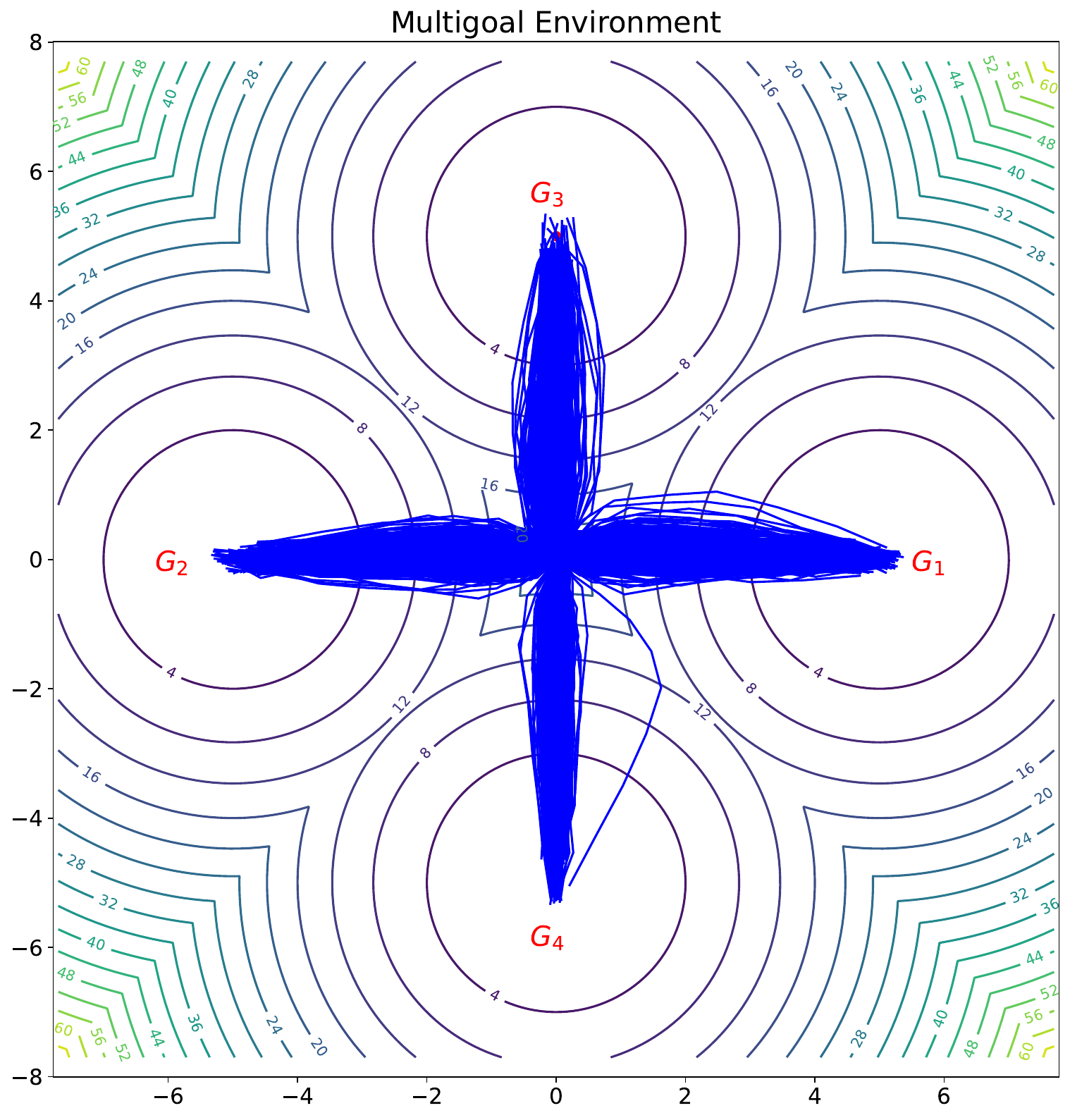}
    \caption{\scriptsize GTP (Ours) (37\,mins)}
    \label{fig:multi_ctm}
  \end{subfigure}
  \caption{Policy visualization in a 2D multi-goal environment.}
  \label{fig:multi_goal_results}
\end{figure}

To provide an intuitive, visual confirmation of our claims, we design a 2D multi-goal environment where the optimal policy is inherently multi-modal. 
As shown in Figure \ref{fig:multi_goal_results}, our GTP model accurately captures the four distinct modes of the data, learning a policy that successfully reaches all goals. 
In contrast, while Diffusion-QL also captures the modes, it does so at a higher computational cost. 
Consistency-AC is faster but fails to capture all modes, suffering from degraded policy quality. 
This visualization provides a clear illustration of our method's central achievement: modeling diverse, multi-modal behaviors while avoiding the high iterative sampling cost of diffusion-based policies.



\end{document}